\documentclass{article}
\usepackage{arxiv}
\pdfoutput=1
\usepackage[utf8]{inputenc} 
\usepackage[T1]{fontenc}    
\usepackage{hyperref}       
\usepackage{url}            
\usepackage{booktabs}       
\usepackage{amsfonts}       
\usepackage{nicefrac}       
\usepackage{microtype}      
\usepackage{lipsum}		
\usepackage{graphicx}
\usepackage{subcaption} 
\usepackage{doi}

\usepackage{algorithm} 
\usepackage{bbm}
\usepackage{bm}
\usepackage{pifont}       
\usepackage{bbding}       
\usepackage{fontawesome}  

\usepackage{amsmath,amscd,amsbsy,latexsym,bm,amsthm,amssymb}
\usepackage{wrapfig}
\usepackage{multirow}
\usepackage{cleveref}
\usepackage{verbatim}
\usepackage{dsfont}

\newtheorem{theorem}{Theorem}
\newtheorem{lemma}{Lemma}

\newcommand{\argmin}{\mathrm{argmin}}

\newcommand{\cI}[1]{\mathbb{I}}

\usepackage{color}

\usepackage{makecell}

\usepackage{algpseudocode}  
\usepackage{amssymb}
\usepackage{amsthm}
\usepackage{booktabs}
\usepackage{listings} 
\usepackage{xcolor}   
\usepackage{multirow}
\usepackage{siunitx} 

\author{
Liya Guo \thanks{These authors did this work during an internship at Microsoft Research AI4Science.}\\
Yau Mathematical Sciences Center\\ 
Department of Mathematical Sciences,\\  Tsinghua University\\
 \texttt{gly22@mails.tsinghua.edu.cn}\\
\And
Zun Wang\\
Microsoft Research AI4Science, \\ Beijing, China.\\	
 \texttt{zunwang@microsoft.com} \\
 \AND
Chang Liu\\
\quad Microsoft Research AI4Science, \quad \\ Beijing, China\\	
 \texttt{Chang.Liu@microsoft.com}
\And
\quad Junzhe Li$^{*}$\\
School of Computer Science, \\Peking University,
Beijing, China\\ 
\texttt{lijunzhe1028@stu.pku.edu.cn}\\
 \AND
Pipi Hu\\
\quad Microsoft Research AI4Science, \quad\\ Beijing, China\\ 
\texttt{pisquare@microsoft.com}
 \And
\quad Yi Zhu\\
Yau Mathematical Sciences Center,\\ Tsinghua University\\
Yanqi Lake Beijing Institute of \\Mathematical Sciences and Applications\\ Beijing, China\\
\texttt{yizhu@tsinghua.edu.cn}\\
 \AND
Tao Qin\\
Microsoft Research AI4Science, \\Beijing, China\\ 
\texttt{taoqin@microsoft.com}
}

\hypersetup{
pdftitle={A template for the arxiv style},
pdfsubject={q-bio.NC, q-bio.QM},
pdfauthor={David S.~Hippocampus, Elias D.~Striatum},
pdfkeywords={First keyword, Second keyword, More},
}

\title{Potential Score Matching: Debiasing Molecular Structure Sampling with Potential Energy Guidance}

\begin{document}
\maketitle

\begin{abstract}
The ensemble average of physical properties of molecules is closely related to the distribution of molecular conformations, and sampling such distributions is a fundamental challenge in physics and chemistry. Traditional methods like molecular dynamics (MD) simulations and Markov chain Monte Carlo (MCMC) sampling are commonly used but can be time-consuming and costly. Recently, diffusion models have emerged as efficient alternatives by learning the distribution of training data. Obtaining an unbiased target distribution is still an expensive task, primarily because it requires satisfying ergodicity. To tackle these challenges, we propose Potential Score Matching (PSM), an approach that utilizes the potential energy gradient to guide generative models. PSM does not require exact energy functions and can debias sample distributions even when trained on limited and biased data. Our method outperforms existing state-of-the-art (SOTA) models on the Lennard-Jones (LJ) potential, a commonly used toy model. Furthermore, we extend the evaluation of PSM to high-dimensional problems using the MD17 and MD22 datasets. The results demonstrate that molecular distributions generated by PSM more closely approximate the Boltzmann distribution compared to traditional diffusion models.  
\end{abstract}

\section{Introduction}
\label{sec:intro}

Physical quantities of interest, such as free energy, are often determined by ensemble averages and are intrinsically linked to the distribution of molecular conformations \cite{alder1959studies,stoltz2010free}. Traditional techniques for quantifying these physical quantities, notably Markov Chain Monte Carlo (MCMC) sampling and Molecular Dynamics (MD), are well-established \cite{gelman1996markov,mccammon1977dynamics,noe2019boltzmann}. However, these methods are computationally intensive, particularly for systems with high-dimensional molecules.  

Unlike the inherently sequential sampling of MD and MCMC, generative models, especially diffusion models, have emerged as efficient alternatives for generating independent and identically distributed (i.i.d.) samples \cite{song2020score,ho2020denoising,song2020denoising,phillipsparticle2024,de2022riemannian,wu2023diffmd,xu2022geodiff,hoogeboom2022equivariant,woo2024iterated}. One such approach is Denoising Score Matching (DSM) \cite{song2020score}. These models apply a score function to learn the data distribution, which is particularly useful in molecular systems where the equilibrium distribution adheres to the Boltzmann distribution. This distribution can be expressed as $p(\bm{x}) \propto e^{-\mathcal{E}(\bm{x}) / (k_B T)}$, with $k_B$ denoting the Boltzmann constant, $T$ the temperature, and $\mathcal{E}(\bm{x})$ the energy function of the system. Constructing a dataset that follows the Boltzmann distribution remains a challenging task, and failure to provide unbiased training data can result in DSM overfitting to a biased distribution, leading to inaccurate observables.

Recent advances in generative modeling have sought to incorporate the principles of the Boltzmann distribution for improved molecular sampling \cite{wu2024protein,woo2024iterated,debortoli2024targetscorematching,chendiffusive2024,chung2023diffusion}. Techniques such as the Denoising Diffusion Sampler (DDS) and the Path Integral Sampler (PIS) \cite{zhang2021path,vargas2023denoising} have been developed to amortize the computational expense of traditional MCMC and MD methods, facilitating learning processes that do not necessitate equilibrium data. These methods typically rely on integration paths derived from ordinary differential equations (ODEs) and stochastic differential equations (SDEs), which, despite being innovative, still involve substantial computational time.  The Iterated Denoising Energy Matching (iDEM) framework \cite{akhound2024iterated} represents another stride forward, employing energy functions for data generation and introducing energy-guided sampling independent of the initial distribution. The complexity of its iterative loops and the absence of predefined initial data increase the computational overhead in high dimensional space, and at larger time scales, iDEM's sampling efficiency diminishes, demanding a greater number of samples to obtain precise outcomes. A more recent approach, Target Score Matching (TSM) \cite{debortoli2024targetscorematching}, integrates energy information with DSM to enable sampling from designated energy functions under the assumption of data unbiasedness.

\begin{table}[h!]
\centering
\caption{Comparison of molecular sampling methods and their properties.}
{\small
\resizebox{\linewidth}{!}{
\begin{tabular}{@{}>{\bfseries}lccc>{\centering\arraybackslash}m{4.0cm}@{}}
\toprule
 & \parbox{2cm}{\centering Inference \\ i.i.d.\\ Samples} 
 & \parbox{4cm}{\centering Doesn't Require \\ Exact Boltzmann \\ Data for Training} 
 & \parbox{2cm}{\centering Efficiency \\ in High Dim.}
 & \parbox{4cm}{\centering Doesn't Require\\ Energy Function \\(Only Energy Labels)} \\ \midrule
MD/MCMC           & $\times$   & \checkmark  & $\times$  & $\times$ \\
DSM       & \checkmark & $\times$    & \checkmark  & \checkmark  \\
DDS/PIS     & \checkmark & $\times$    & $\times$ & $\times$  \\
iDEM            & \checkmark & \checkmark    & $\times$  & $\times$ \\
TSM             & \checkmark & $\times$    & \checkmark  & \checkmark \\
\textbf{PSM} (ours)    & \checkmark & \checkmark  & \checkmark & \checkmark  \\
\bottomrule
\end{tabular}}}
\label{tab:related}
\end{table}

In this work, we introduce the Potential Score Matching (PSM) method, which incorporates potential energy derivatives into generative models to more closely align the sample distribution with the Boltzmann distribution. Furthermore, PSM requires only force labels for the reference structures and obviates the need for an explicit energy function. PSM leverages the characteristics of DSM to facilitate the generation of i.i.d. samples, offering a computationally efficient alternative to MD simulations.
We provide a comparative analysis of PSM with related methodologies in Table \ref{tab:related}. We also present a series of theoretical proofs demonstrating that PSM provides a more accurate estimation of the score function in the vicinity of $t=0$ when training data is biased and results in reduced variance near $t=0$. The performance of PSM is validated on both simple toy models, such as Lennard-Jones (LJ) potentials, and more complex, high-dimensional physical datasets, including MD17 and MD22. Our experimental results consistently indicate that PSM outperforms baseline models.

\section{Preliminaries}
\label{sec:preliminary}

\subsection{Score-Based Generative Modeling}
\label{sec:score_based_model}
Diffusion models operate through a two-step process \cite{ho2020denoising,song2020denoising}: (1) a forward diffusion process that incrementally adds noise to the data $\bm{x}_0$ until it converges to a Gaussian distribution at time $T$, and (2) a reverse denoising process that reconstructs samples using a learned score function. The forward process is defined by $\bm{x}_t = \alpha_t \bm{x}_0 + \sigma_t \bm{\epsilon}$, where $\alpha_t$ and $\sigma_t$ are noise schedule coefficients, and $\bm{\epsilon}$ represents standard Gaussian noise. Two widely used frameworks are the variance-preserving (VP-SDE) and variance-exploding (VE-SDE) diffusion processes \cite{song2020score}. In VE-SDE, $\alpha_t = 1$ and $\sigma_t = \sigma_\text{min} (\sigma_\text{max} / \sigma_\text{min})^t$, while in VP-SDE, $\alpha_t = e^{-\frac{1}{2} \int_0^t \beta_s ds}$ and $\sigma_t = \sqrt{1 - e^{-\int_0^t \beta_s ds}}$.
Both VP and VE can be abstracted as $\mathrm{d} \boldsymbol{x}_t =f(\boldsymbol{x}_t, t) \mathrm{~d} t + g(t) \mathrm{~d} W_t$. Following this, the reverse process, solves the reverse SDE from $T$ back to 0 is  
\begin{equation}  
    \begin{aligned}  
    \label{eq:reverse_sde_}
        \mathrm{d} \boldsymbol{x}_t = \left( f(\boldsymbol{x}_t, t) - g^2(t) \nabla_{\boldsymbol{x}_t} \log q_t\left(\boldsymbol{x}_t\right) \right) \mathrm{~d} t + g(t) \mathrm{~d} W_t\,.
    \end{aligned}  
\end{equation}

The term $\nabla_{\boldsymbol{x}_t} \log q_t(\cdot)$, referred to as the ``score function" at time $t$, is unknown and is approximated by training a neural network $\boldsymbol{s}_\theta$ parameterized by $\theta$, 
using the objective function $\left\| \boldsymbol{s}_\theta - \nabla_{\boldsymbol{x}_t} \log q_t(\cdot) \right\|_2^2$.
However, since the marginal distribution $q_t(\boldsymbol{x}_t)$ is intractable \cite{vincent2011connection}, later works propose a simplified and equivalent loss function based on the conditional distribution $\left\| \boldsymbol{s}_\theta - \nabla_{\boldsymbol{x}_t} \log q_t(\boldsymbol{x}_t \mid \boldsymbol{x}_{t-1}) \right\|_2^2$,
where the conditional distribution is modeled as a Gaussian transition kernel. In this case, we can leverage the identity $\nabla_{\boldsymbol{x}_t} \log q_t(\boldsymbol{x}_t \mid \boldsymbol{x}_{t-1}) = -\frac{\boldsymbol{\epsilon}}{\sigma_t}$ to instead train a network to predict the noise $\boldsymbol{\epsilon}$ directly. This leads to the commonly used loss function:

\begin{equation}
    \begin{aligned}  
\min _\theta \mathbb{E}_{t, \boldsymbol{x}_t, \bm{\epsilon}}\left[\lambda(t)\left\|\boldsymbol{\epsilon}_\theta\left(\boldsymbol{x}_t, t\right)+\sigma_t \nabla_{\boldsymbol{x}_t} \log q_t\left(\boldsymbol{x}_t\right)\right\|_2^2\right] \Leftrightarrow \min _\theta \mathbb{E}_{t, \boldsymbol{x}_t, \boldsymbol{\epsilon}}\left[\lambda(t)\left\|\boldsymbol{\epsilon}_\theta\left(\boldsymbol{x}_t, t\right)-\boldsymbol{\epsilon}\right\|_2^2\right],
\end{aligned}
\end{equation}  
where the expectation $\mathbb{E}$ is taken over time $t$, sampled from a uniform distribution $\mathcal{U}([0, 1])$, the noised data points $\boldsymbol{x}_t$, sampled from the distribution $q(\boldsymbol{x}_t |\boldsymbol{x}_0)$, and the noise $\boldsymbol{\epsilon}$, sampled from a standard Gaussian distribution. The function $\lambda(t)$ represents a weighting function that adjusts the importance of different time steps during the optimization process. 
In molecular systems, $\bm{x}_t$ often denotes atomic positions or, in some contexts, discrete atomic types. Diffusion models have become increasingly prevalent in the generation of molecular structures and the prediction of their properties \cite{de2022riemannian}.

\subsection{Boltzmann Distribution}
Molecular systems at equilibrium are typically characterized by the Boltzmann distribution, with the target distribution given by $p(\bm{x}_0) \propto e^{-\mathcal{E}(\bm{x}_0) / (k_B T)}$. A profound connection exists between the score function utilized in diffusion models and the system's potential energy, as $\nabla \log p(\bm{x}_0) \propto -\nabla \mathcal{E}(\bm{x}_0) / (k_B T)$.
Nevertheless, harnessing the Boltzmann distribution for generative modeling poses several challenges. For molecular systems with analytically specified energy functions, the exact distribution is the normalized form of the Boltzmann factor. The normalization constant, typically expressed as an integral over all possible configurations, is often intractable and eludes a closed-form expression.  
Recent research has integrated energy considerations into score-based model variants to improve molecular predictions. Innovations include the introduction of an equivariant energy-guided SDE \cite{bao2022equivariant} and novel score functions \cite{janner2022planning,durumeric2024learning,phillipsparticle2024,wang2024protein,akhound2024iterated}.  
Additionally, attaining equilibrium presupposes ergodicity in the system, which is ensured by simulating molecular trajectories over extended periods. This requirement poses computational and temporal demands. In our approach, we address these issues by harnessing the derivatives of energy to facilitate this process. 
\section{Potential Score Matching in Molecular Systems}
\label{sec:methods}
In this section, we introduce our Potential Score Matching (PSM) method, which leverages the derivatives of potential energy to efficiently approximate the Boltzmann distribution. Consider a training sample $\bm{x}_0$ drawn from the dataset, subjected to a noise injection process defined as $\bm{x}_t = \alpha_t \bm{x}_0 + \sigma_t \bm{\epsilon}$, where $\sigma_t$ is a time-dependent noise schedule.  
We formalize the representation of the score function for molecules that adhere to the Boltzmann distribution as follows:

\begin{theorem}  
\label{thm: main_loss}  
In a molecular system where molecules follow the distribution $p(\bm{x}_0) \propto e^{-\mathcal{E} / (k_B T)}$, given $\bm{x}_t = \alpha_t \bm{x}_0 + \sigma_t \epsilon$, the score function at time $t$ is an expectation of the force,  
\begin{equation}  
    \nabla_{\bm{x}_t}\log p(\bm{x}_t) = \frac{1}{\alpha_t} \mathbb{E}_{\bm{x}_0 | \bm{x}_t} \left[\frac{-\nabla_{\bm{x}_0} \mathcal{E}}{k_B T} \right] = \frac{1}{\alpha_t} \mathbb{E}_{\bm{x}_0 | \bm{x}_t} \left[\frac{\bm{F}}{k_B T}\right]\,,  
\end{equation}  
where $\mathcal{E}$ and $\bm{F}$ represent the potential energy and force of $\bm{x}_0$, respectively.  
Thus, the PSM loss is defined as  
\begin{equation}
\begin{aligned}
\label{eq:psm_loss}
    \mathcal{L}_{s \text{-model}} &= \mathbb{E}_{t\sim \mathcal{U}(0, 1)} \mathbb{E}_{\bm{x}_t\sim p(\bm{x}_t | \bm{x}_0)}\mathbb{E}_{\bm{x}_0} \left[\lambda(t) \left\| \bm{s}_{\theta}(\bm{x}_t, t) - \frac{1}{\alpha_t}(-\frac{\nabla_{\bm{x}_0} \mathcal{E}}{k_B T})\right\|^2 \right] \\
    &= \mathbb{E}_{t\sim \mathcal{U}(0, 1)} \mathbb{E}_{\bm{x}_t\sim p(\bm{x}_t | \bm{x}_0)}\mathbb{E}_{\bm{x}_0}\left[\lambda(t) \left\| \bm{s}_{\theta}(\bm{x}_t, t) - \frac{1}{\alpha_t}\frac{\bm{F}}{k_B T} \right\|^2\right], 
\end{aligned}
\end{equation}
where $\mathcal{U}(0, 1)$ denotes a uniform distribution, and $p(\bm{x}_t | \bm{x}_0)$ is the conditional probability distribution of the noise-injected sample given the $\bm{x}_0$.  
\end{theorem}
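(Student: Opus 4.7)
The plan is to derive the identity by interchanging gradients and an integral against the Gaussian kernel, and then reduce the stated loss to standard conditional-expectation regression. Throughout I will use the forward transition $p(\bm{x}_t\mid\bm{x}_0)\propto\exp(-\|\bm{x}_t-\alpha_t\bm{x}_0\|^2/(2\sigma_t^2))$ introduced in Section~\ref{sec:score_based_model}, whose key algebraic property is the symmetry
\begin{equation*}
\nabla_{\bm{x}_t}\log p(\bm{x}_t\mid\bm{x}_0)= -\frac{\bm{x}_t-\alpha_t\bm{x}_0}{\sigma_t^2}, \qquad \nabla_{\bm{x}_0}\log p(\bm{x}_t\mid\bm{x}_0)=\alpha_t\,\frac{\bm{x}_t-\alpha_t\bm{x}_0}{\sigma_t^2},
\end{equation*}
so that $\nabla_{\bm{x}_t}p(\bm{x}_t\mid\bm{x}_0)=-\alpha_t^{-1}\nabla_{\bm{x}_0}p(\bm{x}_t\mid\bm{x}_0)$. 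This is the crucial bridge that lets me trade a derivative with respect to $\bm{x}_t$ for one with respect to $\bm{x}_0$.

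Next I would write $p(\bm{x}_t)=\int p(\bm{x}_t\mid\bm{x}_0)p(\bm{x}_0)\,d\bm{x}_0$, pass $\nabla_{\bm{x}_t}$ under the integral (justified by the Gaussian decay of the kernel), and substitute the identity above to obtain $\nabla_{\bm{x}_t}p(\bm{x}_t)=-\alpha_t^{-1}\int\nabla_{\bm{x}_0}p(\bm{x}_t\mid\bm{x}_0)\,p(\bm{x}_0)\,d\bm{x}_0$. Integrating by parts in $\bm{x}_0$ (the boundary terms vanish because $p(\bm{x}_0)\propto e^{-\mathcal{E}/(k_BT)}$ decays and $p(\bm{x}_t\mid\bm{x}_0)$ is Gaussian in $\bm{x}_0$) turns this into $\alpha_t^{-1}\int p(\bm{x}_t\mid\bm{x}_0)\nabla_{\bm{x}_0}p(\bm{x}_0)\,d\bm{x}_0$. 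Dividing by $p(\bm{x}_t)$ and using Bayes' rule $p(\bm{x}_t\mid\bm{x}_0)p(\bm{x}_0)/p(\bm{x}_t)=p(\bm{x}_0\mid\bm{x}_t)$ yields
\begin{equation*}
\nabla_{\bm{x}_t}\log p(\bm{x}_t)=\frac{1}{\alpha_t}\,\mathbb{E}_{\bm{x}_0\mid\bm{x}_t}\!\left[\nabla_{\bm{x}_0}\log p(\bm{x}_0)\right].
\end{equation*}
Plugging in $\nabla_{\bm{x}_0}\log p(\bm{x}_0)=-\nabla_{\bm{x}_0}\mathcal{E}/(k_BT)=\bm{F}/(k_BT)$ from the Boltzmann form completes the score identity.

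For the loss formula, I would appeal to the standard fact that for any square-integrable target $Y$ and predictor $s_\theta(X)$, $\arg\min_\theta\mathbb{E}\|s_\theta(X)-Y\|^2=\mathbb{E}[Y\mid X]$ and the two objectives differ only by a $\theta$-independent constant. Identifying $X=\bm{x}_t$ and $Y=\alpha_t^{-1}\bm{F}/(k_BT)$ with randomness coming from $\bm{x}_0\sim p(\bm{x}_0)$ and $\bm{x}_t\sim p(\bm{x}_t\mid\bm{x}_0)$, the PSM objective \eqref{eq:psm_loss} regresses $s_\theta$ onto the random force target, whose conditional expectation given $\bm{x}_t$ is exactly the score computed above. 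Multiplying by $\lambda(t)$ and averaging over $t\sim\mathcal{U}(0,1)$ preserves this minimizer pointwise in $t$, giving the stated loss.

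The routine step is the interchange of differentiation and integration and the integration by parts, both justified by Gaussian decay plus mild growth assumptions on $\mathcal{E}$; I would state these as a standing regularity assumption rather than belabor them. The conceptual heart, and the only place where the argument does real work, is the symmetry $\nabla_{\bm{x}_t}p(\bm{x}_t\mid\bm{x}_0)=-\alpha_t^{-1}\nabla_{\bm{x}_0}p(\bm{x}_t\mid\bm{x}_0)$ together with integration by parts, which together reroute the unknown $\bm{x}_t$-score into a known data-space force averaged against the posterior $p(\bm{x}_0\mid\bm{x}_t)$. I do not expect a serious obstacle beyond checking these regularity conditions.
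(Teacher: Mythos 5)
Your proposal matches the paper's own proof essentially step for step: pass $\nabla_{\bm{x}_t}$ inside the mixing integral, use the symmetry $\nabla_{\bm{x}_t}p(\bm{x}_t\mid\bm{x}_0)=-\alpha_t^{-1}\nabla_{\bm{x}_0}p(\bm{x}_t\mid\bm{x}_0)$ of the Gaussian kernel to trade derivatives, integrate by parts in $\bm{x}_0$, apply Bayes' rule to obtain the posterior expectation of $\nabla_{\bm{x}_0}\log p(\bm{x}_0)$, substitute the Boltzmann form, and then invoke the least-squares characterization of conditional expectation (the paper's Theorem~\ref{thm:conditional_expect}) to justify the loss. No gaps; as a minor point, you correctly include the minus sign in the symmetry relation, whereas the paper's introductory sentence has a sign typo that it silently corrects in the displayed computation.
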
  
The proof of Theorem~\ref{thm: main_loss} is provided in Appendix \ref{appen: pf_psm_loss}. The loss function \eqref{eq:psm_loss}, also known as the  $``$score loss$"$, can be further expressed in terms of the $``\bm{x}_0$ loss$"$ and $``\epsilon$ loss$"$, as detailed in \cite{luo2022understanding}, and the relationship among these three losses is explained in Appendix \ref{sec:relationship_three_loss}. 
\begin{equation}
    \mathcal{L}_{\bm{x}_0\text{-model}} = \mathbb{E}_{t\sim\mathcal{U}(0, 1)} \mathbb{E}_{\bm{x}_t\sim q(\bm{x}_t | \bm{x}_0)}\mathbb{E}_{\bm{x}_0}\left[\lambda(t) \left\| \mathcal{D}_{\theta} - \bm{x}_t - \frac{\sigma_t^2}{\alpha_t} \frac{\nabla_{\bm{x}_0}\mathcal{E}(\bm{x}_0)}{k_B T} \right\|_2^2\right]\,,
\end{equation}
\begin{equation}
\label{eq:eps_loss}
    \mathcal{L}_{\bm{\epsilon}\text{-model}} = \mathbb{E}_{t\sim \mathcal{U}(0,1)} \mathbb{E}_{\bm{x}_t\sim q(\bm{x}_t | \bm{x}_0)}\mathbb{E}_{\bm{x}_0} \left[\lambda(t) \left\| \bm{\epsilon}_{\theta}(\bm{x}_t, t) + \frac{\sigma_t}{\alpha_t}\frac{\bm{F}}{k_B T} \right\|_2^2\right]\,,
\end{equation}
where $\mathcal{D}_{\theta}$ and $\bm{\epsilon}_{\theta}$ are neural networks designed to approximate the original data point $\bm{x}_0$ and the noise term $\bm{\epsilon}$, respectively. 
Recall that the formulas for the VESDE forward process that $\alpha_t = 1$, the loss can be written as $\mathcal{L} = \mathbb{E}_{t\sim U(0,1)} \mathbb{E}_{\bm{x}_t\sim p(\bm{x}_t | \bm{x}_0)} \mathbb{E}_{\bm{x}_0} \left[\lambda(t) \| \bm{s}_{\theta}(\bm{x}_t, t) - \frac{\bm{F}}{k_B T}\|^2\right]$. For simplicity, we denote $k_B T = 1$ in this section.

Related ideas are explored in Target Score Matching (TSM) \cite{debortoli2024targetscorematching}, which utilizes an explicit energy function to model the score. Building on this concept, we link energy derivatives to the score function and present PSM, which embodies the aforementioned score representation. By utilizing force labels as a proxy to the Boltzmann distribution, our approach reduces computational costs. 
Our method extends TSM from toy models with analytically defined energy functions to realistic, high-dimensional molecular systems. Unlike prior approaches that rely on energy-based MLPs, we incorporate invariance-preserving architectures with a novel time embedding tailored for molecular data, later introduced in the the experimental section. Moreover, our theoretical findings assert that PSM remains robust even when trained on data that does not adhere to the Boltzmann distribution, thereby correcting biases in the training dataset and yielding samples that more closely resemble the true distribution.

Now we denote that $p(\bm{x}_0)$ is the Boltzmann distribution and $q(\bm{x}_0)$ is the data distribution. Ideally, $p(\bm{x}_0)$ should coincide with $q(\bm{x}_0)$; however, when starting with a biased distribution, these two may not be equal. We demonstrate that PSM can provide a superior training process in terms of data fidelity, regardless of whether the training data is biased or unbiased. Since $\nabla_{\bm{x}_t}\log p(\bm{x}_t) = \int p(\bm{x}_0 | \bm{x}_t) \nabla_{\bm{x}_0} \log p(\bm{x}_0) d \bm{x}_0$, the optimal solution satisfies:
\begin{equation}
    \argmin_{\bm{s}(\cdot,\cdot)} \mathbb{E}_{p(\bm{x}_0)} \mathbb{E}_{p(\bm{x}_t |\bm{x}_0)} \| \bm{s}(\bm{x}_t,t) - \nabla_{\bm{x}_t}\log p(\bm{x}_t)\| = \mathbb{E}_{p(\bm{x}_0 | \bm{x}_t) } \left[\nabla_{\bm{x}_0} \log p(\bm{x}_0)\right].
\end{equation}
Thus, the ground truth is given by $\mathbb{E}_{p\left(\bm{x}_0 \mid \bm{x}_t\right)}\left[\nabla \log p\left(\bm{x}_0\right)\right],$ and the Denoising Score Matching (DSM) method learns this same expectation. The following theorem establishes that at small time steps, PSM more closely approximates the Boltzmann distribution than DSM. A detailed explanation is provided in Appendix \ref{appen:balance_psm_dsm}.  

\begin{theorem}
\label{thm:psm_dsm_distance}
For the Boltzmann distribution $p(\bm{x}_0)$, the data distribution $q(\bm{x}_0)$, and small $t$, PSM learns a more accurate score function than DSM,
\begin{equation}
    \begin{aligned}
\left\|\underbrace{\mathbb{E}_{q\left(\bm{x}_0 \mid \bm{x}_t\right)}\left[\nabla \log p\left(\bm{x}_0\right)\right]}_{\textnormal{PSM}} - \underbrace{\mathbb{E}_{p\left(\bm{x}_0 \mid \bm{x}_t\right)}\left[\nabla \log p\left(\bm{x}_0\right)\right]}_{\textnormal{Ground Truth}}\right\|_2^2 
        \leqslant \left\|\underbrace{\mathbb{E}_{q\left(\bm{x}_0 \mid \bm{x}_t\right)}\left[\nabla \log q\left(\bm{x}_0\right)\right]}_{\textnormal{DSM}} - \underbrace{\mathbb{E}_{p\left(\bm{x}_0 \mid \bm{x}_t\right)}\left[\nabla \log p\left(\bm{x}_0\right)\right]}_{\textnormal{Ground Truth}}\right\|_2^2.
    \end{aligned}
\end{equation}
Furthermore, denoting the right-hand side as $\|I_1\|_2^2$ and the left-hand side as $\|I_2\|_2^2$, the difference between these two terms satisfies $\|I_2\|_2^2 - \|I_1\|_2^2 \geq \|I_3\|_2^2 + O(t^3)$, where $I_3 = I_3(\bm{x}_0, \bm{x}_t) >0$ in a neighborhood of $t=0$.
\end{theorem}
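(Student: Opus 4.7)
The plan is to Taylor-expand all three conditional expectations in the small-$\sigma_t$ regime and exploit the fact that PSM uses the \emph{correct} target score $\nabla\log p$ inside the (slightly biased) posterior, while DSM uses the biased data score $\nabla\log q$. The expected outcome is that the PSM error $I_2$ begins at order $\sigma_t^2$, whereas the DSM error $I_1$ carries a zeroth-order term $\nabla\log q(\bm{x}_t/\alpha_t) - \nabla\log p(\bm{x}_t/\alpha_t)$; squaring and subtracting will then produce the claimed gap with $I_3$ equal to this zeroth-order mismatch.

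\paragraph{Step 1: posterior localization.} Since $q(\bm{x}_t\mid\bm{x}_0) = \mathcal{N}(\alpha_t \bm{x}_0, \sigma_t^2 I)$, Bayes' rule gives
\[
q(\bm{x}_0\mid\bm{x}_t) \;\propto\; q(\bm{x}_0)\,\exp\!\Bigl(-\tfrac{1}{2\sigma_t^2}\|\bm{x}_t-\alpha_t\bm{x}_0\|^2\Bigr),
\]
and analogously with $p$. For small $\sigma_t$ the posterior concentrates near $\bm{y}_0 := \bm{x}_t/\alpha_t$; a Tweedie-type computation yields
\[
\mathbb{E}_{q(\bm{x}_0\mid\bm{x}_t)}[\bm{x}_0] \;=\; \bm{y}_0 + \tfrac{\sigma_t^2}{\alpha_t^2}\nabla\!\log q(\bm{y}_0) + O(\sigma_t^4),
\qquad \mathrm{Cov}_{q}(\bm{x}_0\mid\bm{x}_t) \;=\; \tfrac{\sigma_t^2}{\alpha_t^2} I + O(\sigma_t^4),
\]
with an identical expansion for $p$. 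I would then show, for any smooth vector field $h$, the second-order expansion
\[
\mathbb{E}_{q(\bm{x}_0\mid\bm{x}_t)}[h(\bm{x}_0)] \;=\; h(\bm{y}_0) + \tfrac{\sigma_t^2}{\alpha_t^2}\nabla h(\bm{y}_0)\,\nabla\!\log q(\bm{y}_0) + \tfrac{\sigma_t^2}{2\alpha_t^2}\Delta h(\bm{y}_0) + O(\sigma_t^4),
\]
and likewise with $p$ replacing $q$.

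\paragraph{Step 2: apply to the three quantities.} Taking $h=\nabla\!\log p$ in the expansions for both $q(\bm{x}_0\mid\bm{x}_t)$ and $p(\bm{x}_0\mid\bm{x}_t)$ shows that the zeroth-order term and the Laplacian term cancel in $I_2$, leaving
\[
I_2 \;=\; \tfrac{\sigma_t^2}{\alpha_t^2}\,\nabla^2\!\log p(\bm{y}_0)\,\bigl[\nabla\!\log q(\bm{y}_0) - \nabla\!\log p(\bm{y}_0)\bigr] + O(\sigma_t^4).
\]
For $I_1$ I would instead take $h=\nabla\!\log q$ under $q(\bm{x}_0\mid\bm{x}_t)$ and $h=\nabla\!\log p$ under $p(\bm{x}_0\mid\bm{x}_t)$; now the zeroth-order pieces do \emph{not} cancel, giving
\[
I_1 \;=\; \bigl[\nabla\!\log q(\bm{y}_0) - \nabla\!\log p(\bm{y}_0)\bigr] + O(\sigma_t^2).
\]

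\paragraph{Step 3: compare squared norms and identify $I_3$.} From the two expansions,
\[
\|I_1\|_2^2 \;=\; \bigl\|\nabla\!\log q(\bm{y}_0) - \nabla\!\log p(\bm{y}_0)\bigr\|_2^2 + O(\sigma_t^2), \qquad \|I_2\|_2^2 \;=\; O(\sigma_t^4),
\]
so $\|I_2\|_2^2 \leqslant \|I_1\|_2^2$ for sufficiently small $t$, which is the first inequality. Defining $I_3 := \nabla\!\log q(\bm{y}_0) - \nabla\!\log p(\bm{y}_0)$, which is nonzero on any neighborhood where the data distribution is biased away from Boltzmann, and converting the $\sigma_t$-expansion to the $t$-expansion through the chosen noise schedule yields the claimed quantitative gap $\|I_1\|_2^2 - \|I_2\|_2^2 \geq \|I_3\|_2^2 + O(t^3)$.

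\paragraph{Main obstacle.} The delicate step is the posterior expansion in Step~1: strictly speaking it requires regularity of $\log p$ and $\log q$ and a uniform control of the remainders in $\bm{x}_t$, not just a pointwise bound, and one must check that the higher moments of $\bm{x}_0-\bm{y}_0$ under the posterior indeed scale as stated so that the cubic moment contributions are absorbed into $O(\sigma_t^4)$. Translating the $\sigma_t^k$ order into the advertised $O(t^3)$ remainder then depends on the specific noise schedule (VE or VP); I would handle this by invoking the small-$t$ expansions of $\alpha_t$ and $\sigma_t$ given in Section~\ref{sec:score_based_model} and keeping only the leading power.
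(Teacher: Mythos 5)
Your proposal takes a genuinely different route from the paper. The paper's argument hinges on an \emph{exact} algebraic decomposition
\begin{equation*}
I_2 \;=\; I_1 + I_3, \qquad I_3 := \int\bigl(\nabla\log q(\bm{x}_0)-\nabla\log p(\bm{x}_0)\bigr)\,q(\bm{x}_0\mid\bm{x}_t)\,d\bm{x}_0,
\end{equation*}
which gives $\|I_2\|_2^2-\|I_1\|_2^2 = \|I_3\|_2^2 + 2\,I_1^{\top}I_3$ identically. The whole work is then in showing that the cross term $2\,I_1^{\top}I_3$ is $O(t^3)$; the paper does this by an It\^o / Fokker--Planck expansion of $p(\bm{x}_0\mid\bm{x}_t)$ in $t$ and by the crucial cancellation $\int\nabla\log p(\bm{x}_0)\,d\bm{x}_0 = 0$, which kills the $O(1)$ and $O(t)$ contributions. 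You, by contrast, never use this identity: you Laplace/Tweedie-expand both error terms directly around $\bm{y}_0 = \bm{x}_t/\alpha_t$ and compare leading orders. That is a perfectly reasonable way to obtain the qualitative inequality $\|I_2\|^2\leqslant\|I_1\|^2$ for small noise, but it does not recover the quantitative gap $\|I_1\|^2 - \|I_2\|^2 \geqslant \|I_3\|^2 + O(t^3)$. Note also that your $I_3$ is the pointwise quantity $\nabla\log q(\bm{y}_0)-\nabla\log p(\bm{y}_0)$, whereas the paper's $I_3$ is the conditional expectation above; they agree only to leading order.

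There are two concrete gaps. First, your whole expansion is in powers of $\sigma_t$, yet the paper works in the VE regime where $\sigma_0=\sigma_{\min}>0$ (they use $\sigma_{\min}=0.1$ or $0.01$); the posterior does \emph{not} degenerate as $t\to 0$, so ``$O(\sigma_t^2)$'' is in fact $O(1)$ and the expansion parameter never becomes small uniformly. The paper sidesteps this by expanding in $t$ through the reverse SDE, treating $h(0)=\sigma_{\min}^2$ as a fixed constant. Second, even granting the posterior-concentration picture (say, in VP where $\sigma_t^2\sim\beta_0 t$), your bookkeeping gives $\|I_1\|^2-\|I_2\|^2 = \|I_3\|^2 + 2\sigma_t^2 A^{\top}B + O(\sigma_t^4)$ for the subleading coefficient $B$ of $I_1$, i.e. a correction of order $\sigma_t^2 = O(t)$, not $O(t^3)$. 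To match the advertised $O(t^3)$ you need the exact identity $I_2 = I_1 + I_3$ together with a reason why $2\,I_1^{\top}I_3$ degenerates at both zeroth and first order in $t$; that is precisely what the cancellation $\int\nabla\log p\,d\bm{x}_0 = 0$ delivers in the paper, and it is the key structural idea your plan is missing.
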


Recent works \cite{yang2023lipschitz,phillipsparticle2024} have indicated that during the training of traditional diffusion models, there is a tendency to encounter large Lipschitz constants and significant variance in relation to the time variable near $t = 0$, can be seen in Lemma \ref{lemma:var_psm} and \ref{thm:lipschitz} in Appendix Appendix \ref{appen:singu_variance}, which have the potential to destabilize the training process. 
These highlight the need for improved training strategies in the small-$t$ regime. Encouragingly, Theorem \ref{thm:psm_dsm_distance} shows that PSM offers a theoretically grounded debiasing mechanism by promoting convergence toward the Boltzmann distribution in molecular systems. This suggests the potential of applying PSM labels in the small-$t$ region of diffusion models.
We propose a weighted combination of both losses, referred to as $``$Piecewise Loss$"$ and $``$Piecewise Weighted Loss$"$. These methods prioritize PSM for small $t$ while favoring DSM at larger time values. We denote the noise adding time range of PSM loss as $t_{\text{psm}}$, and similarly denote the time of DSM label as $t_{\text{dsm}}$.

\paragraph{Piecewise loss.} Assuming $k_B T=1$ and given a chosen time point $t_p$, we use the force loss (\eqref{eq:eps_loss}) exclusively for $t \in [0, t_p]$, while employing DSM for $t > t_p$ as follows. Here, $t_{\text{psm}} \triangleq [0, t_p]$ and $t_{\text{dsm}} \triangleq [t_p, 1]$.
\begin{equation}
    \bm{s}\left(\bm{x}_t, t\right) = \begin{cases}
        \frac{1}{\alpha_t} \mathbb{E}_{\bm{x}_0 \mid \bm{x}_t}\left[\nabla_{\bm{x}_0} \log p\left(\bm{x}_0\right)\right], & \text{if } t < t_p, \\
        \mathbb{E}_{\bm{x}_0 \mid \bm{x}_t}\left[\nabla_{\bm{x}_t} \log p\left(\bm{x}_t \mid \bm{x}_0\right)\right], & \text{if } t \in [t_p, 1].
    \end{cases}
\end{equation}

\paragraph{Piecewise Weighted loss.} Since $\bm{s}\left(\bm{x}_t, t\right) =\mathbb{E}_{\bm{x}_0 \mid \bm{x}_t}\left[\frac{1}{\alpha_t}\nabla_{\bm{x}_0} \log p\left(\bm{x}_0\right)\right]  =\mathbb{E}_{\bm{x}_0 \mid \bm{x}_t}\left[\nabla_{\bm{x}_t} \log p\left(\bm{x}_t \mid \bm{x}_0\right)\right]=\mathbb{E}_{\bm{x}_0 \mid \bm{x}_t}\left[\frac{\bm{x}_0-\bm{x}_t}{\sigma_t^2}\right] \,,$ we consider another form of the score 
\begin{equation}
    \begin{aligned}
        s\left(\bm{x}_t, t\right)  = E_{\bm{x}_0 \mid \bm{x}_t}\left[\omega_t \frac{1}{\alpha_t}\nabla_{\bm{x}_0} \log p\left(\bm{x}_0\right)+\left(1 -\omega_t\right)\frac{\bm{x}_0-\bm{x}_t}{\sigma_t^2}\right]\,,
    \end{aligned}
\end{equation}
and the the loss is $\mathcal{L} = \mathbb{E}_{t, \bm{x}_t, \bm{x}_0}\left[\lambda(t) \left\| -\frac{\bm{\epsilon}_\theta\left(\bm{x}_t, t\right)}{\sigma_t} + \frac{\omega_t}{\alpha_t} \bm{F} - (1 - \omega_t) \frac{\bm{x}_0 - \bm{x}_t }{\sigma_t^2}\right\|_2^2\right]\,.$
We choose $\omega_t$ as a time-varying function which satisfies that $\omega_0 = 1$ and $\omega_1 = 0$. For example, $\omega_t = \text{sigmoid}(50 (t - 0.05)), \, t\in [0, 0.1]$ and $\omega_t = 0$ when $t \in [0.1, 1]$. We can see that this loss is a generalization of $``$Piecewise loss$"$.
With a random noise $\bm{\epsilon}$, the $``$Piecewise loss$"$ and the $``$Piecewise Weighted loss$"$ can be unified as 
\begin{equation}
    \begin{aligned}
       \mathcal{L}_{\text{PSM}} = \mathbb{E}_{t, \bm{x}_t, \bm{x}_0} \left[\| \bm{\epsilon}_{\theta}(\bm{x}_t, t) - ((1 - \omega_t) \bm{\epsilon}_1 + \omega_t \bm{\epsilon}_2) \|_2^2\right]\,, \, \text{where}\, \bm{\epsilon}_1 = \bm{\epsilon}[t_{\text{dsm}}],\, \bm{\epsilon}_2 = -\bm{F}[t_{\text{psm}}] \cdot\sigma_t \,, t_{\text{dsm}} \in [0, 1] \setminus \{t_{\text{psm}}\}\,,
    \end{aligned}
\end{equation}
where $\bm{\epsilon}$ is a random noise, $t_{\text{psm}}$ refers to the noise adding time range using $\bm{F}$ as label, and $t_{\text{dsm}}$ is the element in its complement. The training process is shown in Algorithm \ref{alg:psm}.

\begin{algorithm}
\caption{Potential Score Matching}\label{alg:psm}
\begin{algorithmic}
\Require Network $\bm{\epsilon}_{\theta}$, total iteration $N$, data $\bm{x}_0$, forces $\bm{F}$, weight function $\omega_t$, noise schedule $\sigma_t$, and $\alpha_t$;
\State Select random noise $\bm{\epsilon}$;
\State Determine the time when the labels $\bm{F}$ and random noise $\bm{\epsilon}$ act, $t_{\text{psm}}$ and $t_{\text{dsm}}$.
\While{$n \leq N$}
\State $t \sim \mathcal{U}(0, 1)$, $\bm{x}_t \sim \mathcal{N}(\alpha_t\bm{x}_0, \sigma_t^2)$, $\bm{\epsilon}_1 = \bm{\epsilon}[t_{\text{dsm}}]$, $\bm{\epsilon}_2 = -\frac{\bm{F}[t_{\text{psm}}]}{\alpha_t} \sigma_t$;
\State $\mathcal{L}_{\text{PSM}} = \| \bm{\epsilon}_{\theta}(\bm{x}_t, t) - ((1 - \omega_t) \bm{\epsilon}_1 + \omega_t \bm{\epsilon}_2) \|_2^2$; 
\State $\theta \leftarrow \text{Update}(\theta, \nabla_{\theta} \mathcal{L}_{\text{PSM}})$;
\EndWhile
\end{algorithmic}
\end{algorithm}

Our approach to incorporating force information is designed to be sample-efficient and does not impose constraints on the sampling methodology. It is compatible with common techniques employed in diffusion models, including the Euler method, Prediction-Correction (PC) method, and EDM \cite{song2020score,karras2022elucidating}.  

\section{Experiments}
\label{sec:exper}
\paragraph{Datasets.} We assess the performance of our proposed Potential Score Matching loss across various settings, including toy models like the Lennard-Jones (LJ) potential \cite{kohler2020equivariant}, and more intricate molecular systems such as MD17 and MD22. The LJ configurations, LJ-13 and LJ-55, consist of $13$ and $55$ atoms, respectively, arranged in a three-dimensional space. Historically, energy-based molecular sampling has largely concentrated on toy models \cite{midgley2022flow,akhound2024iterated,woo2024iterated,debortoli2024targetscorematching}, but our approach extends this to higher-dimensional, real-world datasets, such as MD17 and MD22. MD17 contains molecular trajectories for different molecules, with $9$ to $21$ atoms, sampled at $500$~K, where the thermal energy $k_B T$ in the Boltzmann distribution equals $1$~kcal/mol. In contrast, MD22 poses a more demanding challenge due to its greater complexity, featuring molecular dynamics (MD) trajectories from a small peptide with $42$ atoms to a double-walled nanotube with $370$ atoms~\cite{StefanChmiela2023}, sampled at $400$ to $500$~K with a time resolution of $1$~fs. 

To demonstrate our method's capability to correct biased training distributions and ensembles, we use Denoising Score Matching (DSM) \cite{song2020score} as a baseline and compare our results with those from iDEM and Flow AIS bootstrap (FAB) in toy model contexts \cite{akhound2024iterated,midgley2022flow}. Our experiments primarily utilize biased training sets, employing the first $1,000$ frames for LJ data and the first $5,000$ frames for MD reference trajectories. Further investigation into the impact of dataset bias is conducted in the ablation study (Section~\ref{sec:ablation}).

\paragraph{Diffusion settings and network.} We evaluate our method using DSM, Piecewise, and Piecewise Weighted approaches, employing VESDE for denoising with equal time weight functions, $\lambda(t)=1$. In the Piecewise approach, PSM is applied for $t \leq 0.05$, while DSM is used for $t \in [0.05, 1]$. In the Piecewise Weighted setting, the label is combined as $\omega_t \times \text{psm target} + (1 - \omega_t) \times \text{dsm target}$, where $\omega_t = \frac{1}{1 + \exp(50 (t - 0.05))}.$ During training, we employ the $``\epsilon$-model$"$ loss function to learn the noise $\epsilon$.

Molecular systems often represent molecular structures as graphs, with atomic coordinates, numbers, and bonds defining a molecular graph. Symmetry and equivariance in these graph representations are crucial for accurate molecular interactions. We modify the Equiformer-v2 network based on \cite{equiformer_v2} by adding the time embedding, more details about this network can be seen in Appendix \ref{appen:Equiformer}.

For sampling process, we use the Prediction-Correction (PC) sampler~\cite{song2020score}, where each prediction step are followed by one correction step. Additional experimental details, including hyperparameter settings, are available in Appendix~\ref{appen:exp_settings}.

\paragraph{Evaluation metrics.} To evaluate the physical plausibility of the generated molecular conformations, we assess the stability of molecules in the MD17 dataset, following the method in~\cite{fu2022forces}. A molecule is deemed \textit{unstable} at time $T$ if:   
\begin{equation}  
    \max_{(i,j)} \big| \|\bm{x}_i(T) - \bm{x}_j(T)\| - b_{ij} \big| > 0.5,  
\end{equation}  
where $b_{ij}$ is the equilibrium bond length between atoms $i$ and $j$. If no unstable molecules are detected, the dataset is considered stable.  
Moreover, we evaluate whether generated samples show an unbiased distribution by analyzing the statistical distribution of \textit{interatomic distances} ($h(r)$), defined as:  
\begin{equation}  
    h(r) = \frac{1}{N(N-1)} \sum_{i=1}^{N} \sum_{j \neq i}^{N} \delta(r - \|\bm{x}_i - \bm{x}_j\|),  
\end{equation}  
where $r$ is the interatomic distance and $\delta$ is the Dirac delta function. This metric indicates whether the generated samples follow an unbiased molecular distribution. The mean absolute error (MAE) between the sampled $h(r)$ and reference data $h(r)$ also serves as a numerical comparison, providing more intuitive results.  

For further numerical insights, we provide the Total Variation Distance (TVD) to measure the maximum distribution distance between sampled and reference data. For probability distributions \(P\) and \(Q\), TVD is defined as:  
\[  
\text{TVD}(P, Q) = \frac{1}{2} \int_{\mathbb{R}^d} |P(\bm{x}) - Q(\bm{x})| \, d\bm{x},  
\]  
or equivalently for discrete distributions: $\text{TVD}(P, Q) = \frac{1}{2} \sum_{\bm{x} \in \mathcal{X}} |P(\bm{x}) - Q(\bm{x})|.$  
For datasets with available analytical energy expressions, we additionally provide energy distribution visualizations.

\subsection{Main Results}
\subsubsection{Lennard-Jones Potential}\label{sec:exp_lj}  
The Lennard-Jones (LJ) potential encapsulates the fundamental principles of interatomic interactions: repulsive forces dominate at short distances, while attractive forces prevail at longer ranges. It is mathematically expressed as:  
\begin{equation}  
\begin{aligned}  
\label{eq: lj_energy}  
\mathcal{E}^{\mathrm{LJ}}(\bm{x})=\frac{1}{2 \tau} \sum_{i j}\left(\left(\frac{r_m}{d_{i j}}\right)^{12} -  2\left(\frac{r_m}{d_{i j}}\right)^6\right)\,.  
\end{aligned}  
\end{equation}  
\begin{equation}  
\begin{aligned}  
\mathcal{E}^{\mathrm{osc}}(\bm{x})=\frac{1}{2} \sum_i\left\|\bm{x}_i-\bm{x}_{\mathrm{mean}}\right\|^2  \, \quad \mathcal{E}^{\mathrm{tot}} = \mathcal{E}^{\mathrm{LJ}}(\bm{x}) + \mathcal{E}^{\mathrm{osc}}(\bm{x})\,.  
\end{aligned}  
\end{equation}  
Here, \( r_m \) and $\tau$ are constants that characterize the potential. In our experiments, we use the standard values $r_m = 1$ and $\tau = 1$, in line with previous studies \cite{kohler2020equivariant, akhound2024iterated}. The term $\mathcal{E}^{\mathrm{osc}}$ represents the harmonic potential energy associated with particle displacements relative to the system's center of mass, $\bm{x}_{\mathrm{mean}}$.  

As indicated by \eqref{eq: lj_energy}, the function magnitude significantly increases when any interatomic distance $d_{ij}$ approaches zero, presenting substantial challenges, particularly in high-dimensional systems. To examine the effect of biased training data, we plot the histograms of interatomic distance distributions, $r = d_{ij}$, for LJ-13 and LJ-55, denoted as \( h(r) \), in Figure \ref{fig:lj13_results}. The results demonstrate that PSM effectively debiases samples when trained on biased distributions.

\begin{figure}[h]
\small
    \centering
    \begin{subfigure}[b]{0.8\textwidth}
        \centering
    \includegraphics[width=\textwidth]{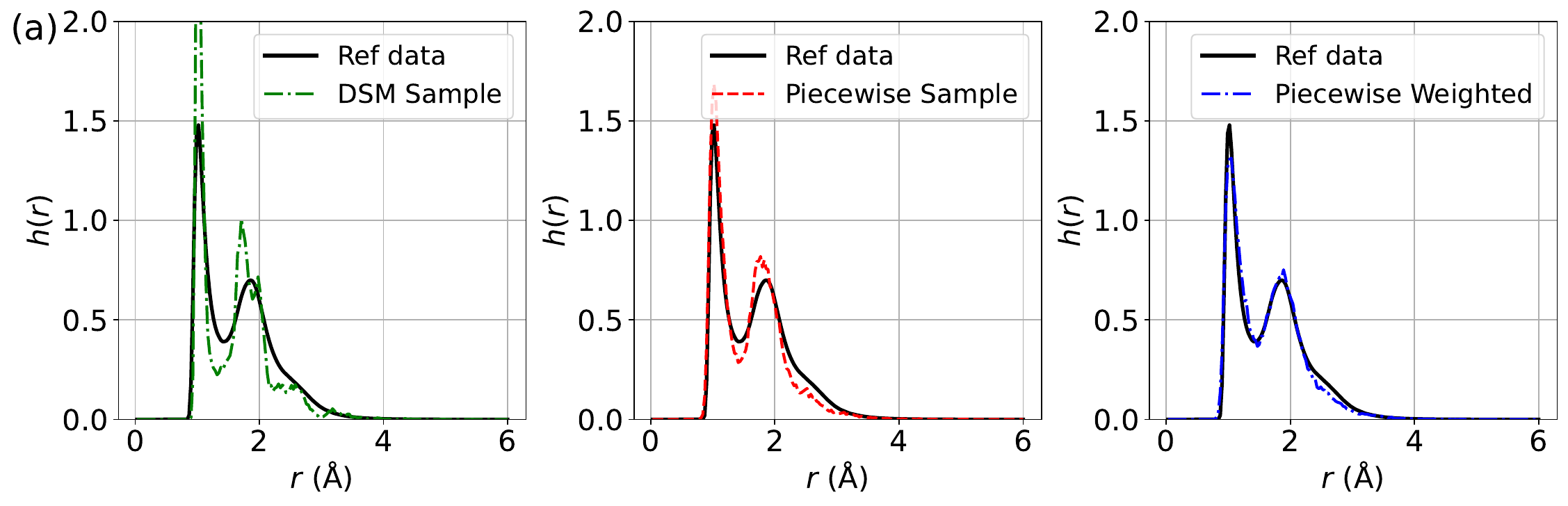}
    \end{subfigure}
     \begin{subfigure}[b]{0.8\textwidth}
        \centering
\includegraphics[width=\textwidth]{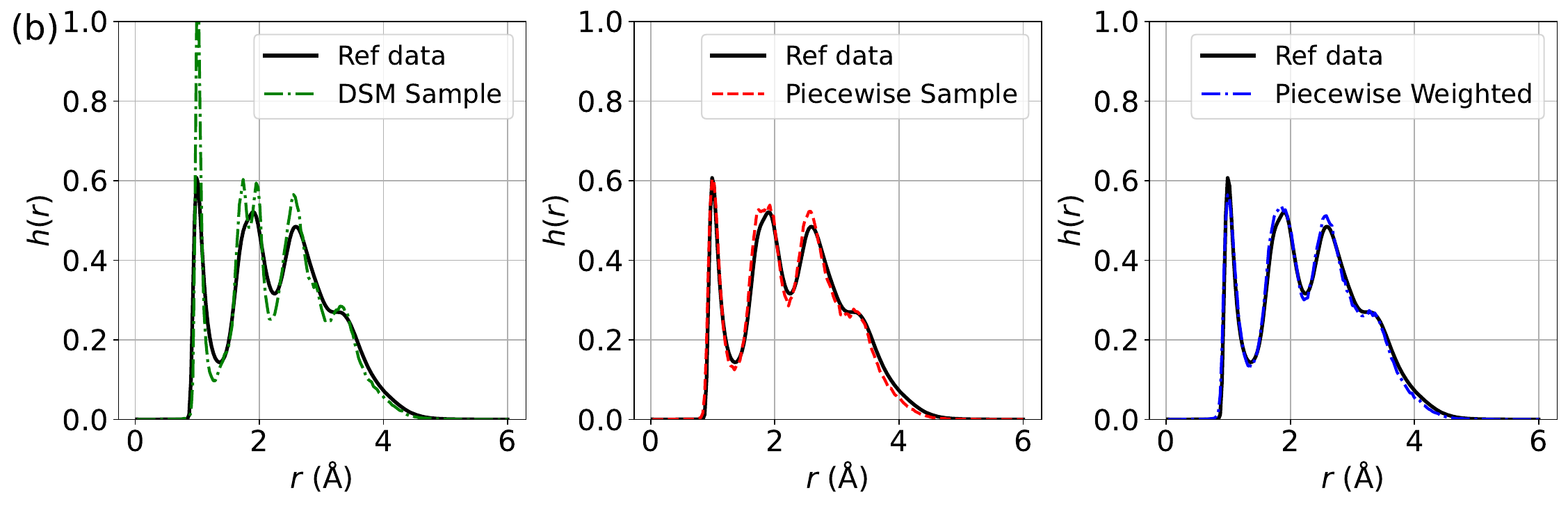}
    \end{subfigure}
    \caption{The distribution of interatomic distances $h(r)$ for LJ potential using biased training data with a sample size of $500$. (a) Comparison of $h(r)$ for DSM, Piecewise, and Piecewise Weighted losses (from left to right) against reference data for LJ-13; (b) Comparison of $h(r)$ for DSM, Piecewise, and Piecewise Weighted losses for LJ-55.}
    \label{fig:lj13_results}
\end{figure}

Compared to \cite{akhound2024iterated, midgley2022flow}, we also report the corresponding Wasserstein-2 ($\mathcal{W}$-2) distance and total variation distance (TVD) based on three sets of samples obtained by sampling with different random seeds. Table \ref{table:tvd_wasserstein} presents these metric comparisons, where we test the LJ potential using a random $10\%$ subset of the reference data as the training set. Our experimental results show that it is applicable to the toy model and has advantages in high-dimensional situations. From another aspect, PSM also significantly reduces sampling time.

\begin{table}[h!]
\centering
\renewcommand{\arraystretch}{0.99} 
\setlength{\tabcolsep}{4pt} 
\caption{Comparisons of PSM with FAB \cite{midgley2022flow} and iDEM \cite{akhound2024iterated} on LJ-13 and LJ-55 results. Metrics include 2-Wasserstein distance and atomic distance TVD, evaluated on three different seeds.}
\label{table:tvd_wasserstein}
\begin{tabular}{@{}ccccc@{}}
\hline 
& \multicolumn{2}{c}{LJ-13 (39D)} & \multicolumn{2}{c}{LJ-55 (165D)} \\
\cline{2-3} \cline{4-5}
& Sample $\mathcal{W}$-2 & Distance TVD & Sample $\mathcal{W}$-2 & Distance TVD \\
\hline 
FAB & $4.35 \pm 0.001$ & $0.252 \pm 0.002$ & $18.03 \pm 1.21$ & $0.24 \pm 0.09$ \\
iDEM & $\textbf{4.26} \pm 0.03$ & $\textbf{0.044} \pm 0.001$ & $16.128 \pm 0.071$ & $0.09 \pm 0.01$ \\
Piecewise & $4.287 \pm 0.003$ & $\underline{0.0582 \pm 0.001}$ & $\textbf{15.894} \pm \textbf{0.003}$ & $\underline{0.047 \pm 0.000}$ \\
Piecewise-Weighted & $\underline{4.278 \pm 0.001}$ & $0.0585 \pm 0.001$ & $\underline{16.054 \pm 0.008}$ & $\textbf{0.023} \pm \textbf{0.002}$ \\
\hline
\end{tabular}
\end{table}

\subsubsection{Molecular Dynamical Data}
\label{sec: exp_md}
Unlike other studies, our model proves effective on higher-dimensional datasets, specifically testing on MD17 and MD22 datasets. In MD17, we consider molecules such as Uracil (12 atoms), Naphthalene (10 atoms), Aspirin (21 atoms), Salicylic Acid (16 atoms), Malonaldehyde (9 atoms), Ethanol (9 atoms), and Toluene (15 atoms). This dataset provides molecules with properties like Cartesian coordinates (in \AA), atomic numbers, total energies (in kcal/mol), and atomic forces (in kcal/mol/\AA) at $500$ K, where the thermal energy $k_B T$ in the Boltzmann distribution corresponds to $1\,\text{kcal/mol}$. Additionally, we test the MD22 benchmark dataset, which includes Ac-Ala3-NHMe ($42$ atoms), Docosahexaenoic Acid (DHA) ($56$ atoms), Stachyose ($87$ atoms), DNA base pair (AT-AT) ($60$ atoms), DNA base pair (AT-AT-CG-CG) ($118$ atoms), Buckyball catcher ($148$ atoms), and Double-walled nanotube ($370$ atoms).

\begin{figure}[thbp]
    \centering
    \begin{subfigure}[b]{0.24\textwidth}
        \centering
\includegraphics[width=\textwidth]{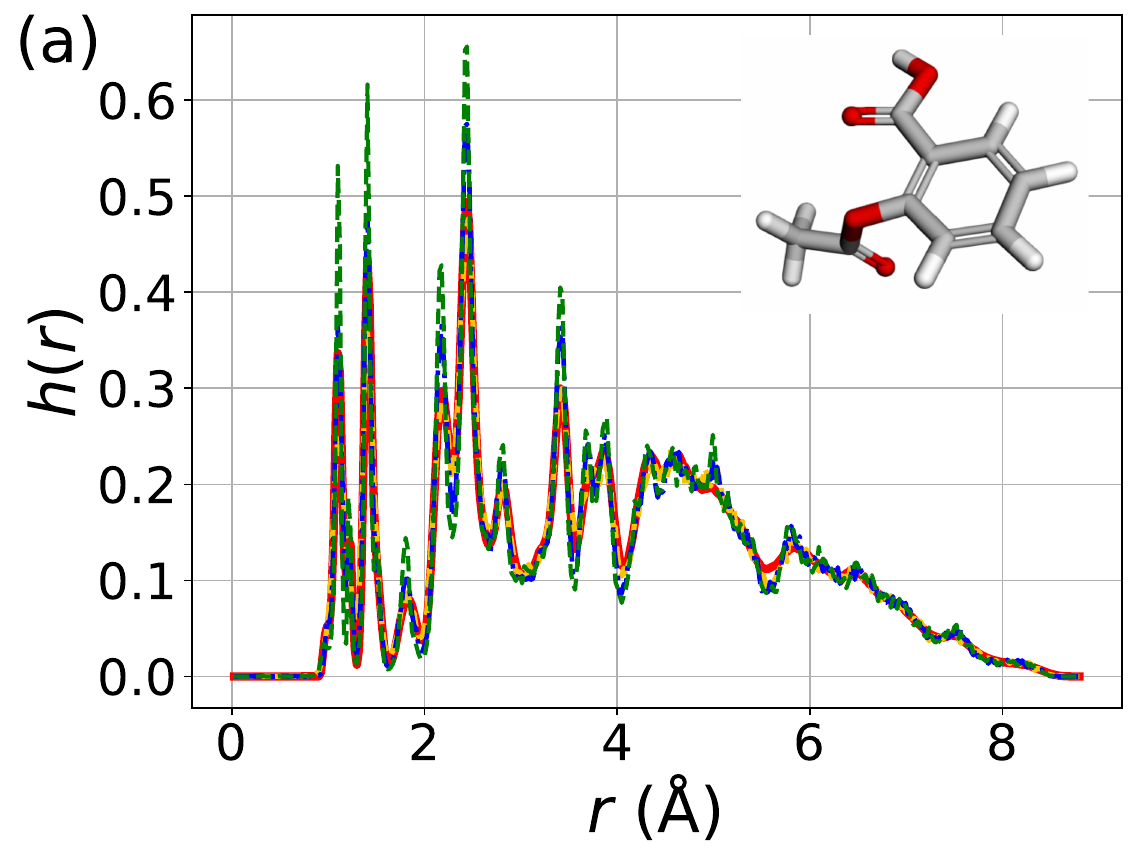}
    \end{subfigure}
    \begin{subfigure}[b]{0.24\textwidth}
        \centering
\includegraphics[width=\textwidth]{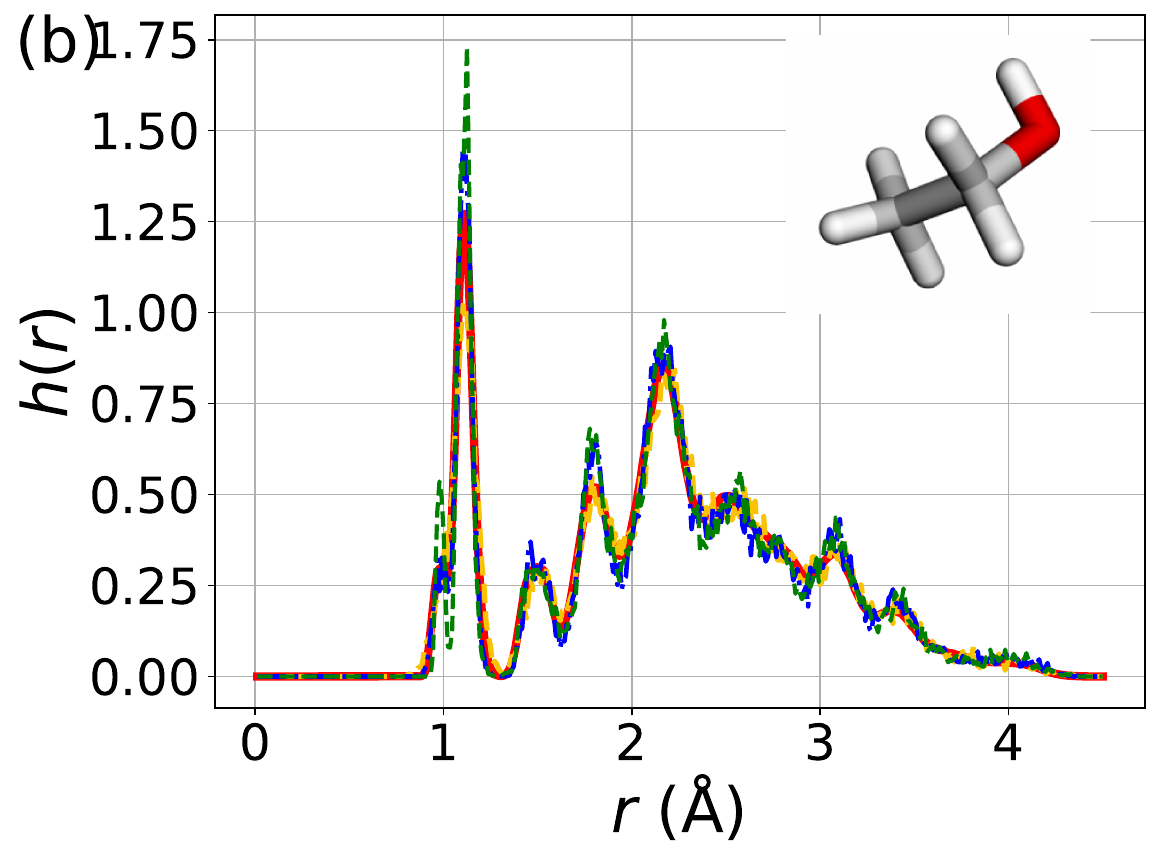}
    \end{subfigure}
    \begin{subfigure}[b]{0.24\textwidth}
\centering\includegraphics[width=\textwidth]{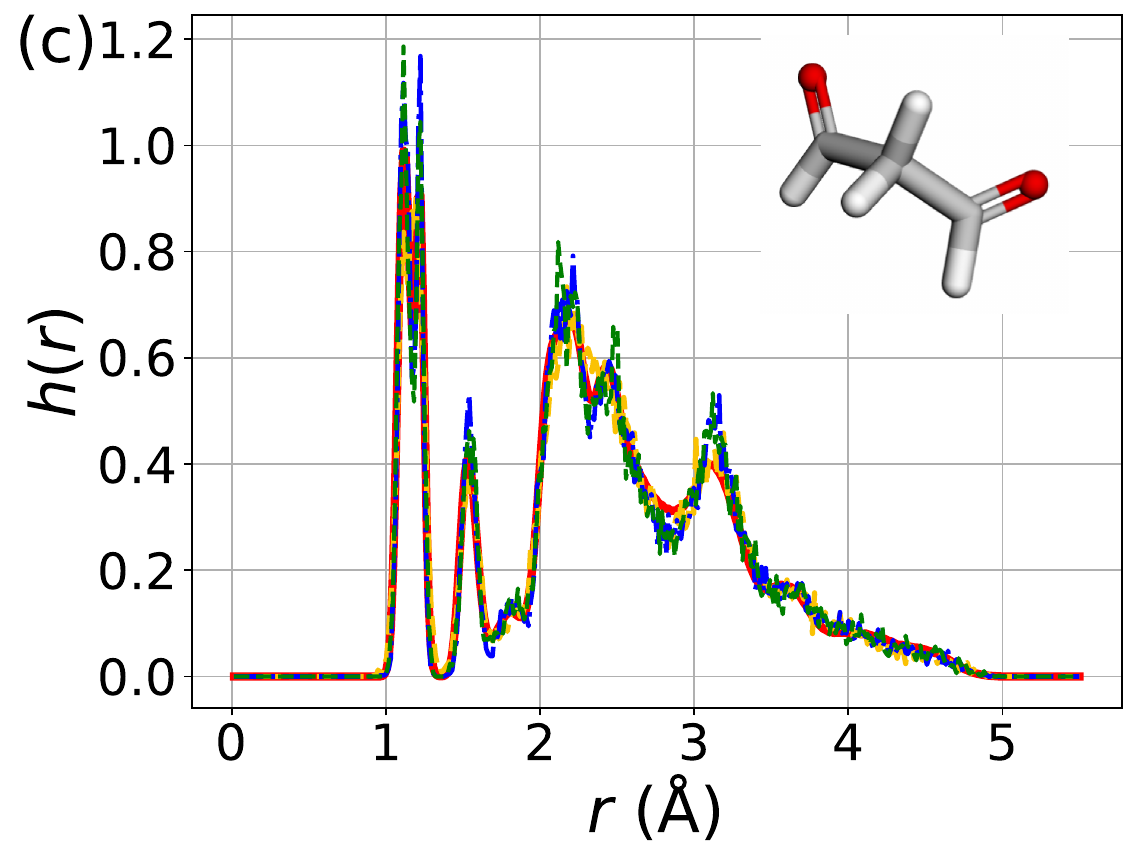}
    \end{subfigure}
    \begin{subfigure}[b]{0.24\textwidth}
        \centering
\includegraphics[width=\textwidth]{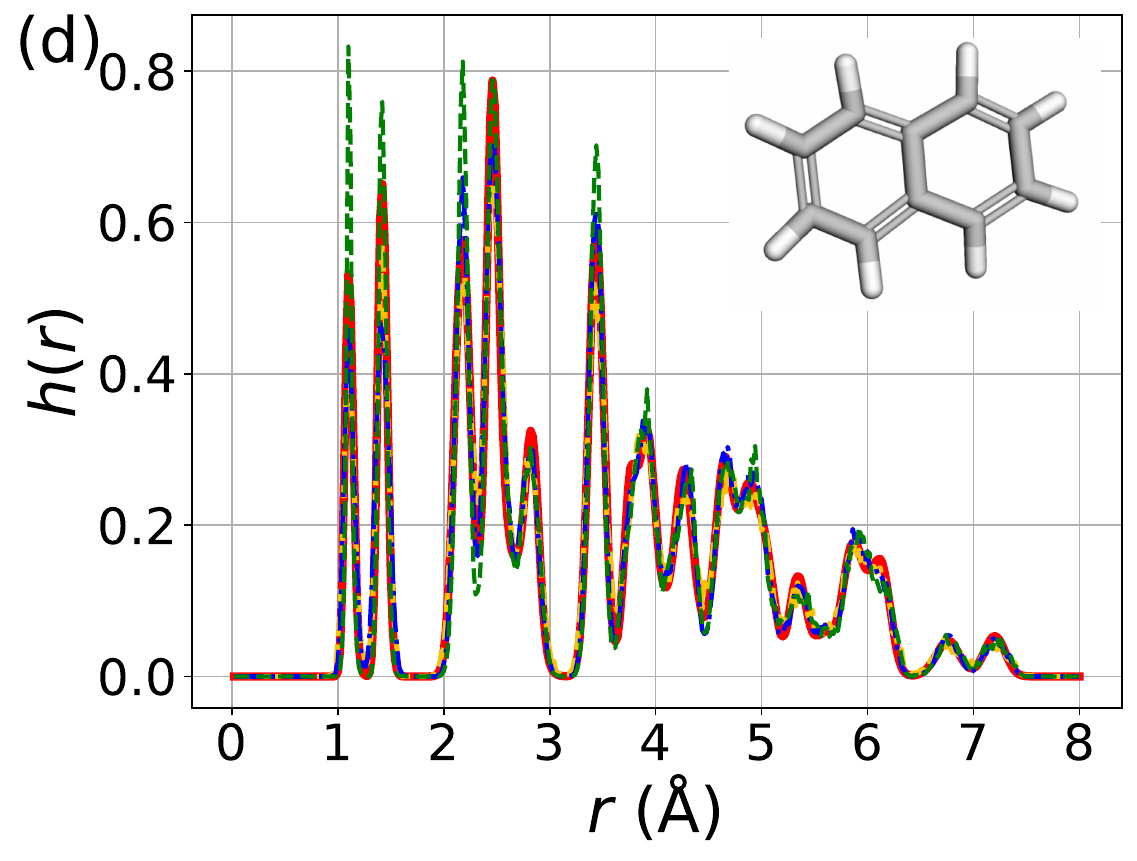}
    \end{subfigure}
    
         \begin{subfigure}[b]{0.24\textwidth}
\centering\includegraphics[width=\textwidth]{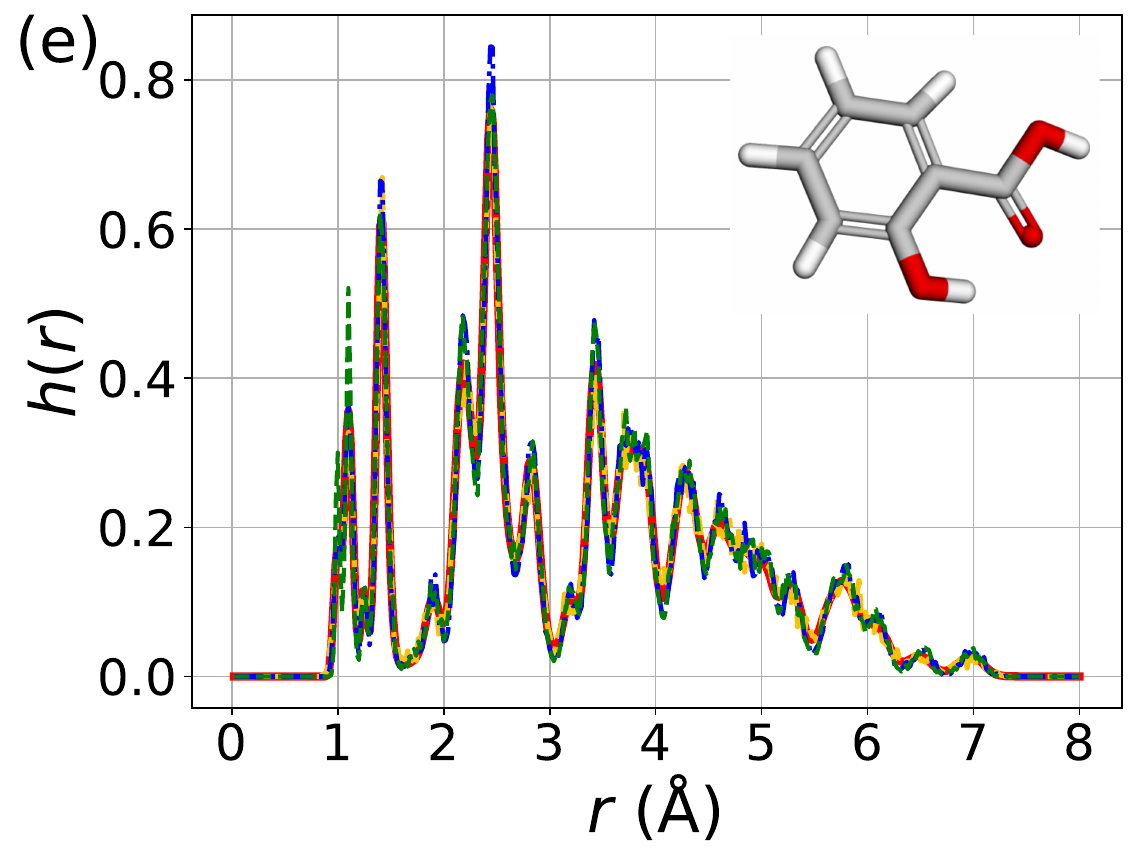}
    \end{subfigure}
    \begin{subfigure}[b]{0.24\textwidth}
        \centering
        \includegraphics[width=\textwidth]{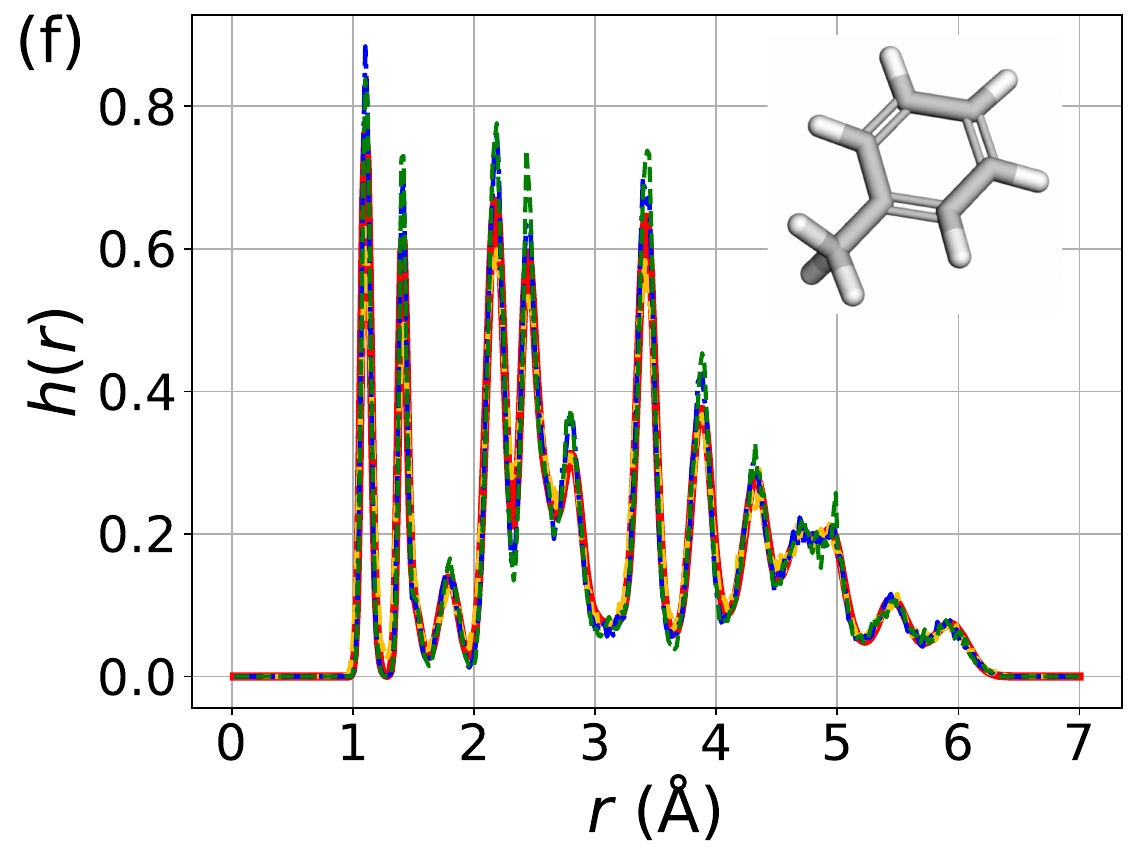}
    \end{subfigure}
     \begin{subfigure}[b]{0.24\textwidth}
        \centering
        \includegraphics[width=\textwidth]{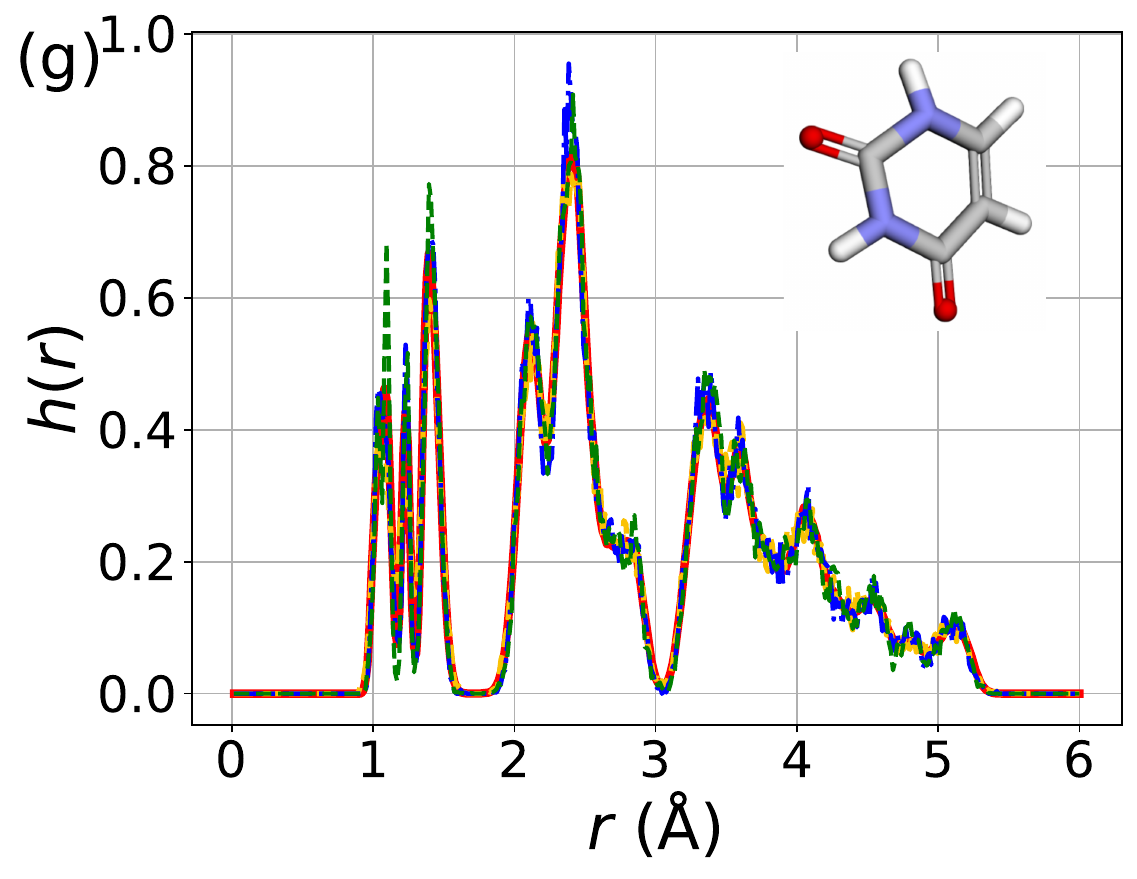}
    \end{subfigure}
    \begin{subfigure}[b]{0.24\textwidth}
        \centering
\includegraphics[width=\textwidth]{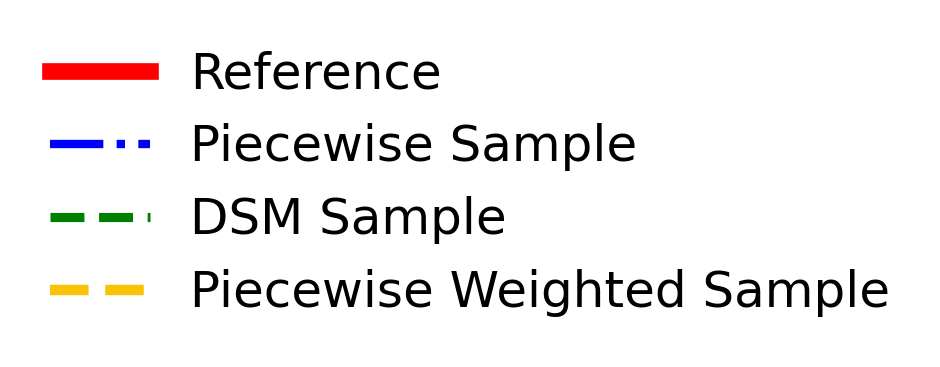}
        \caption*{} 
    \end{subfigure}
    \caption{The distribution of interatomic distances $h(r)$ of $r (\si{\angstrom})$ for (a) aspirin, (b) ethanol, (c) malonaldehyde, (d) naphthalene, (e) salicylic acid, (f) toluene, and (g) uracil in MD17 dataset. The insets display the ball-and-stick representations of these molecules learned by Piecewise method. The sample number is $1,000$.} 
    \label{fig:md17_first5000}
\end{figure}

\paragraph{MD17 dataset.} We first evaluate the stability of generated molecular configurations across DSM, Piecewise, and Piecewise Weighted methods, as proposed in \cite{fu2022forces}. Stability is assessed on $1,000$ samples generated at the epoch with the lowest training loss over $1,000$ epochs. All samples meet the stability criterion. With no unstable molecules in any of the sampled data, we plot the distribution of interatomic distances for these MD17 datasets, as shown in Figure~\ref{fig:md17_first5000}. The figure indicates that at the peak, DSM tends to capture greater fluctuations, and PSM can more accurately learn the neighboring structure. Table \ref{tab:mae_tvd_comparison} summarizes the mean absolute errors (MAEs) of the sampled interatomic distances and total variation distance compared to the reference molecular data. These results demonstrate that, despite the inherent bias in the training data, PSM successfully debiases molecular samples.  

\begin{table}[h!]
\small
\centering
\caption{Comparison of MAE of interatomic distance and Total Variation Distance (TVD) for different MD17 molecules with sample size $1,000$.}
\label{tab:mae_tvd_comparison}
\begin{tabular}{lccc|ccc}
\toprule
\multirow{2}{*}{\textbf{MD 17}} & \multicolumn{3}{c}{\textbf{MAE of h(r)}} & \multicolumn{3}{c}{\textbf{TVD}} \\
\cmidrule(lr){2-4} \cmidrule(lr){5-7}
& \textbf{DSM} & \textbf{Piecewise} & \textbf{Piecewise Weighted} & \textbf{DSM} & \textbf{Piecewise} & \textbf{Piecewise Weighted} \\
\midrule
Aspirin & 0.1240 & 0.0570 & \textbf{0.0510} & 0.0662 & 0.0373 & \textbf{0.0290} \\
Ethanol & 0.1193 & \textbf{0.0845} & 0.0911 & 0.0663 & \textbf{0.0508} & 0.0501 \\
Malonaldehyde & 0.1003 & 0.0843 & \textbf{0.0630} & 0.0515 & \textbf{0.0492} & 0.0518 \\
Naphthalene & 0.1257 & \textbf{0.0701} & 0.1164 & 0.0716 & \textbf{0.0365} & 0.0592 \\
Salicylic Acid & 0.0799 & \textbf{0.0554} & 0.0584 & 0.0450 & 0.0424 & \textbf{0.0337} \\
Toluene & 0.0970 & \textbf{0.0769} & 0.0948 & 0.0536 & \textbf{0.0428} & 0.0490 \\
Uracil & 0.0747 & 0.0705 & \textbf{0.0560} & 0.0478 & \textbf{0.0421} & 0.0426 \\
\bottomrule
\end{tabular}
\end{table}

\paragraph{MD22 dataset.}
Similarly, Figure \ref{fig:md22} and Table \ref{tab:mae_tvd_comparison_md22} present $h(r)$ and the numerical metrics of the MD22 dataset, showing improved performance with the PSM method. The sample size is $128$.  

\begin{figure}[htbp]
\centering
\begin{subfigure}[b]{0.245\textwidth}
        \centering
        \includegraphics[width=\textwidth]{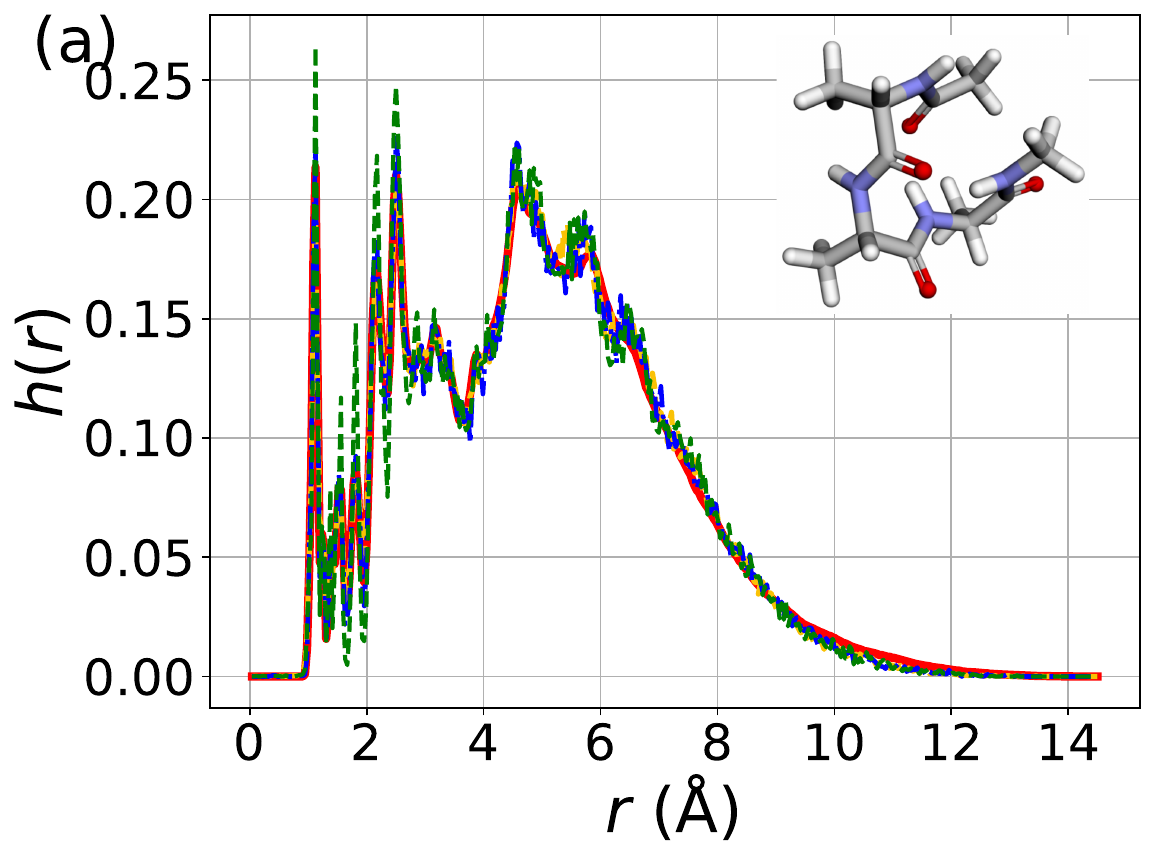}
    \end{subfigure}
    \begin{subfigure}[b]{0.245\textwidth}
        \centering
        \includegraphics[width=\textwidth]{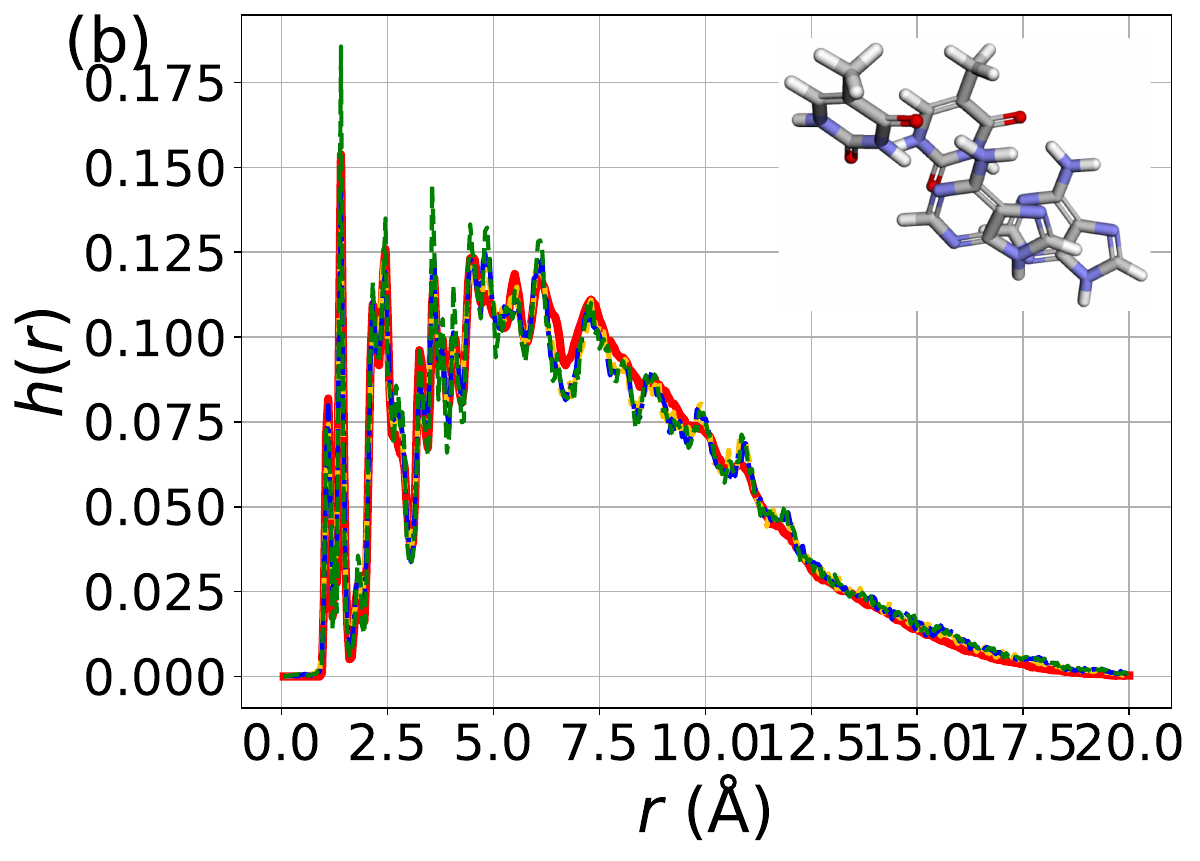}
    \end{subfigure}
    \begin{subfigure}[b]{0.245\textwidth}
        \centering
        \includegraphics[width=\textwidth]{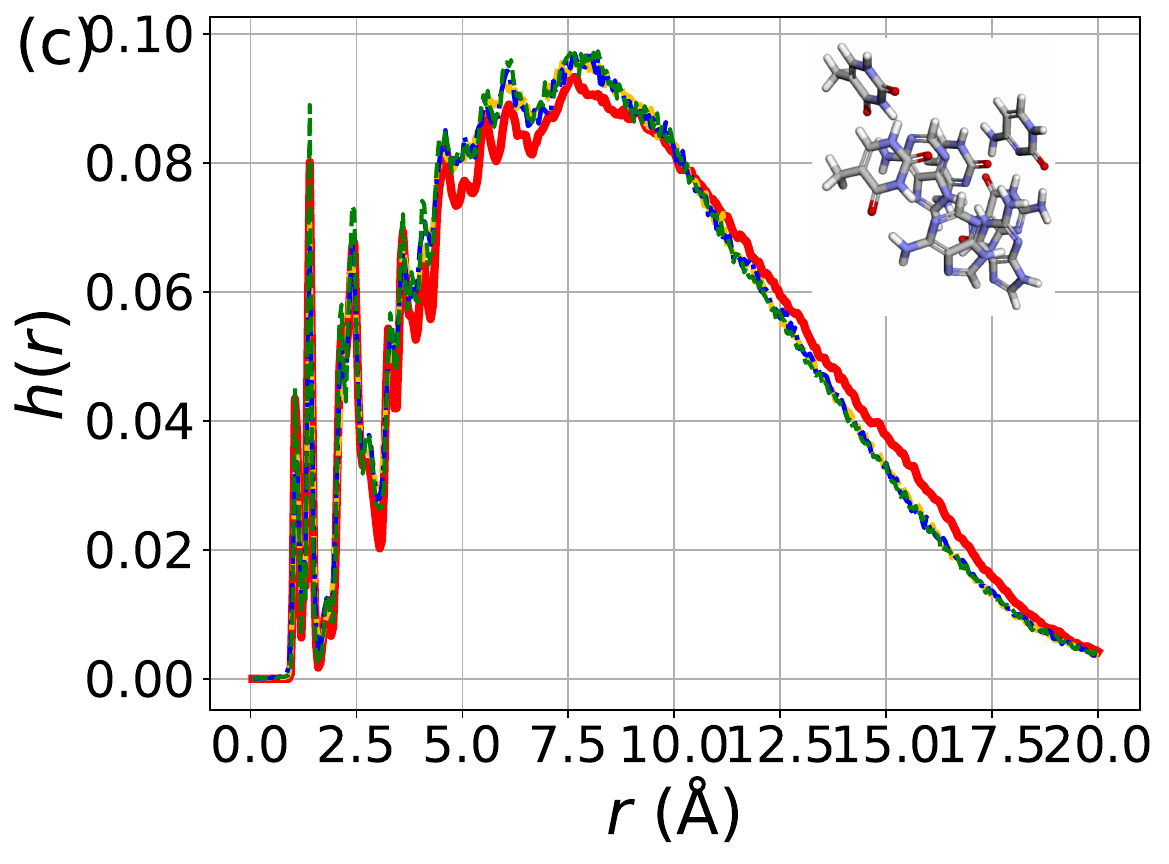}
    \end{subfigure}
    \begin{subfigure}[b]{0.245\textwidth}
        \centering
        \includegraphics[width=\textwidth]{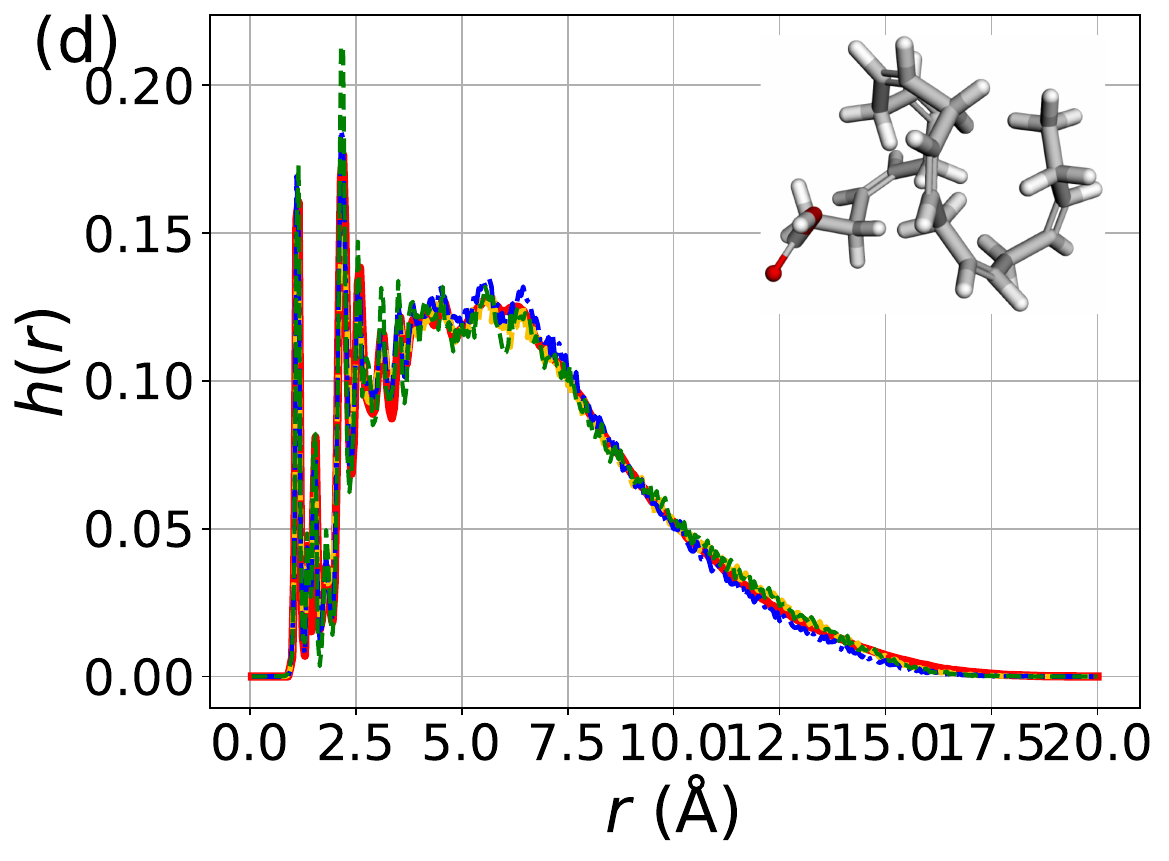}
    \end{subfigure}
    
    \begin{subfigure}[b]{0.245\textwidth}
        \centering
        \includegraphics[width=\textwidth]{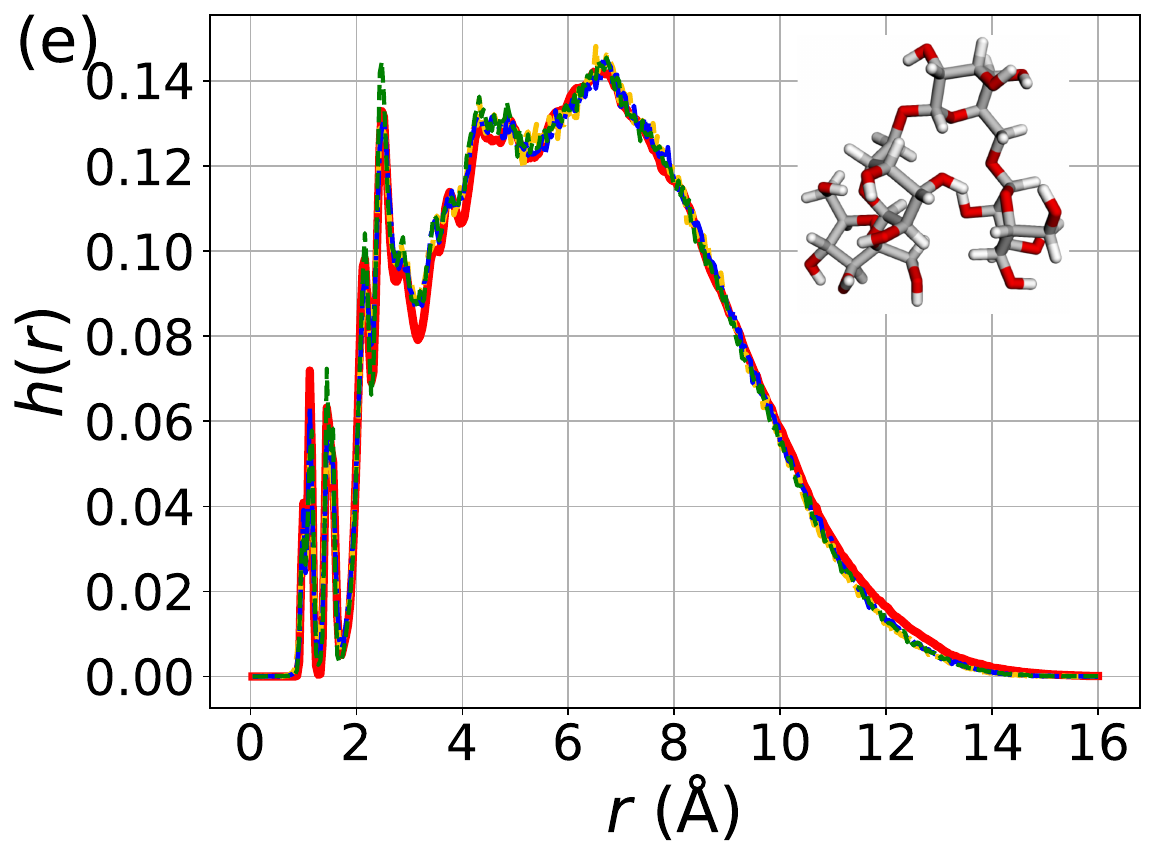}
    \end{subfigure}
    \begin{subfigure}[b]{0.245\textwidth}
        \centering
        \includegraphics[width=\textwidth]{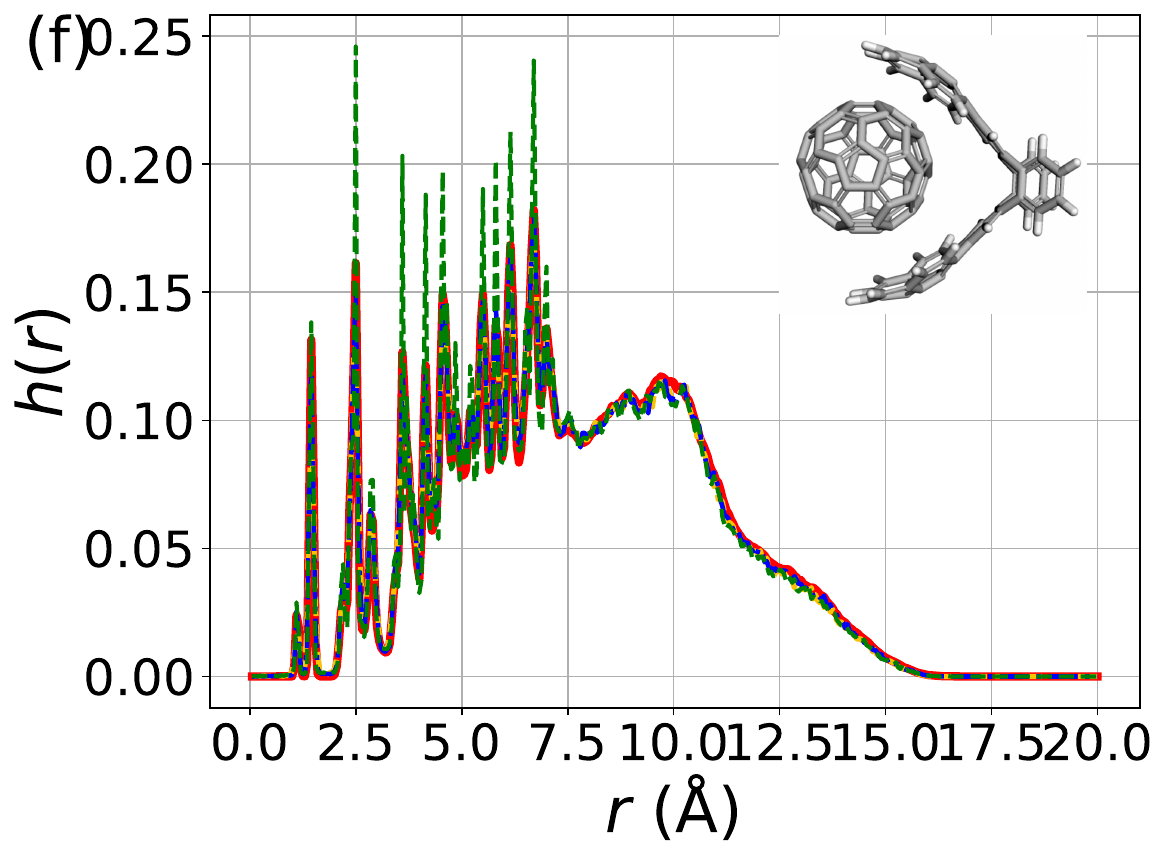}
    \end{subfigure}
    \begin{subfigure}[b]{0.245\textwidth}
        \centering
        \includegraphics[width=\textwidth]{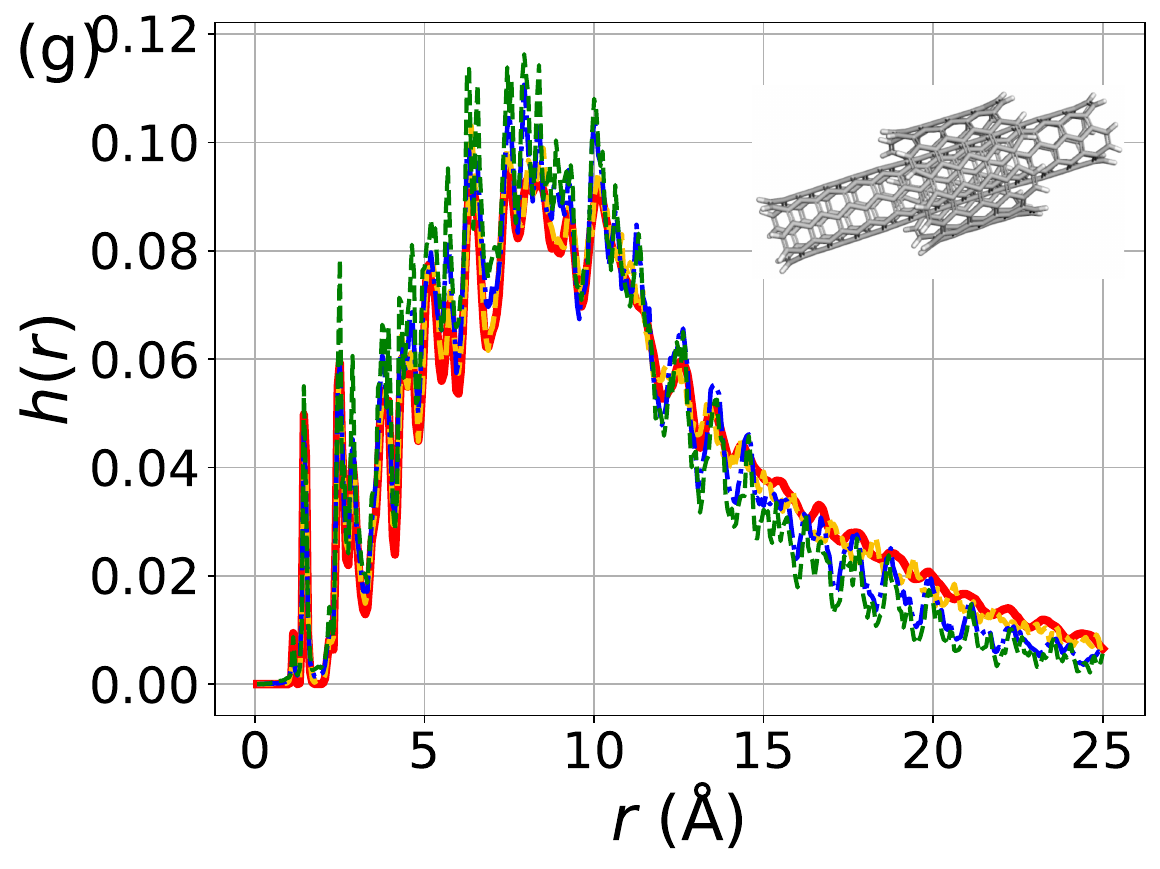}
    \end{subfigure}
    \begin{subfigure}[b]{0.245\textwidth}
        \centering
        \includegraphics[width=\textwidth]{figures/md17_legend_only.png}
    \end{subfigure}

    \caption{The distribution of interatomic distances ($r (\si{\angstrom})$) for (a) Ac-Ala3-NHMe, (b) DNA base pair (AT-AT), (c) DNA base pair (AT-AT-CG-CG), (d) Docosahexaenoic acid, (e) Stachyose, (f) Buckyball catcher, (g) Dw nanotube in MD22 dataset. The insets display the ball-and-stick representations of these molecules.}
    \label{fig:md22}
\end{figure}

\begin{table}[h!]
\small
\centering
\caption{Comparison of MAE of $h(r)$ and Total Variation Distance (TVD) for different MD22 molecules.}
\label{tab:mae_tvd_comparison_md22}
\begin{tabular}{lccc|ccc}
\toprule
\multirow{2}{*}{\textbf{MD 22}} & \multicolumn{3}{c}{\textbf{MAE of h(r)}} & \multicolumn{3}{c}{\textbf{TVD}} \\
\cmidrule(lr){2-4} \cmidrule(lr){5-7}
& \textbf{DSM} & \textbf{Piecewise} & \textbf{Piecewise Weighted} & \textbf{DSM} & \textbf{Piecewise} & \textbf{Piecewise Weighted} \\
\midrule
Ac-Ala3-NHMe      & 0.0873           & 0.0505                 & \textbf{0.0486}                          & 0.0415 & \textbf{0.0224} & 0.0301 \\
Stachyose         & 0.0384           & \textbf{0.0333}        & 0.0424                          & 0.0174 & \textbf{0.0161} & 0.0198 \\
Docosahexaenoic acid & 0.0602        & \textbf{0.0444}        & 0.0518                          & 0.0254 & \textbf{0.0221} & 0.0228 \\
AT-AT             & 0.0767           & 0.0620                 & \textbf{0.0611}                          & 0.0336 & 0.0292 & \textbf{0.0285} \\
AT-AT-CG-CG       & 0.0706           & \textbf{0.0655}        & 0.0677                          & 0.0341 & \textbf{0.0324} & 0.0329 \\
Buckyball catcher & 0.0978           & \textbf{0.0350}        & 0.0567                          & 0.0417 & \textbf{0.0158} & 0.0264 \\
Dw-nanotube & 0.1527 & 0.0919 & \textbf{0.0484} & 0.0946 & 0.0597 & \textbf{0.0256} \\
\bottomrule
\end{tabular}
\end{table}

\paragraph{Debiasing conformations using SOAP descriptors.} In molecular dynamics, to illustrate that PSM extends beyond recovering ensemble averages such as $h(r)$ and achieves debiased results in terms of structural accuracy, we utilize the Smooth Overlap of Atomic Positions (SOAP) descriptor. SOAP describes the local atomic environment around a central atom $\alpha$ as a smoothly varying atomic density function:  
\begin{equation}\label{eq:soap}  
\rho_{\alpha}(\mathbf{r}) = \sum_i f_c(r_{i\alpha}) g(r_{i\alpha}) Y_{lm}(\hat{\mathbf{r}}_{i\alpha}),  
\end{equation}  
where $r_{i\alpha} = |\mathbf{r}_i - \mathbf{r}_\alpha|$ is the distance between the central atom $\alpha$ and its neighbor $i$. The function $f_c(r)$ ensures locality, $g(r)$ is a radial basis function, and $Y_{lm}(\hat{\mathbf{r}})$ are spherical harmonics describing angular dependence. SOAP maintains a high-dimensional feature representation invariant under rotations. To visualize SOAP results, we apply UMAP, a nonlinear dimensionality reduction technique, for intuitive comparison.  

To demonstrate PSM’s capability in debiasing, we construct a deliberately biased training dataset informed by our ablation study (Section~\ref{sec:ablation}), which reveals that the first $1,000$ simulation frames exhibit structural deviation. Specifically, we augment a randomly selected subset (comprising 0.5\% of the full trajectory, approximately $1,000$ frames) with these biased early frames to introduce controlled bias into the training data. We then compare the resulting molecular distributions generated by PSM, DSM, and the reference data using UMAP-projected SOAP features clustered via DBSCAN. As shown in Figure~\ref{fig:umap}, the cluster proportions for the reference, PSM, and DSM are (28.2\%, 71.8\%), (29.4\%, 70.6\%), and (31.0\%, 68.4\%), respectively. The close alignment between PSM and the reference indicates that PSM more effectively captures the underlying atomic environments and debiases the molecular distribution.

\begin{figure}  
    \centering  
\includegraphics[width=0.65\linewidth]{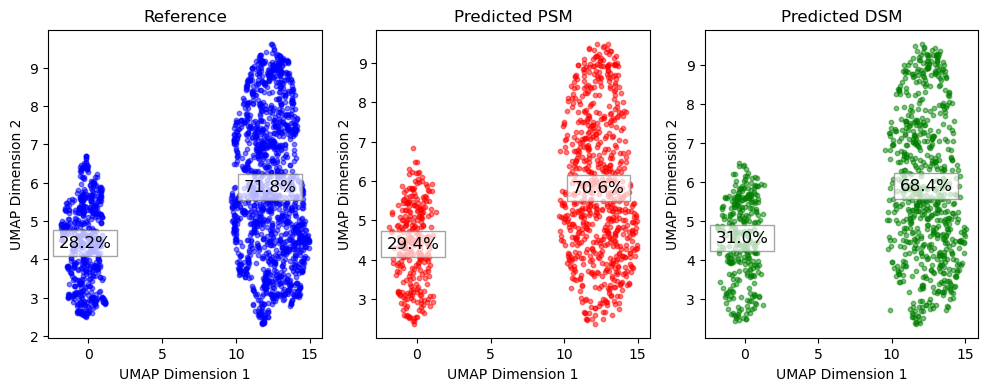}  
    \caption{UMAP of SOAP feature for reference data, PSM, and DSM.}  
    \label{fig:umap}  
\end{figure}

\subsection{Ablation Study}
\label{sec:ablation}
In this section, we conduct ablation experiments on the debias capability of PSM and the amortized MD computation cost to illustrate the efficiency of PSM.
\paragraph{Debias ability.}
First, we investigate the impact of biased versus unbiased training data on the performance of DSM and PSM. To emphasize the effect of debiasing, we conduct experiments on both biased and unbiased datasets. Specifically, we use two different training sets: (1) a biased dataset consisting of the first $1,000$ frames of the trajectory, and (2) an unbiased dataset created by randomly selecting 10\% of the reference data. Figure~\ref{fig:first_part_of_data} (a) confirms that these datasets exhibit distinct bias characteristics.
For this study, we use the ethanol molecule as an example. 

\begin{figure}[th]
\centering
\begin{subfigure}[b]{0.3\textwidth}
        \centering
   \includegraphics[width=\textwidth]{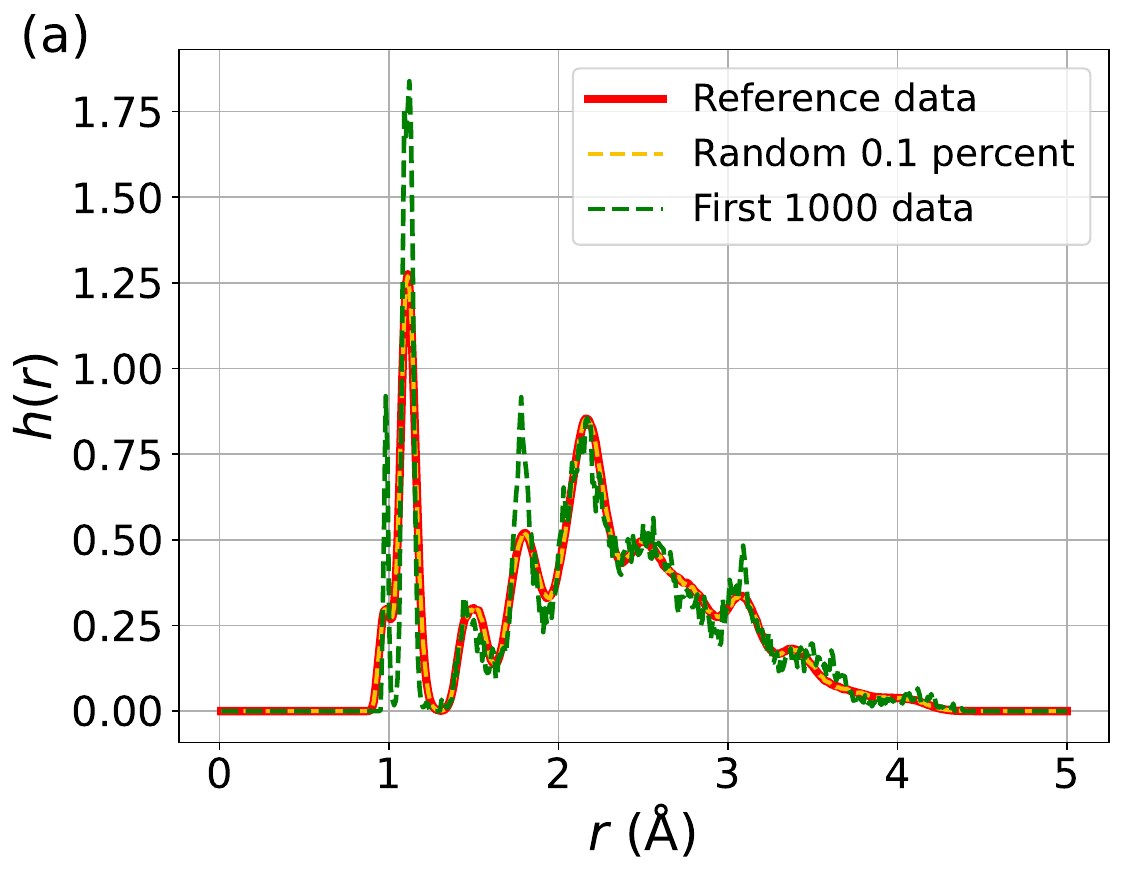}
    \end{subfigure}
    \begin{subfigure}[b]{0.3\textwidth}
        \centering
   \includegraphics[width=\textwidth]{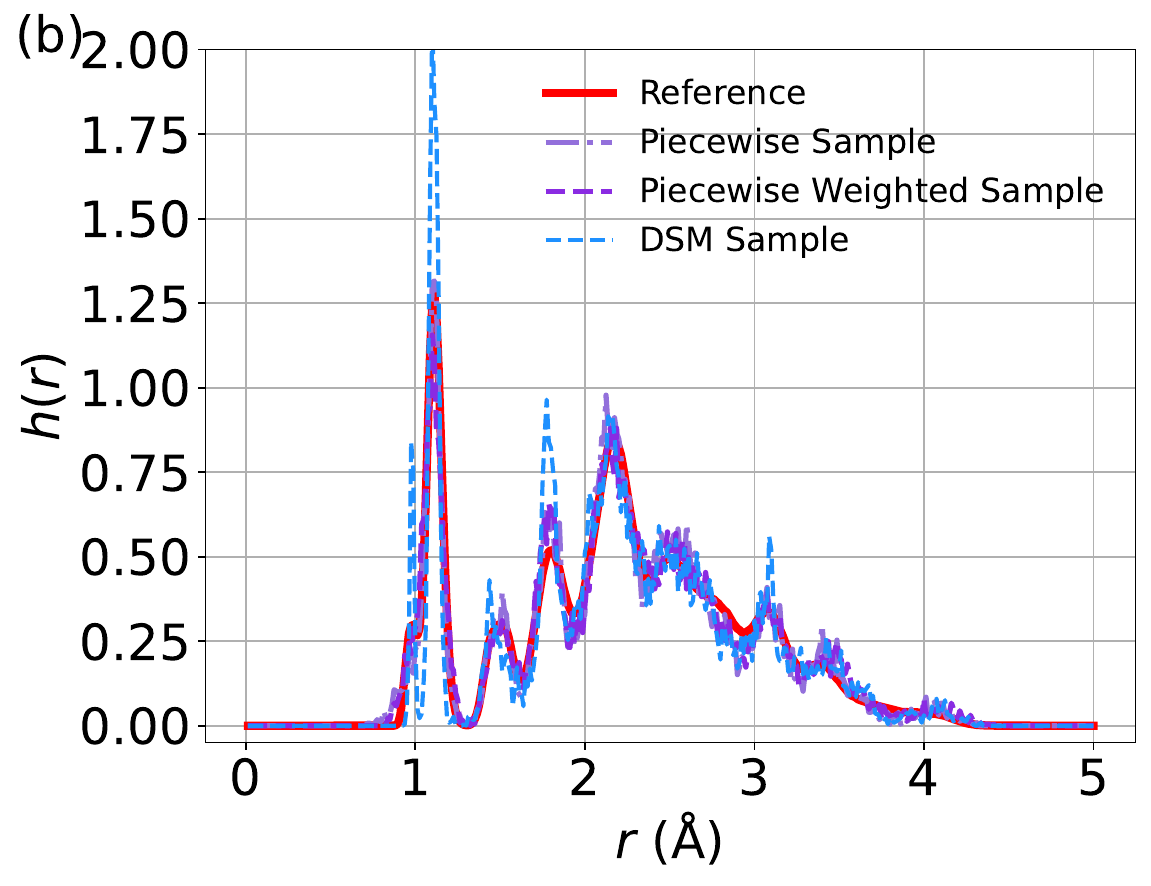}
    \end{subfigure}
    \begin{subfigure}[b]{0.3\textwidth}
        \centering
   \includegraphics[width=\textwidth]{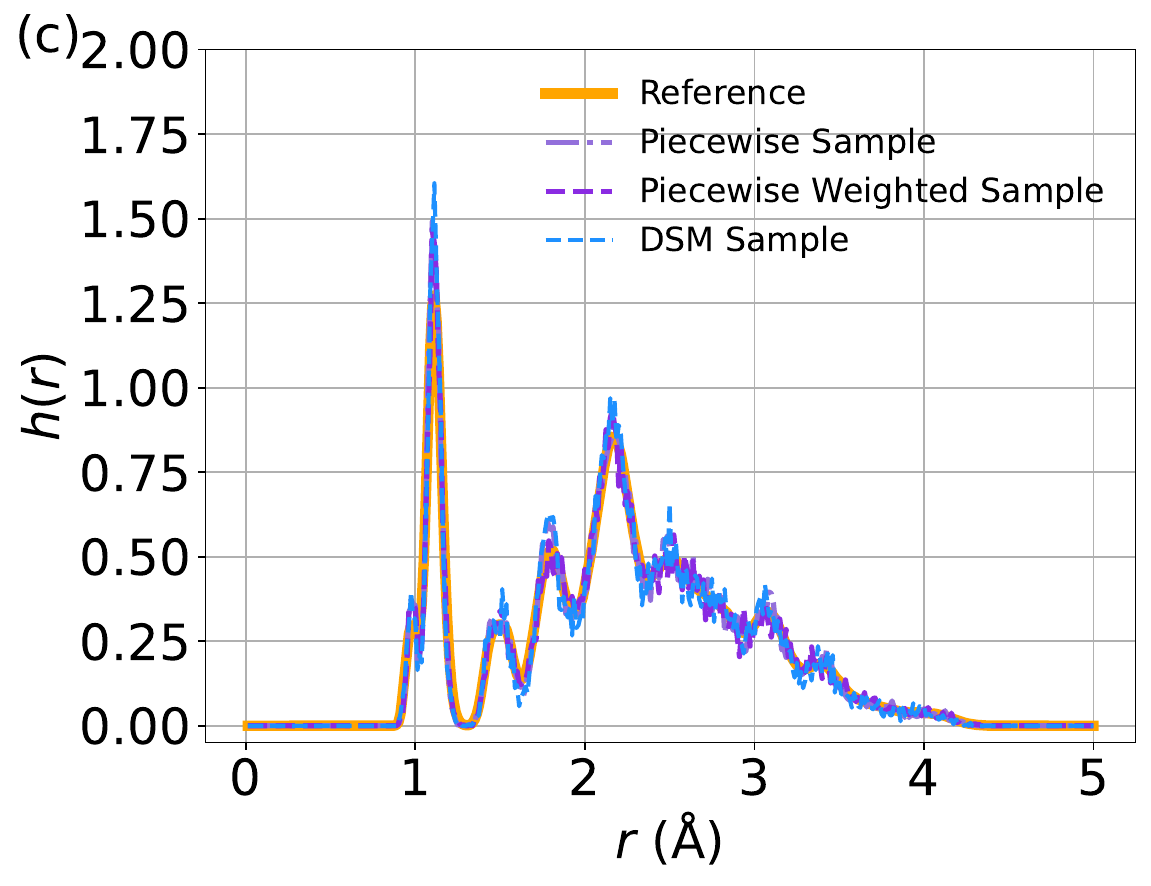}
    \end{subfigure}
    \caption{The distribution of interatomic distance comparisons. (a) Distribution of interatomic distances in the reference dataset, the randomly selected subset, and the first $1,000$ frames. (b) Comparison of DSM, Piecewise, and Piecewise Weighted methods trained on the first $1,000$ frames. (c) Comparison of the three methods when trained on a randomly selected 10\% subset of the reference data.}
    \label{fig:first_part_of_data}
\end{figure}

Figure~\ref{fig:first_part_of_data} (b) compares the sample distributions generated by DSM and PSM when trained on the first 1,000 frames. The results show that DSM mainly reflects the characteristics of the biased training data, whereas PSM successfully corrects the bias, producing samples that better align with the unbiased distribution. In contrast, Figure~\ref{fig:first_part_of_data} (c) demonstrates that when the training data is initially unbiased, both DSM and PSM yield accurate estimates.
These findings underscore the importance of data quality in DSM training and highlight PSM’s capability to generating samples that more accurately approximate the Boltzmann distribution.

\paragraph{Amortized MD computation cost.}
We further compare PSM with molecular dynamics simulations on the aspirin molecule. For MD, we adopt the QuinNet force field \cite{wang2023efficiently}, following the original training protocol.
We measure both the force field training time and the time required to generate sufficiently long, converged trajectories (treated as the sampling time). For PSM, we train the diffusion model for $500$ epochs until convergence and measure the sampling time using the Euler discretization scheme. All experiments are conducted on a single NVIDIA A100 GPU.
Our results show that PSM completes training in approximately 8 hours and requires only 10 minutes to generate 1,000 batches, each consisting of 1,000 time steps. In contrast, training the MD force field to a comparable accuracy takes around 24 hours, while achieving state-of-the-art performance may require up to one week of training. Moreover, the MD simulation phase takes about $31$ hours and $18$ minutes to generate $210,000$ steps—the trajectory length used in the reference MD17 aspirin dataset. These results demonstrate that PSM offers substantially improved sampling efficiency over MD.
\section{Conclusions}
\label{sec:conclusion}
In this study, we introduce Potential Score Matching (PSM), a method designed to mitigate biases in non-Boltzmann distributions by incorporating force labels. Traditional molecular dynamics simulations are computationally expensive. While diffusion models offer an alternative by learning a score function to generate samples that satisfy the training data distribution, obtaining training data that accurately follows the Boltzmann distribution is challenging. PSM relaxes these constraints.
Theoretical analysis indicates that PSM provides training with lower variance and more accurate score estimations in the vicinity of $t = 0$, which allows for a concentrated training effort around this point. This has led to the development of two novel loss formulations: Piecewise and Piecewise Weighted losses.   
Empirical assessments validate the performance of PSM over Denoising Score Matching by effectively debiasing data towards the Boltzmann distribution. Furthermore, our research extends the application of PSM to high-dimensional real-world datasets, such as MD22 and MD17, demonstrating its capability in handling complex molecular modeling challenges.

Additionally, flow matching has proven to be an effective method for generating molecular conformations and predicting properties \cite{lipman2022flow,chen2023riemannian,miller2024flowmm}. Progress in this area \cite{woo2024iterated,domingo2024adjoint} has resulted in unified frameworks that integrate diffusion models and flow matching techniques. Considering the established relation between the velocity field and the score function, as expressed by  $
v(\bm{x}, t) = \frac{\dot{\alpha}_t}{\alpha_t} \bm{x} + \gamma_t \left( \frac{\dot{\alpha}_t}{\alpha_t} \gamma_t - \dot{\gamma}_t \right) s(\bm{x}, t)$, where $\bm{x}_t = \alpha_t \bm{y}_1 + \gamma_t \bm{y}_0 = \alpha_t \bm{x}_0 + \gamma_t \varepsilon$, and $\bm{y}_0$, $\bm{y}_1$ are sampled from the noise and data distributions respectively, our PSM framework is well-positioned for extension to flow matching. This prospective expansion is earmarked for future exploration, while further theoretical insights and formulations for VPSDE and VESDE are delineated in Appendix~\ref{appen:discuss}.

\newpage
\appendix
\section{Preliminary}
\subsection{Diffusion Models}

A diffusion model typically consists of a forward noise addition process and a reverse denoising process. In the forward process, noise is iteratively added to the data \(\bm{x}\) (at \(t=0\)), gradually transforming the distribution into an approximation of a Gaussian distribution at time \(T\), independent of the initial state \(\bm{x}_0\). The reverse process then generates samples that conform to the data distribution by leveraging information obtained from the forward process. 

We introduce the Variance-Preserving Stochastic Differential Equation (VPSDE) and the Variance-Exploding Stochastic Differential Equation (VESDE) \cite{song2020score}, summarized in the following Table \ref{tab:vp_ve}. Here, $\mathbf{z}$ is a random noise, $i\in\{1, \cdots, N\}$ is one of noising steps.
\begin{table}[h]
    \centering
    \caption{VESDE and VPSDE.}
    \label{tab:vp_ve}
    \begin{tabular}{lcc}
        \toprule
        & VESDE & VPSDE \\
        \midrule
        Noise addition & $\mathbf{x}_i =\mathbf{x}_{i-1} + \sqrt{\sigma_i^2 - \sigma_{i-1}^2} \mathbf{z}_{i-1}$ & $\mathbf{x}_i = \alpha_i \mathbf{x}_0 + \sigma_i \mathbf{z}_i$ \\
        SDE Formulation & $\mathrm{d} \mathbf{x}_t = \sigma(t) \mathrm{d} W_t$ & $\mathrm{d} \mathbf{x}_t = -\frac{1}{2} \beta_t \mathbf{x}_t \mathrm{d}t + \sqrt{\beta_t} \mathrm{d} W_t$ \\
        Parameterization & $\sigma(t) = \sigma_{\min} (\sigma_{\max}/\sigma_{\min})^t$ & $\beta_t = \beta_{\min} + (\beta_{\max} - \beta_{\min})t$ \\
        Transition Distribution & $\mathcal{N}\left(\bm{x}_t; \bm{x}_0, \sigma_{\min}^2 (\sigma_{\max}/\sigma_{\min})^{2t} \mathbf{I}\right)$ & $\mathcal{N} \left(\bm{x}_t; \bm{x}_0 e^{-\frac{1}{2}\int_0^t \beta_s ds}, (1-e^{-\int_0^t \beta_s ds}) \bm{I}\right)$ \\
        \bottomrule
    \end{tabular}
\end{table}

We sample from the diffusion model follows a Predictor-Corrector (PC) scheme. The predictor step follows the time-reversed SDE, while the corrector step applies Langevin dynamics to refine the sample. Given a time step \(\Delta t\), the predictor step updates the sample as:
\begin{equation}
    \tilde{\mathbf{x}}_{t-\Delta t} = \mathbf{x}_t - f(\mathbf{x}_t, t) \Delta t + g(t) \sqrt{\Delta t} \mathbf{z}_t,
\end{equation}
where \(\mathbf{z}_t \sim \mathcal{N}(\mathbf{0}, \mathbf{I})\) is Gaussian noise, and \(f\) and \(g\) are drift and diffusion coefficients.
The corrector step refines \(\tilde{\mathbf{x}}_{t-\Delta t}\) using Langevin dynamics $\mathbf{x}_{t-\Delta t} = \tilde{\mathbf{x}}_{t-\Delta t} + \eta \nabla_{\mathbf{x}} \log p_t(\mathbf{x}) + \sqrt{2\eta} \mathbf{z}'$, where \(\eta\) is the step size, and \(\mathbf{z}' \sim \mathcal{N}(\mathbf{0}, \mathbf{I})\). This two-step process improves the quality of generated samples and enhances convergence to the data distribution.

\subsection{Equiformer-v2}
\subsubsection{Equivariance and Irreducible Representation}
\label{appen:irred}

In molecular systems, molecular structures are typically represented as graph, where atomic coordinates, atomic numbers, and bonded connections define a molecular graph. Ensuring properties such as symmetry and equivariance in these graph representations is crucial for capturing molecular interactions accurately.

\paragraph{Graph representation.} 
A molecular graph is defined as $G = (V, E)$, where $V = \{v_1, v_2, \dots, v_{|V|}\}$ represents the set of atoms (nodes), and $E = \{e_{ij} \mid (i, j) \subset V \times V \}$ denotes the set of edges capturing atomic interactions. The number of atoms in a molecule is denoted as $N = |V|$. Each node $v_i \in V$ is characterized by its nuclear charge $z_i$ and 3D coordinate $r_i \in \mathbb{R}^3$. Our goal is to design a generative model that learns to generate molecular distributions while preserving the underlying chemical and spatial properties.

\paragraph{Graph neural networks.} 
Graph Neural Networks (GNNs) are widely used to process graph-structured data by propagating information across nodes and edges. At each iteration $t$, node features $\mathbf{h}_v^{(t)}$ are updated using a message-passing mechanism:
\begin{align}
    \mathbf{m}_{v}^{(t+1)} = \sum_{u \in \mathcal{N}(v)} \phi(\mathbf{h}_v^{(t)}, \mathbf{h}_u^{(t)}, \mathbf{e}_{vu}), \quad
    \mathbf{h}_v^{(t+1)} = \psi(\mathbf{h}_v^{(t)}, \mathbf{m}_{v}^{(t+1)}),
\end{align}
where $\mathcal{N}(v)$ represents the neighboring nodes of $v$, $\mathbf{e}_{vu}$ denotes the edge feature between nodes $v$ and $u$, and $\phi(\cdot)$ and $\psi(\cdot)$ are learnable functions, implemented as networks. 

\paragraph{Equivariance and irreducible representations.} 
Equivariance serves as a fundamental property in neural networks, providing strong prior knowledge that enhances data efficiency in molecular modeling. A function $f: X \to Y$ is equivariant under a transformation group $G$ if, for any input $x \in X$, output $y \in Y$, and group element $g \in G$, $ f\left(D_X(g) x\right) = D_Y(g) f(x)$ holds, where $D_X(g)$ and $D_Y(g)$ are transformation matrices parameterized by $g$ in $X$ and $Y$. In 3D atomistic graphs, molecular structures must maintain equivariance under the Euclidean group $E(3)$, which includes translations, rotations, and reflections.

For the Euclidean group $E(3)$, scalar quantities remain invariant under rotations, whereas vector quantities transform accordingly. To enforce translation symmetry, computations are performed on relative positions. Since rotation and inversion transformations commute, any representation of the special orthogonal group $SO(3)$ can be decomposed into irreducible representations (irreps), which serve as fundamental transformation components. Equivariant neural networks leverage these irreducible representations to construct features that remain equivariant to 3D rotations.

For probabilistic models, the equivariance property extends to probability distributions. A probability distribution $p(\bm{x}_t)$ is equivariant under the special Euclidean group $SE(3)$ if, for any transformation $T_g$ corresponding to $g \in SE(3)$, then $p(\bm{x}_t) = p(T_g(\bm{x}_t))$, and $p(\bm{x}_{t-1} \mid \bm{x}_t) = p(T_g(\bm{x}_{t-1}) \mid T_g(\bm{x}_t))$.
A similar definition applies for $E(3)$.

\subsubsection{Equiformer-v2 Network}
\label{appen:Equiformer}
   
In our research, we harness the capabilities of the Equiformer-v2 network \cite{equiformer_v2} to maintain $SE(3)$/$E(3)$-equivariance, a critical aspect for accurately modeling molecular configurations. This advanced architecture builds on the principles of irreducible representations, ensuring that its operations and features are equivariant and robust.  Other works have also incorporated designs ensuring molecular invariance and equivariance \cite{xu2022geodiff, shi2021learning, Le2022EquivariantGA, musaelian2023learning}. 
   
Building upon the Equiformer, the Equiformer-v2 introduces sophisticated equivariant graph attention mechanisms, supplanting standard Transformer operations with their SE(3)/E(3)-equivariant counterparts and utilizing tensor products for higher-degree representations. Node embeddings are crafted by concatenating channel-dimension features and applying rotations based on relative positions or edge orientations.

To enhance computational efficiency, Equiformer-v2 forgoes separate depth-wise tensor products and linear layers in favor of a single SO(2) linear layer that processes scalar and irreps features distinctly. It encodes edge distance information using radial basis functions, integrating them into the node embeddings.

The attention mechanisms benefit from extra normalization and separable $S^2$ activation for improved stability. The network's feed-forward section employs a two-layer MLP with SiLU activation. The output head either aggregates scalar predictions or computes atom-wise forces, utilizing equivariant graph attention to boost expressiveness and scalability.

\paragraph{ESCN convolution.}
The ESCN convolution refines equivariant tensor products by substituting conventional SO(3) operations with SO(2) linear ones. Typically, SO(3) convolutions combine input irreps features with spherical harmonics of relative positions using Clebsch-Gordan coefficients. ESCN simplifies this by aligning the relative position vector with a fixed axis through rotation, reducing dependencies and limiting interactions to cases where $m_f = 0$, which streamlines computations while preserving equivariance.

\paragraph{Attention re-normalization.}
To maintain stability at higher  $L_{\text{max}}$, we introduce an additional layer normalization step before non-linear transformations, ensuring well-scaled scalar features $f_{ij}^{(0)} $. Attention weights are computed through a leaky ReLU and a linear layer, promoting numerical stability in operations like softmax, which aids in training convergence and model robustness.

\paragraph{Separable $S^2$ activation}
The separable $S^2$ activation processes degree-0 and higher-degree features separately. Degree-0 vectors are partially activated with SiLU, while the remainder is blended with higher-degree vectors for $S^2$ activation. This selective processing minimizes cross-degree interference, stabilizes gradients, and enhances expressiveness, particularly in FFNs.

\paragraph{Separable layer normalization.}
Separable layer normalization (SLN) extends traditional equivariant normalization by independently normalizing degree-0 and higher-degree features. By computing their means and standard deviations separately, SLN preserves inter-degree dynamics, enhancing stability and expressiveness, especially in high $L_{\text{max}}$ scenarios.

Our network architecture is predicated on Equiformer-v2, which yields both one-dimensional energy and three-dimensional force terms. For the $\epsilon$-net output, we utilize only the energy head. Furthermore, we adapt the model to consider time as an additional input feature to align with the diffusion model's time-dependent noise addition.

\section{Derivation of Loss Function}

\subsection{Proof of Theorem \ref{thm: main_loss}}
\label{appen: pf_psm_loss}
\begin{proof}
Assume that $p(\bm{x}_0) \propto e^{-\mathcal{E} / (k_B T)}$, the derivation of the loss function follows from the transformation of scores using the Gaussian transition probability density function and the integration by parts. Since $\nabla_{\bm{x}_t} p(\bm{x}_t | \bm{x}_0) = \frac{1}{\alpha_t} \nabla_{\bm{x}_0} p(\bm{x}_t | \bm{x}_0)$,
\begin{equation}
    \begin{aligned}
    \label{eq:energy_guided}
        \nabla_{\bm{x}_t}\log p(\bm{x}_t) &= \frac{\nabla_{\bm{x}_t} \int p(\bm{x}_0) p(\bm{x}_t| \bm{x}_0) d \bm{x}_0}{p(\bm{x}_t)} = \frac{\int p(\bm{x}_0) \nabla_{\bm{x}_t} \left(e^{-\frac{\|\bm{x}_t - \alpha_t \bm{x}_0\|^2}{2\sigma_t^2}}\right) d \bm{x}_0}{p(\bm{x}_t)} \\ 
        &= \frac{\int p(\bm{x}_0) \left( - \frac{1}{\alpha_t} \nabla_{\bm{x}_0} e^{-\frac{\|\bm{x}_t - \alpha_t\bm{x}_0\|^2}{2\sigma_t^2}}\right) d \bm{x}_0}{p(\bm{x}_t)} = \frac{\int \frac{1}{\alpha_t}\left(\nabla_{\bm{x}_0} p(\bm{x}_0)\right) e^{-\frac{\|\bm{x}_t - \alpha_t\bm{x}_0\|^2}{2\sigma_t^2}} d \bm{x}_0}{p(\bm{x}_t)} \\
        &= \int \frac{1}{\alpha_t}\left(p(\bm{x}_0) \nabla_{\bm{x}_0} \log p(\bm{x}_0)\right) \frac{1}{p(\bm{x}_t)} e^{-\frac{\|\bm{x}_t - \alpha_t\bm{x}_0\|^2}{2\sigma_t^2}} d \bm{x}_0 \\
        &= \int \frac{1}{\alpha_t} p(\bm{x}_0 | \bm{x}_t) \nabla_{\bm{x}_0} \log p(\bm{x}_0) d \bm{x}_0 \\
        &= \frac{1}{\alpha_t} \mathbb{E}_{\bm{x}_0 | \bm{x}_t} \left[\frac{-\nabla_{\bm{x}_0} \mathcal{E}}{k_B T} \right] = \frac{1}{\alpha_t}\mathbb{E}_{\bm{x}_0 | \bm{x}_t} \left[ \frac{\bm{F}}{k_B T} \right]\,,
    \end{aligned}
\end{equation}
where $\mathcal{E}$ is the energy function, and $\bm{F}$ represents the force derived from $-\nabla_{\bm{x}_0} \mathcal{E}$. Specifically, for the VESDE, $\alpha_t = 1$, then $\nabla_{\bm{x}_t}\log p(\bm{x}_t) = \mathbb{E}_{\bm{x}_0 | \bm{x}_t} \left[-\frac{\nabla_{\bm{x}_0} \mathcal{E} }{k_B T}\right]\,.$

For simplicity, let $k_B T=1$. Combining these results, the potential score matching loss can be derived as:
\allowdisplaybreaks
\begin{align}
\label{eq:loss_appendix}
    \mathcal{L}_{\text{score - model}} &= \mathbb{E}_{t\sim U(0,1)} \mathbb{E}_{\bm{x}_t\sim p(\bm{x}_t | \bm{x}_0)}\mathbb{E}_{\bm{x}_0} \| s_{\theta}(\bm{x}_t, t) - (- \frac{1}{\alpha_t}\nabla_{\bm{x}_0} \mathcal{E})\|^2\\
    &= \mathbb{E}_{t\sim U(0,1)} \mathbb{E}_{\bm{x}_t\sim p(\bm{x}_t | \bm{x}_0)}\mathbb{E}_{\bm{x}_0} \| s_{\theta}(\bm{x}_t, t) - \frac{1}{\alpha_t}\bm{F}\|^2.
    \end{align}
\end{proof}

The reason why $\nabla_{\bm{x}_t}\log p(\bm{x}_t) = \mathbb{E}_{\bm{x}_0 | \bm{x}_t} \left[-\nabla_{\bm{x}_0} \mathcal{E} \right]$ (objective \eqref{eq:energy_guided}) is the solution of the loss function  (\eqref{eq:loss_appendix}) lies in Theorem \ref{thm:conditional_expect}. According to the above expansion, label $\nabla_{\bm{x}_t}\log p(\bm{x}_t)$ can be rewritten as 
\begin{equation}
    \begin{aligned}
        \nabla_{\bm{x}_t}\log p(\bm{x}_t) &= \frac{\nabla_{\bm{x}_t} \int p(\bm{x}_0) p(\bm{x}_t|\bm{x}_0) d \bm{x}_0}{p(\bm{x}_t)} =\frac{\int p(\bm{x}_0) (-\frac{\bm{x}_t - \bm{x}_0}{\sigma_t^2})p(\bm{x}_t|\bm{x}_0) d \bm{x}_0}{p(\bm{x}_t)} \\
        &=\int p(\bm{x}_0 | \bm{x}_t) (-\frac{\bm{x}_t - \bm{x}_0}{\sigma_t^2}) d \bm{x}_0 = \mathbb{E}_{\bm{x}_0 | \bm{x}_t} \left[\frac{\bm{x}_0 - \bm{x}_t}{\sigma_t^2} \right]\,,
    \end{aligned}
\end{equation}
which is consistent with the writing of $``\bm{x}_0$-model$"$. Similarly, we can rewrite the loss function with a network representing $\bm{x}_0$.

\begin{equation}
    \begin{aligned}
    \label{eq:x0_label}
      \mathbb{E}_{\bm{x}_0 | \bm{x}_t} \left[\frac{\bm{x}_0 - \bm{x}_t}{\sigma_t^2} + \nabla_{\bm{x}_0}\mathcal{E}(\bm{x}_0) \right] = \mathbb{E}_{\bm{x}_0 | \bm{x}_t} \left[\frac{\bm{x}_0 - \bm{x}_t + \sigma_t^2 \nabla_{\bm{x}_0}\mathcal{E}(\bm{x}_0) }{\sigma_t^2}\right]= \mathbb{E}_{\bm{x}_0 | \bm{x}_t} \left[\frac{\bm{x}_0 - \mathcal{D}_{\theta} }{\sigma_t^2}\right]\,.
    \end{aligned}
\end{equation}
\begin{equation}
    \begin{aligned}
    \label{eq:x0_loss_appen}
        \mathcal{L}_{\bm{x}_0- \text{model}} = \mathbb{E}_{t\sim U(0,1)} \mathbb{E}_{\bm{x}_t\sim p(\bm{x}_t | \bm{x}_0)}\mathbb{E}_{\bm{x}_0}\| \mathcal{D}_{\theta} - \bm{x}_t - \sigma_t^2 \nabla_{\bm{x}_0}\mathcal{E}(\bm{x}_0)\|_2^2\,.
    \end{aligned}
\end{equation}

\begin{theorem}
\label{thm:conditional_expect}
Let $X$ be an integrable random variable. Then for each $\sigma$-algebra $\mathcal{V}$ and $Y \in \mathcal{V}$, $Z = \mathbb{E}(X| \mathcal{V})$ solves the least square problem \cite{evans2012introduction}
    $$\|Z - X\| = \min_{Y\in \mathcal{V}} \| Y - X\|\,,$$
    where $\| Y\| = \left( \int Y^2 dP \right)^{\frac{1}{2}}$.
\end{theorem}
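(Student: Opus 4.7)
The plan is to prove this as the classical orthogonal projection theorem in $L^2$, reducing the minimization problem to a Pythagorean identity. First I would strengthen the hypothesis from integrability to square-integrability, since both sides of $\|Z - X\| = \min_{Y \in \mathcal{V}} \|Y - X\|$ require finiteness of the $L^2$ norm; in particular, $X \in L^2$ forces $Z = \mathbb{E}(X \mid \mathcal{V}) \in L^2$ by the conditional Jensen inequality. Throughout I would restrict the minimization to $\mathcal{V}$-measurable $Y \in L^2$, since any $Y \notin L^2$ trivially gives $\|Y - X\| = \infty$ and cannot be a minimizer.

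The key step is the orthogonal decomposition. For any $\mathcal{V}$-measurable $Y \in L^2$, I would write
\begin{equation}
    Y - X = (Y - Z) + (Z - X),
\end{equation}
and then compute
\begin{equation}
    \|Y - X\|^2 = \|Y - Z\|^2 + \|Z - X\|^2 + 2 \int (Y - Z)(Z - X)\, dP.
\end{equation}
The entire argument rests on showing that the cross term vanishes, i.e., that the residual $Z - X$ is orthogonal in $L^2$ to every $\mathcal{V}$-measurable $L^2$ function $W := Y - Z$. This follows from the defining property of conditional expectation: $\mathbb{E}[(X - Z)\mathbf{1}_A] = 0$ for every $A \in \mathcal{V}$, which extends by linearity to $\mathcal{V}$-measurable simple functions, and then by a standard density/monotone-class argument to all bounded $\mathcal{V}$-measurable functions, and finally to $L^2(\mathcal{V})$ via Cauchy-Schwarz truncation $W_n = W \mathbf{1}_{\{|W| \le n\}}$ with dominated convergence.

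Once orthogonality is established, the conclusion is immediate: $\|Y - X\|^2 = \|Y - Z\|^2 + \|Z - X\|^2 \geq \|Z - X\|^2$, with equality iff $Y = Z$ almost surely, so $Z$ attains the minimum and the minimizer is unique in $L^2$. The main obstacle, and really the only nontrivial point, is justifying the extension of the orthogonality relation $\mathbb{E}[(X - Z) W] = 0$ from indicator functions to all $W \in L^2(\mathcal{V})$; this is a routine measure-theoretic approximation but needs to be spelled out carefully to avoid circularity with the very statement being proved. Everything else is bookkeeping with the Pythagorean identity.
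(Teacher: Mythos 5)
Your proof takes essentially the same route as the paper's: the decomposition $Y - X = (Y - Z) + (Z - X)$, orthogonality of the residual $X - Z$ to $\mathcal{V}$-measurable functions via the defining property of conditional expectation, and the Pythagorean identity. You are more careful than the paper on two points it glosses over --- the need to strengthen integrability to square-integrability for the $L^2$ norms to be finite, and the measure-theoretic extension of the orthogonality relation from indicators to all of $L^2(\mathcal{V})$ --- but the core argument is identical.
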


To prove the theorem, we first introduce a lemma.
\begin{lemma}
\label{lemma1}
If $Y$ is $\mathcal{V}$-measurable, and $f$ is a measurable function in the sense that its domain and codomain are appropriately aligned with the $\sigma$-algebras, then $f(Y)$ will also be $\mathcal{V}$-measurable \cite{evans2018measure}.
\end{lemma}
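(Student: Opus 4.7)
The plan is to prove this by unpacking definitions and applying the standard composition argument for measurable functions. This is essentially a textbook fact from measure theory, so there is no deep obstacle; the main care needed is to keep the $\sigma$-algebras on the source and target spaces straight.

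First I would recall the relevant definitions. Suppose $Y : \Omega \to X$ is $\mathcal{V}$-measurable, meaning that for every Borel set $B \subseteq X$ one has $Y^{-1}(B) \in \mathcal{V}$. Similarly, $f : X \to X'$ being measurable means $f^{-1}(B') \in \mathcal{B}(X)$ for every Borel set $B' \subseteq X'$. The goal is to show that for every Borel set $B' \subseteq X'$, the preimage $(f \circ Y)^{-1}(B')$ lies in $\mathcal{V}$.

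The key identity I would invoke is
\begin{equation}
(f \circ Y)^{-1}(B') \;=\; Y^{-1}\bigl(f^{-1}(B')\bigr),
\end{equation}
which follows directly from the definition of the preimage of a composition. Then, since $f$ is Borel measurable, the set $A := f^{-1}(B')$ belongs to $\mathcal{B}(X)$; and since $Y$ is $\mathcal{V}$-measurable, $Y^{-1}(A) \in \mathcal{V}$. Chaining these two facts gives $(f \circ Y)^{-1}(B') \in \mathcal{V}$, so $f(Y)$ is $\mathcal{V}$-measurable as claimed.

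The only potential subtlety, which I would flag explicitly rather than work around, is the assumption in the lemma that the domain and codomain of $f$ are ``appropriately aligned'' with the relevant $\sigma$-algebras; this just amounts to requiring that the codomain of $Y$ coincides with the domain of $f$ and that Borel $\sigma$-algebras are used on both. Once this alignment is made explicit, the proof reduces to the one-line chain above, so I do not expect any genuine obstacle.
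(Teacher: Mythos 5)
Your proof is correct and takes exactly the same approach as the paper: both rest on the composition identity $(f\circ Y)^{-1}(B') = Y^{-1}\bigl(f^{-1}(B')\bigr)$ and then chain the measurability of $f$ and the $\mathcal{V}$-measurability of $Y$. Your version is, if anything, a slightly tighter write-up of the same argument, with the $\sigma$-algebras named more explicitly.
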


\paragraph{Proof of Theorem \ref{thm:conditional_expect}.}
Let $X$ be an integrable random variable, $\mathcal{V}$ be a $\sigma$-algebra, and $Y \in \mathcal{V}$. We need to show that $Z = \mathbb{E}(X|\mathcal{V})$ minimizes the least square problem $|Z - X| = \min_{Y\in \mathcal{V}} | Y - X|$.

First, recall the property of conditional expectation:
$\mathbb{E}(Z| \mathcal{V}) = Z$ for any $Z \in \mathcal{V}$.
Consider the difference $X - Y$ for any $Y \in \mathcal{V}$. We can write this difference as:

$$X - Y = (X - \mathbb{E}(X| \mathcal{V})) + (\mathbb{E}(X| \mathcal{V}) - Y)$$

Note that $\mathbb{E}(X| \mathcal{V}) \in \mathcal{V}$, so the second term $(\mathbb{E}(X| \mathcal{V}) - Y) \in \mathcal{V}$. Using the property of conditional expectation, we have $\mathbb{E}(\mathbb{E}(X| \mathcal{V}) - Y | \mathcal{V}) = \mathbb{E}(X| \mathcal{V}) - Y$.

Now, let's calculate the conditional expectation of the product of $(X - \mathbb{E}(X| \mathcal{V}))$ and $(\mathbb{E}(X| \mathcal{V}) - Y)$:

$$\mathbb{E}((X - \mathbb{E}(X| \mathcal{V}))(\mathbb{E}(X| \mathcal{V}) - Y)| \mathcal{V}) = \mathbb{E}((X - \mathbb{E}(X| \mathcal{V}))(\mathbb{E}(X| \mathcal{V}) - Y)) = 0$$

The last equality follows from the fact that the product of the two terms is uncorrelated, and their expectation is zero. This implies that $(X - \mathbb{E}(X| \mathcal{V}))$ and $(\mathbb{E}(X| \mathcal{V}) - Y)$ are orthogonal.

Now, we can use the Pythagorean theorem for Hilbert spaces:

$$| X - Y |^{2} = | X - \mathbb{E}(X| \mathcal{V}) |^{2} + | \mathbb{E}(X| \mathcal{V}) - Y |^{2}$$

Notice that the right-hand side is minimized when $| \mathbb{E}(X| \mathcal{V}) - Y |^{2}$ is minimized. This is true because $| X - \mathbb{E}(X| \mathcal{V}) |^{2}$ is constant and non-negative. Therefore, the least square problem is minimized when $Y = \mathbb{E}(X| \mathcal{V})$, i.e., $Z = \mathbb{E}(X| \mathcal{V})$.

Hence, we have proved that $Z = \mathbb{E}(X| \mathcal{V})$ solves the least square problem:

$$|Z - X| = \min_{Y\in \mathcal{V}} | Y - X|\,.$$

\subsection{Relationships Among Three Diffusion Losses}
\label{sec:relationship_three_loss}
In this section, we give the forms of three commonly used losses, $\epsilon$-loss, $s$-loss and $x_0$-loss in the PSM setting.
Assume $\alpha=1$ and $k_B T=1$ for simplicity, since
\begin{equation}
    \begin{aligned}
        \nabla_{\bm{x}_t}\log p(\bm{x}_t) &= \frac{\nabla_{\bm{x}_t} \int p(\bm{x}_0) p(\bm{x}_t|\bm{x}_0) d \bm{x}_0}{p(\bm{x}_t)}\\
        &= \frac{\int p(\bm{x}_0) \left( - \nabla_{\bm{x}_0} e^{-\frac{\|\bm{x}_t - \alpha_t\bm{x}_0\|^2}{2\sigma_t^2}}\right) d \bm{x}_0}{p(\bm{x}_t)} = \int p(\bm{x}_0 | \bm{x}_t) \nabla_{\bm{x}_0} \log p(\bm{x}_0) d \bm{x}_0 = \mathbb{E}_{\bm{x}_0 | \bm{x}_t} \left[\frac{-\nabla_{\bm{x}_0} \mathcal{E}}{k_B T} \right] \quad \text{(a)} \\ 
        & =\frac{\int p(\bm{x}_0) (-\frac{\bm{x}_t - \bm{x}_0}{\sigma_t^2})p(\bm{x}_t|\bm{x}_0) d \bm{x}_0}{p(\bm{x}_t)}=\int p(\bm{x}_0 | \bm{x}_t) (-\frac{\bm{x}_t - \bm{x}_0}{\sigma_t^2}) d \bm{x}_0  = \mathbb{E}_{\bm{x}_0 | \bm{x}_t} \left[\frac{\bm{x}_0 - \bm{x}_t}{\sigma_t^2} \right]  \quad \text{(b)}\,,
    \end{aligned}
\end{equation}
where $\mathcal{E}$ is the energy of $x_0$.
When using a score network $s_\theta$ to learn the gradient of log probability $\nabla_{\bm{x}_t}\log p(\bm{x}_t)$, the $s$-loss can be derived by (a) that
\begin{align}
    \mathcal{L}_{\text{score - model}} &= \mathbb{E}_{t\sim U(0,1)} \mathbb{E}_{\bm{x}_t\sim p(\bm{x}_t | \bm{x}_0)}\mathbb{E}_{\bm{x}_0} \| s_{\theta}(\bm{x}_t, t) - (- \nabla_{\bm{x}_0} \mathcal{E})\|^2.
    \end{align}
and accordingly, the $\epsilon$-loss is 
\begin{align}
    \mathcal{L}_{\text{$\epsilon$ - model}} &= \mathbb{E}_{t\sim U(0,1)} \mathbb{E}_{\bm{x}_t\sim p(\bm{x}_t | \bm{x}_0)}\mathbb{E}_{\bm{x}_0} \| \epsilon_{\theta}(\bm{x}_t, t) - \sigma_t \nabla_{\bm{x}_0} \mathcal{E}\|^2\,,
    \end{align}    
if we take $\sigma_t$, which is independent of $x_t$ and $\theta$, out of the norm. Similarly by (b), the formula $\mathbb{E}_{\bm{x}_0 | \bm{x}_t} \left[\frac{\bm{x}_0 - \bm{x}_t}{\sigma_t^2} + \nabla_{\bm{x}_0}\mathcal{E}(\bm{x}_0) \right]$ is expected to be zero, as the optimal score function satisfies that $\nabla_{\bm{x}_t}\log p(\bm{x}_t) = \mathbb{E}_{\bm{x}_0 | \bm{x}_t} \left[-\nabla_{\bm{x}_0}\mathcal{E}(\bm{x}_0) \right]$.
Since $\mathbb{E}_{\bm{x}_0 | \bm{x}_t} \left[\frac{\bm{x}_0 - \bm{x}_t}{\sigma_t^2} + \nabla_{\bm{x}_0}\mathcal{E}(\bm{x}_0) \right] = \mathbb{E}_{\bm{x}_0 | \bm{x}_t} \left[\frac{\bm{x}_0 - \bm{x}_t + \sigma_t^2 \nabla_{\bm{x}_0}\mathcal{E}(\bm{x}_0) }{\sigma_t^2}\right]$, we can use the loss function with a network representing $\bm{x}_0$, $\mathcal{D}_{\theta}$, to learn the diffusion.
\begin{equation}
    \begin{aligned}
        \mathcal{L}_{\bm{x}_0- \text{model}} = \mathbb{E}_{t\sim U(0,1)} \mathbb{E}_{\bm{x}_t\sim p(\bm{x}_t | \bm{x}_0)}\mathbb{E}_{\bm{x}_0}\| \mathcal{D}_{\theta} - \bm{x}_t - \sigma_t^2 \nabla_{\bm{x}_0}\mathcal{E}(\bm{x}_0)\|_2^2\,.
    \end{aligned}
\end{equation}
Therefore, \( \epsilon \)-loss and \( x_0 \)-loss are different objects, but are essentially derived from the same formula.

\subsection{Proof of Theorem \ref{thm:psm_dsm_distance}}
\label{appen:balance_psm_dsm}
\begin{proof}
Let $I_1$ be the formula on the left side $\mathbb{E}_{q\left(\bm{x}_0 \mid \bm{x}_t\right)}\left[\nabla \log p\left(\bm{x}_0\right)\right] - \mathbb{E}_{p\left(\bm{x}_0 \mid \bm{x}_t\right)}\left[\nabla \log p\left(\bm{x}_0\right)\right]$, and $I_2$ be the right one, $\mathbb{E}_{q\left(\bm{x}_0 \mid \bm{x}_t\right)}\left[\nabla \log q\left(\bm{x}_0\right)\right] - \mathbb{E}_{p\left(\bm{x}_0 \mid \bm{x}_t\right)}\left[\nabla \log p\left(\bm{x}_0\right)\right]$.
\begin{equation}
\begin{aligned}
    I_1 &= \int \nabla \log p\left(\bm{x}_0\right)\left(q\left(\bm{x}_0 \mid \bm{x}_t\right)-p\left(\bm{x}_0 \mid \bm{x}_t\right)\right) d \bm{x}_0\,, \\
    I_2 &= \int\left(\nabla \log q\left(\bm{x}_0\right) q\left(\bm{x}_0 \mid \bm{x}_t\right)-\nabla \log p\left(\bm{x}_0\right) p\left(\bm{x}_0 \mid \bm{x}_t\right)\right) d \bm{x}_0\,,\\
    I_3 &= \int\left(\nabla \log q\left(\bm{x}_0\right)-\nabla \log p\left(\bm{x}_0\right)\right) q\left(\bm{x}_0 \mid \bm{x}_t\right) d \bm{x}_0\,.
\end{aligned}
\end{equation}
Then $I_2 = I_1 + I_3$. We only need to prove that for $t$ near $0$,
\begin{equation}
    \begin{aligned}
   \left\|I_1\right\|_2^2 =&\left\|\int \nabla \log p\left(\bm{x}_0\right)\left(q\left(\bm{x}_0 \mid \bm{x}_t\right)-p\left(\bm{x}_0 \mid \bm{x}_t\right)\right) d \bm{x}_0\right\|_2^2\\
&\leqslant\left\|\int\left(\nabla \log q\left(\bm{x}_0\right) q\left(\bm{x}_0 \mid \bm{x}_t\right)-\nabla \log p\left(\bm{x}_0\right) p\left(\bm{x}_0 \mid \bm{x}_t\right)\right) d \bm{x}_0\right\|_2^2 = \left\|I_2\right\|_2^2\,.
    \end{aligned}
\end{equation}
By $\left\|I_2\right\|_2^2 = \left\|I_3+I_1\right\|_2^2=\left\|I_3\right\|_2^2+\left\|I_1\right\|_2^2+2 I_1^{\top} \cdot I_3$, we need to explain that $ I_3^{\top}\left(I_3+2 I_1\right) \geqslant 0\,.$

For simplicity, we assume the reverse SDE satisfies that $\quad d \bm{x}_{t}=-h(t) \nabla_{\bm{x}_t} \log p\left(\bm{x}_{t}\right) d t+\sqrt{2 h(t)} d \widetilde{W}_{t}$. Here $h(t) = \sigma_{\min}^2 (\frac{\sigma_{\max}}{\sigma_{\min}})^{2t}$, and $\widetilde{W}_{t}$ is a Brownian motion.
For function $f, f: [0, 1]\times\mathbb{R}^d \rightarrow \mathbb{R}$, the Ito formula of $f\left(\bm{x}_{t}\right)$ is expanded as:
$$
\begin{aligned}
d f\left(\bm{x}_{t}\right)= & f^{\prime}\left(\bm{x}_{t}\right) d \bm{x}_{t}+\frac{1}{2} f^{\prime \prime}\left(\bm{x}_{t}\right)\left(d \bm{x}_{t}\right)^{2} \\
= & f^{\prime}\left(\bm{x}_{t}\right)\left[-h(t) \nabla_{\bm{x}_{t}} \log p\left(\bm{x}_{t}\right) d t+\sqrt{2 h(t)} d \tilde{W}_{t}\right] +\frac{1}{2} f^{\prime \prime}\left(\bm{x}_{t}\right) \cdot 2 h(t) d t \\
= &\left[-f^{\prime}\left(\bm{x}_{t}\right) h(t) \nabla_{\bm{x}_{t}} \log p\left(\bm{x}_{t}\right)+f^{\prime \prime}\left(\bm{x}_{t}\right) h(t)\right] d t+\sqrt{2 h(t)} f^{\prime}\left(\bm{x}_{t}\right) d \widetilde{W}_{t}
\end{aligned}
$$

For reverse transition $p\left(\bm{x}_{t} \mid \bm{x}_{s}\right)$, where $t<s$, assume that $p\left(\bm{x}_{t} \mid \bm{x}_{s}\right)$ in the form of $N\left(g\left(\bm{x}_{s}\right), \beta_{t, s}^{2}\right)$, $g(\bm{x}_{s})$ is a function of $\bm{x}_s$, and $\beta_{t, s}$ is a time function related to $t, s$.
Then \begin{align*}
d p\left(\bm{x}_{t} \mid \bm{x}_{s}\right) &=\left[-\frac{\bm{x}_{t}-g\left(\bm{x}_{s}\right)}{\beta_{t,s}^{2}} p\left(\bm{x}_{t} \mid \bm{x}_{s}\right) h(t) \nabla_{\bm{x}_t} \log p\left(\bm{x}_{t}\right)\right. \\
& \left.+\left(-\frac{1}{\beta_{t, s}^{2}} p\left(\bm{x}_{t} \mid \bm{x}_{s}\right)+\left(\frac{\bm{x}_{t}-g\left(\bm{x}_{s}\right)}{\beta_{t, s}^{2}}\right)^{2} p\left(\bm{x}_{t} \mid \bm{x}_{s}\right)\right) h(t)\right] d t +\left(-\frac{\bm{x}_{t}-g\left(\bm{x}_{s}\right)}{\beta_{t, s}^{2}} \sqrt{2 h(t)} d \tilde{W}_{t}\right)\,.
\end{align*}
Therefore, $\frac{\partial p\left(\bm{x}_{t} \mid \bm{x}_{s}\right)}{\partial t}=p\left(\bm{x}_{t} \mid \bm{x}_{s}\right) h(t)\left[-\frac{\bm{x}_{t}-g\left(\bm{x}_{s}\right)}{\beta_{t, s}^{2}} \nabla_{\bm{x}_{t}} \log p\left(\bm{x}_{t}\right)-\frac{1}{\beta_{t, s}^{2}}+\left(\frac{\bm{x}_t-g\left(\bm{x}_{s}\right)}{\beta_{t, s}^{2}}\right)^{2}\right]$, and
$$
\begin{aligned}
p\left(\bm{x}_{0} \mid \bm{x}_{s}\right)&=p\left(\bm{x}_{s} \mid \bm{x}_{s}\right) -\left[p\left(\bm{x}_{0} \mid \bm{x}_{s}\right)-h(0)\left(-\frac{\bm{x}_{0}-g\left(\bm{x}_{s}\right)}{\beta_{0, s}^{2}} \nabla \bm{x}_{0} \log p\left(\bm{x}_{0}\right)-\frac{1}{\beta_{0,s}^{2}}+\left(\frac{\bm{x}_{0}- g\left(\bm{x}_{s}\right)}{\beta_{0,s}^{2}}\right)^{2}\right)\right] \cdot s+O\left(s^{2}\right)\\
& =p\left(\bm{x}_{s}\right) - \left[h(0) p\left(\bm{x}_{0} \mid \bm{x}_{s}\right) \frac{1}{\beta_{0, s}^{2}} \left[\left(g\left(\bm{x}_{s}\right)-\bm{x}_{0}\right) \nabla \bm{x}_{0} \log p\left(\bm{x}_{0}\right)-1 + \left(\frac{\left(\bm{x}_{0}-g\left(\bm{x}_{s}\right)\right.}{\beta_{0,s}}\right)^2\right]\right]\cdot s + O(s^2)\,.
\end{aligned}
$$
 Let $C_{1}\left(\bm{x}_{0}, \bm{x}_{s}\right) = -h(0) p\left(\bm{x}_{0} \mid \bm{x}_{s}\right) \frac{1}{\beta_{0,s}^{2}} \cdot\left(\left(g\left(\bm{x}_{s}\right)-\bm{x}_{0}\right) \nabla_{\bm{x}_{0}} \log p\left(\bm{x}_{0}\right)-1+\left(\frac{\bm{x}_{0}-g\left(\bm{x}_{s}\right)}{\beta_{0, s}}\right)^{2}\right)$, then $p\left(\bm{x}_{0}|\bm{x}_{s}\right)= p\left(\bm{x}_{s}\right)+C_{1}\left(\bm{x}_{0}, \bm{x}_{s}\right) \cdot s+ O\left(s^{2}\right)$.
$$
\begin{aligned}
I_{3}&=\int\left(\nabla \log q\left(\bm{x}_{0}\right)-\nabla \log p\left(\bm{x}_{0}\right)\right)\left[p\left(\bm{x}_{t}\right)+C_{1}\left(\bm{x}_{0}, \bm{x}_{t}\right) t+O\left(t^{2}\right)\right] d \bm{x}_{0} \\
&= p\left(\bm{x}_{t}\right) \int\left(\nabla \log q\left(\bm{x}_{0}\right)-\nabla \log p\left(\bm{x}_{0}\right)\right) d \bm{x}_{0} +t \cdot \int\left(\nabla \log q\left(\bm{x}_{0}\right)-\nabla \log p\left(\bm{x}_{0}\right)\right) \cdot C_{1}\left(\bm{x}_{0}, \bm{x}_{t}\right) d \bm{x}_{0}+O\left(t^{2}\right)\,,
\end{aligned}
$$
where $C_{2}\left(\bm{x}_{0}, \bm{x}_{s}\right)= -h(0) q\left(\bm{x}_{0} \mid \bm{x}_{s}\right) \frac{1}{\beta_{0,s}^{2}} \cdot\left(\left(g\left(\bm{x}_{s}\right)-\bm{x}_{0}\right) \nabla_{\bm{x}_{0}} \log q\left(\bm{x}_{0}\right)-1+\left(\frac{\bm{x}_{0}-g\left(\bm{x}_{s}\right)}{\beta_{0, s}}\right)^{2}\right)$, then $p\left(\bm{x}_{0}|\bm{x}_{s}\right)= p\left(\bm{x}_{s}\right)+C_{1}\left(\bm{x}_{0}, \bm{x}_{s}\right) \cdot s+ O\left(s^{2}\right) $.

We then expand $I_{1}:=\int \nabla_{\bm{x}_0} \log p\left(\bm{x}_{0}\right)\left(q\left(\bm{x}_{0} \mid \bm{x}_{t}\right)-p\left(\bm{x}_{0} \mid \bm{x}_{t}\right)\right) d \bm{x}_{0}$. From above, we know that,
\begin{align*}
    I_{1} &= \int \nabla \log p\left(\bm{x}_{0}\right)\left[q\left(\bm{x}_{t}\right)+C_{2}\left(\bm{x}_{0}, \bm{x}_{t}\right) t -p\left(\bm{x}_{t}\right)-C_{1}\left(\bm{x}_{0}, \bm{x}_{t}\right) t +O\left(t^{2}\right)\right] \\
&=\left(q\left(\bm{x}_{t}\right)-p\left(\bm{x}_{t}\right)\right) \int \nabla \log p\left(\bm{x}_{0}\right) d \bm{x}_{0} +\int \nabla \log p\left(\bm{x}_{0}\right)\left(C_{2}\left(\bm{x}_{0}, \bm{x}_{t}\right)-C_{1}\left(\bm{x}_{0}\right) \bm{x}_{t}\right) d \bm{x}_{0} + O\left(t^{2}\right) \,.
\end{align*}

$$
\begin{aligned}
2 I_{3}^{\top} I_{1}
& =2 p\left(\bm{x}_t\right) \cdot\left(q\left(\bm{x}_t\right)-p\left(\bm{x}_t\right)\right)\left(\int\left(\nabla \log q\left(\bm{x}_0\right)-\nabla \log p\left(\bm{x}_0\right)\right) d \bm{x}_0\right)^{\top}\left(\int \nabla \log p\left(\bm{x}_0\right) d \bm{x}_0\right) \\
& +2 t\left(q\left(\bm{x}_t\right)-p\left(\bm{x}_t\right)\right)\left(\int C_1\left(\bm{x}_0, \bm{x}_t\right)\left(\nabla \log q\left(\bm{x}_0\right)-\nabla \log p\left(\bm{x}_0\right)\right) d \bm{x}_0\right)^{\top} \cdot\left(\int \nabla \log p\left(\bm{x}_0\right) d \bm{x}_0\right) \\
& +2 t \, p\left(\bm{x}_t\right)\left(\int\left(C_2\left(\bm{x}_0, \bm{x}_t\right)-C_1\left(\bm{x}_0, \bm{x}_t\right)\right) \nabla \log p\left(\bm{x}_0\right) d \bm{x}_0\right)^{\top} \cdot \left(\int\left(\nabla \log q\left(\bm{x}_0\right)-\nabla \log p\left(\bm{x}_0\right)\right) d \bm{x}_0\right) \\
& +2 t^2 \cdot\left(\int\left(\nabla \log q\left(\bm{x}_0\right)-\nabla \log p\left(\bm{x}_0\right)\right) \cdot C_1\left(\bm{x}_0, \bm{x}_t\right) d \bm{x}_0\right)^{\top} \cdot \left(\int \nabla \log p\left(\bm{x}_0\right)\left(C_2\left(\bm{x}_0, \bm{x}_t\right)-C_1\left(\bm{x}_0, \bm{x}_t\right)\right) d \bm{x}_0\right)\\
& +O\left(t^3\right)
\end{aligned}
$$

Since $\int \nabla \log p\left(\bm{x}_0\right) d \bm{x}_0 = 0$, 
$$
\begin{aligned}
2 I_{3}^{\top} I_{1}
& =2 t^2 \cdot\left(\int\left(\nabla \log q\left(\bm{x}_0\right)-\nabla \log p\left(\bm{x}_0\right)\right) \cdot C_1\left(\bm{x}_0, \bm{x}_t\right) d \bm{x}_0\right)^{\top} \cdot \left(\int \nabla \log p\left(\bm{x}_0\right)\left(C_2\left(\bm{x}_0, \bm{x}_t\right)-C_1\left(\bm{x}_0, \bm{x}_t\right)\right) d \bm{x}_0\right)\\
& +O\left(t^3\right)
\end{aligned}
$$
Consider the second term of the last equality,
\begin{align*}
C_2\left(\bm{x}_0, \bm{x}_t\right)-C_1\left(\bm{x}_0, \bm{x}_t\right)
& =h(0) \frac{1}{\beta_{0,s}^2}\left[\left(g\left(\bm{x}_s\right)-\bm{x}_0\right)\left(p\left(\bm{x}_0 \mid \bm{x}_s\right) \nabla_{\bm{x}_0} \log p\left(\bm{x}_0\right)-q\left(\bm{x}_0 \mid \bm{x}_s\right) \nabla_{\bm{x}_0} \log q\left(\bm{x}_0\right)\right)\right] \\
& +h(0) \frac{1}{\beta_{0,s}^2}\left[\left(-1+\left(\frac{\bm{x}_0-g\left(\bm{x}_s\right)}{\beta_{0,s}}\right)^2\right)\left(p\left(\bm{x}_0 \mid \bm{x}_s\right)-q\left(\bm{x}_0 \mid \bm{x}_s\right)\right)\right] 
\end{align*}
Then
{\small
\begin{align*}
&\int \nabla \log p\left(\bm{x}_{0}\right)-\left(C_2\left(\bm{x}_0, \bm{x}_t\right)-C_1\left(\bm{x}_0, \bm{x}_t\right)\right) d \bm{x}_0 
=h(0) \cdot \int\left( \frac{\bm{x}_0-g\left(\bm{x}_t\right)}{\beta_{0,t}^2}\right)^2 \cdot \left(p\left(\bm{x}_0 \mid \bm{x}_t\right)-q\left(\bm{x}_0|\bm{x}_t\right) \right)\cdot \nabla \log p\left(\bm{x}_0\right) d \bm{x}_0 \\
&=h(0) \cdot \int-\left(\frac{\bm{x}_0-g\left(\bm{x}_t\right)}{\beta_{0, t}^2}\right)^2 \cdot p\left(\bm{x}_t| \bm{x}_0 \right) \left(\frac{p\left(\bm{x}_0\right)}{p\left(\bm{x}_t\right)}-\frac{q\left(\bm{x}_0\right)}{q\left(\bm{x}_t\right)}\right) \nabla \log p\left(\bm{x}_0\right) d \bm{x}_0\\
& \underset{(i)}{=} O(t) \cdot h(0)\int\left(\frac{\bm{x}_0-g\left(\bm{x}_t\right)}{\beta_{0, t}^2}\right)^2 p\left(\bm{x}_t \mid \bm{x}_0\right) \nabla \log p\left(\bm{x}_0\right) d \bm{x}_0
\end{align*}
}
The $(i)$ above is the assumption $\frac{p\left(\bm{x}_0\right)}{p\left(\bm{x}_t\right)} \approx 1$, which is result from that for $p(\bm{x}_t | \bm{x}_0) = \mathcal{N}(\bm{x}_0, \sigma_t^2 I)$, 
where $\sigma_t = \sigma_{\min} \left( \frac{\sigma_{\max}}{\sigma_{\min}} \right)^t$,
and expanding $p(\bm{x}_t)$ around $\bm{x}_0$ is $p(\bm{x}_t) = p(\bm{x}_0) + \frac{1}{2} \sigma_0^2 \nabla^2 p(\bm{x}_0) t + O(t^2)\,.$ Hence, $\frac{p\left(\bm{x}_0\right)}{p\left(\bm{x}_t\right)} = O(t)$.
Then 
{\small \begin{align*}
2 I_3^{\top} I_1 = 2 t^{3}\left(\int C_1\left(\bm{x}_0, \bm{x}_t\right)(\nabla \log q-\nabla \log p) d \bm{x}_0\right)^{\top} \cdot \left(h(0) \int\left(\frac{\bm{x}_0-g(\bm{x}_t)}{\beta_{0, t}^2}\right)^2 p\left(\bm{x}_t \mid \bm{x}_0\right) \nabla \log p\left(\bm{x}_0\right) d \bm{x}_0\right)\,.
\end{align*}}
Therefore, 
$$
\begin{aligned}
I_{3}^{\top}\left(I_{3}+2 I_{1}\right)
& = I_3^T I_3 + O(t^3) \\
&= \left(\int\left(\nabla \log q\left(\bm{x}_0\right)-\nabla \log p\left(\bm{x}_0\right)\right) q\left(\bm{x}_0 \mid \bm{x}_t\right) d \bm{x}_0\right)^T \left(\int\left(\nabla \log q\left(\bm{x}_0\right)-\nabla \log p\left(\bm{x}_0\right)\right) q\left(\bm{x}_0 \mid \bm{x}_t\right) d \bm{x}_0\right) + O(t^3) \,.
\end{aligned}
$$
If $t$ is small, $q(\bm{x}_0|\bm{x}_t)\approx \delta(\bm{x}_0 - f(\bm{x}_t))$, where $f$ is a function of $\bm{x}_t$. Since $\nabla \log q\left(\bm{x}_0\right) \neq \nabla \log p\left(\bm{x}_0\right)$, then $I_{3}^{\top}\left(I_{3}+2 I_{1}\right)>0$ when $t$ is small.
\end{proof}

\subsection{Singularity and Variance Problem}
\label{appen:singu_variance}
This section explains the role of PSM in the neighborhood of $t=0$ from two aspects: (1) PSM has a smaller training variance near $0$ and the training is more stable; (2) PSM can alleviate the problem that the Lipschitz constant of DSM is too large at $0$.

\paragraph{(1) Variance of PSM.} Let $X$ be a sample from the data distribution, and let $Y$ be a sample from the noisy process $\bm{x}_t = \alpha_t \bm{x}_0 + \sigma_t \bm{\epsilon}$. 

\begin{lemma}
\label{lemma:var_psm}
When $t$ is close to zero, the variance of the DSM target satisfies $\text{Var}_{X|Y}\big(\nabla \log p(Y | X)\big) \gg 0$ if the noise schedule close to $0$ near $t=0$, and the variance of the PSM target $Var_{X|Y}(\nabla\log p(X))$ is smaller than DSM when $t$ near $0$.
\end{lemma}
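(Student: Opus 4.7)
The plan is to work in the VE-SDE setting where $Y = X + \sigma_t\bm{\epsilon}$ with $\bm{\epsilon}\sim\mathcal{N}(0,I)$; the VP case follows after rescaling by $\alpha_t$. In this setting the DSM target is $\nabla_Y\log p(Y\mid X) = -(Y-X)/\sigma_t^2$, while the PSM target is $\nabla_X\log p(X)$, and in both cases the variance in question is taken over $X\sim p(X\mid Y)$ with $Y$ held fixed.

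First I would evaluate the DSM variance by pulling out the deterministic factor:
\[
\text{Var}_{X\mid Y}\bigl(\nabla_Y\log p(Y\mid X)\bigr) = \sigma_t^{-4}\,\text{Var}(X\mid Y).
\]
To control $\text{Var}(X\mid Y)$ as $\sigma_t \to 0$, I would perform a Laplace-style expansion of the posterior $p(X\mid Y) \propto p(X)\exp\bigl(-\|Y-X\|^2/(2\sigma_t^2)\bigr)$. Taylor-expanding $\log p(X)$ around $Y$, the mode of the log-posterior lies at $Y + \sigma_t^2\nabla\log p(Y) + O(\sigma_t^4)$ (Tweedie's identity) and the Hessian at the mode is dominated by $-\sigma_t^{-2} I$, so the posterior is quasi-Gaussian with covariance $\sigma_t^2 I + O(\sigma_t^4)$. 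Consequently the DSM-target variance scales like $\sigma_t^{-2} I + O(1)$, which diverges as $\sigma_t \to 0$.

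Next, for the PSM target, I would Taylor-expand the score around $Y$:
\[
\nabla\log p(X) = \nabla\log p(Y) + H(Y)(X-Y) + O(\|X-Y\|^2), \qquad H(Y) := \nabla^2\log p(Y).
\]
Since $\nabla\log p(Y)$ is deterministic given $Y$, conditioning and invoking the posterior-variance estimate above yields
\[
\text{Var}_{X\mid Y}\bigl(\nabla\log p(X)\bigr) = H(Y)\,\text{Var}(X\mid Y)\,H(Y)^\top + O(\sigma_t^4) = \sigma_t^2\,H(Y)H(Y)^\top + O(\sigma_t^4),
\]
which vanishes as $\sigma_t \to 0$. Comparing the two scalings, $\sigma_t^{-2}$ against $\sigma_t^2$, gives both assertions of the lemma for $t$ sufficiently close to $0$.

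The main obstacle I expect is making the Laplace-posterior expansion rigorous enough to produce a controlled remainder in the posterior covariance: one must assume $p$ is smooth enough (for instance $p \in C^3$ with $\nabla^2\log p$ bounded on the region of interest) so that Taylor remainders stay integrable against the posterior and tails of $p$ do not spoil the second-moment expansion. A secondary step is justifying the Taylor expansion of the PSM target uniformly in $Y$ so that the $O(\|X-Y\|^2)$ remainder can be integrated against the posterior. Extending the proof from VE-SDE to VP-SDE only requires replacing $Y = X + \sigma_t \bm{\epsilon}$ by $Y = \alpha_t X + \sigma_t \bm{\epsilon}$ and tracking the bounded prefactor $1/\alpha_t$ present in Theorem~\ref{thm: main_loss}, which does not alter the small-$t$ scaling.
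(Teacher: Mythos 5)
Your argument is correct, but it takes a genuinely different route from the paper's. The paper proves the lemma by restricting to $p(X)=\mathcal{N}(\mu,\sigma_{\mathrm{tar}}^2)$ (motivated only by the informal remark that the data density can be approximated by a Gaussian mixture), computing the posterior $p(X\mid Y)$ in closed form as a Gaussian with variance $\sigma_t^2\sigma_{\mathrm{tar}}^2/(\alpha_t^2\sigma_{\mathrm{tar}}^2+\sigma_t^2)$, and then reading off the two target variances per coordinate as $\alpha_t^2(\sigma_{\mathrm{tar}}/\sigma_t)^2/(\alpha_t^2\sigma_{\mathrm{tar}}^2+\sigma_t^2)$ for DSM and (including the $1/\alpha_t$ prefactor from Theorem~\ref{thm: main_loss}) $\alpha_t^{-2}(\sigma_t/\sigma_{\mathrm{tar}})^2/(\alpha_t^2\sigma_{\mathrm{tar}}^2+\sigma_t^2)$ for PSM, whose ratio $(\alpha_t^2\sigma_{\mathrm{tar}}^2/\sigma_t^2)^2\to\infty$ as $\sigma_t\to 0$ gives the claim. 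You instead keep $p$ generic and extract the same scaling from a Laplace expansion of the posterior $p(X\mid Y)\propto p(X)\exp\bigl(-\|Y-X\|^2/(2\sigma_t^2)\bigr)$: the posterior Hessian $\nabla^2\log p - \sigma_t^{-2}I$ gives $\mathrm{Var}(X\mid Y)=\sigma_t^2 I + O(\sigma_t^4)$ under a bounded-Hessian assumption, so the DSM target picks up the $\sigma_t^{-4}$ prefactor and diverges, while the PSM target, after Taylor-expanding $\nabla\log p$ around $Y$, is governed by $H(Y)\,\mathrm{Var}(X\mid Y)\,H(Y)^\top\sim\sigma_t^2$ and vanishes. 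The two arguments agree on the Gaussian special case --- your $\sigma_t^2 I+\sigma_t^4 H+O(\sigma_t^6)$ is just the small-$\sigma_t$ expansion of the paper's closed-form posterior variance --- but your route buys generality: it shows that smoothness of $\nabla\log p$ (bounded Hessian), not Gaussianity of the data, is what drives the gap. The cost is the regularity bookkeeping you already flag (integrability of the Laplace and Taylor remainders against the posterior, tail control of $p$), which the paper avoids by fiat. Both land on the same essential comparison: $\sigma_t^{-2}$ growth of the DSM target variance against $\sigma_t^{+2}$ decay of the PSM target variance near $t=0$.
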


\begin{proof}
Let $X$ is the sample from the data distribution and $Y$ is the noised data, $Y=\alpha X+\sigma_t \varepsilon$, we can prove the variance of PSM target $Var_{X|Y}(\nabla\log p(X))$ is smaller than the DSM target $Var_{X|Y}(\nabla\log p(y|x))$ near $t=0$ while larger when $t$ is larger. The proof strategy here is similar to that of TSM \cite{debortoli2024targetscorematching}; however, their work only considers the Gaussian case, whereas we address more complex distributions. Nevertheless, the corresponding Boltzmann distribution or data distribution can be approximated by mixture Gaussian distribution, and thus it suffices to establish the desired properties under the Gaussian setting for simplicity. Let $d$ be the dimension of the problem. Assume that $p_X(x)= \mathcal{N}\left(\mu, \sigma_{\text {tar}}^2\right)$, and the conditional probability
\begin{equation}
\begin{aligned}
p_{Y \mid X}(y \mid x)= \mathcal{N}\left(\alpha_t x, \sigma_t^2\right)\,.
\end{aligned}
\end{equation}
By convolution, $p_Y(y)=\mathcal{N}\left(\alpha_t \mathbb{E}(X), \alpha_t^2 \sigma_{\operatorname{tar}}^2+\sigma_t^2\right)$. Then, by Bayes' formula, $p_{X \mid Y}(x \mid y)=\frac{P_X(x) \cdot p_{Y \mid X}(y)}{P_Y(y)} \propto \frac{\mathcal{N}\left(\mu^{(1)}, \sigma^{(1)^2}\right)}{P_Y(y)}\,.$

\allowdisplaybreaks
\begin{align*}
        p_X(x) \cdot p_{Y \mid X}(y \mid x) &\propto \exp \left\{-\frac{(x-\mu)^2}{2 \sigma_{\text{tar}}^2}-\frac{(y-\alpha_t x)^2}{2 \sigma_t^2}\right\} \\
& =\exp \left\{\frac{1}{2 \sigma_t^2 \sigma_{\text {tar }}^2}\left[\sigma_t^2\left(x^2+\mu^2-\mu x\right)+\sigma_{\text {tar }}^2\left(y^2+\alpha_t^2 x^2-2 \alpha_t x y\right)\right]\right\} \\
& =\exp \left\{\frac{1}{2 \sigma_t^2 \sigma_{\operatorname{tar}}^2}\left[\left(\alpha_t^2 \sigma_{\operatorname{tar}}^2+\sigma_t^2\right) x^2-2\left(\mu \sigma_t^2+\alpha_t y \sigma_{\operatorname{tar}}^2\right) \bm{x}+\mu^2 \sigma_t^2+y^2 \sigma_{ta r}^2\right]\right.. \\
    \end{align*}
Hence, $\sigma^{(1)^2} =\frac{\sigma_t^2 \sigma_{\text {tar}}^2}{\alpha_t^2 \sigma_{\text {tar}}^2+\sigma_t^2}$, and 
\begin{align*}          
p_{X \mid Y}(x \mid y) & =\frac{\mathcal{N}\left(\mu^{(1)}, \sigma^{(1))^2}\right)}{P_Y(y)}=\exp \left\{-\frac{\left(x-\mu^{(1)}\right)^2}{2 \sigma^{(1)^2}}+\frac{(y-\alpha_t \mathbb{E}(x))^2}{2\left(\sigma_t^2+\alpha_t \sigma_{t a r}^2\right)}\right\} \\
& =\exp \left\{-\frac{1}{2 (\sigma^{(1)})^2}\left[\left(x-\mu^{(1)}\right)^2-\frac{\sigma^{(1))^2}}{\sigma_t^2+\alpha_t \sigma_{\operatorname{tar}}^2}(y-\alpha_t \mathbb{E}(x))^2\right]\right\} \\
& =\exp \left\{-\frac{1}{2 (\sigma^{(1)})^2}\left[\left(x-C\left(\mu^{(1)}, \sigma^{(1)}, \mu, \alpha_t, \sigma_{\operatorname{tar}}^2, \mathbb{E} (x) \right)\right)^2\right]\right\}\,,
\end{align*}
where $C(\cdot)$ is a constant decided by $\cdot$.
Therefore, $\operatorname{Var_{X|Y}}(x)=\sigma^{(1)^2}=\frac{\sigma_t^2 \sigma_{\operatorname{tar}}^2}{\alpha_t^2 \sigma_{\operatorname{tar}}^2+\sigma_t^2}$. Therefore,
\begin{equation}
    \begin{aligned}
\sum_{i=1}^d \operatorname{Var}_{X \mid Y}\left(\nabla_i \log p_{Y \mid X}(y \mid X=x)\right) &=\sum_{i=1}^d \operatorname{Var}_{X|Y} \left(\frac{\alpha_t X-y}{\sigma_t{ }^2}\right) =\sum_{i=1}^d \frac{\alpha_t^2}{\sigma_t^4} \operatorname{Var}_{X\mid Y}(X)\\
& =\sum_{i=1}^d \alpha_t \frac{\alpha_t^2}{\sigma_t^2} \frac{\sigma_{\text {tar}}^2}{\alpha_t^2 \sigma_{\text {tar }}^2+\sigma_t^2} = d \alpha_t^2\left(\frac{\sigma_{\operatorname{tar}}}{\sigma_t}\right)^2 \frac{1}{\alpha_t^2 \sigma_{\operatorname{tar}}^2+\sigma_t^2}\,,
    \end{aligned}
\end{equation}
and
\begin{equation}
    \begin{aligned}
\sum_{i=1}^d \operatorname{Var}_{X \mid Y}\left(\nabla_i \log p(x)\right)=\sum_{i=1}^d \operatorname{Var}_{X|Y}\left(\frac{x- \mu}{\sigma_{tar}^2}\right) = \frac{d}{\alpha_t^2} \left(\frac{\sigma_{t}}{\sigma_{\operatorname{tar}}}\right)^2 \frac{1}{\alpha_t^2 \sigma_{\operatorname{tar}}^2+\sigma_t^2}\,.
    \end{aligned}
\end{equation}

\end{proof}

\paragraph{(2) The Lipschitz problem of DSM.}
\begin{lemma}
\label{thm:lipschitz}
Given the forward process $\bm{x}_t = \alpha_t \bm{x}_0 + \sigma_t \bm{\epsilon}$, if the noise schedule satisfies $\sigma_t \to 0$ as $t \to 0$, then the Lipschitz constant of the neural network $\epsilon_{\theta}$ satisfies $ \lim_{t\rightarrow 0} \sup \left\| \frac{\partial \epsilon_{\theta}(\bm{x}_t, t)}{\partial t} \right\|_2 \to \infty$.
\end{lemma}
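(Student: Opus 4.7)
The plan is to reduce the claim to the behavior of the Bayes-optimal $\epsilon$-predictor near $t=0$ and then exhibit an explicit singular scaling by differentiating its closed form. By the standard minimizer of the denoising objective \eqref{eq:eps_loss}, the optimal predictor is $\epsilon_\theta^\star(\bm{x}_t,t)=\mathbb{E}[\bm{\epsilon}\mid\bm{x}_t]=(\bm{x}_t-\alpha_t\hat{\bm{x}}_0(\bm{x}_t,t))/\sigma_t$, where $\hat{\bm{x}}_0(\bm{x}_t,t):=\mathbb{E}[\bm{x}_0\mid\bm{x}_t]$ is a smooth Gaussian-weighted posterior mean via Tweedie's formula. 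Since $\epsilon_\theta$ is trained to approximate $\epsilon_\theta^\star$, it suffices to exhibit a single $\bm{y}$ at which $\|\partial_t \epsilon_\theta^\star(\bm{y},t)\|_2\to\infty$.

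First I would differentiate the closed form at a fixed spatial point $\bm{y}$ to obtain
\begin{equation}
\frac{\partial \epsilon_\theta^\star}{\partial t}(\bm{y},t)=-\frac{\dot\sigma_t}{\sigma_t^{2}}\bigl(\bm{y}-\alpha_t\hat{\bm{x}}_0\bigr)-\frac{\dot\alpha_t\hat{\bm{x}}_0+\alpha_t\,\partial_t\hat{\bm{x}}_0}{\sigma_t}.
\end{equation}
The crux is the prefactor $|\dot\sigma_t|/\sigma_t^{2}$ in the first term. Under the hypothesis $\sigma_t\to 0$, this ratio diverges for each canonical schedule from Section~\ref{sec:score_based_model}: for VE with $\sigma_t=\sigma_{\min}(\sigma_{\max}/\sigma_{\min})^{t}$ taken so that $\sigma_0=0$, one finds $\dot\sigma_t/\sigma_t^{2}\asymp 1/\sigma_t$, while for VP the small-$t$ expansion $\sigma_t\sim\sqrt{\beta_0 t}$ gives $\dot\sigma_t/\sigma_t^{2}\asymp t^{-3/2}$. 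Choosing $\bm{y}$ outside a small neighborhood of the data support ensures $\|\bm{y}-\alpha_t\hat{\bm{x}}_0\|$ is bounded below by a strictly positive constant uniformly in $t$ near zero, so the first term blows up along this sequence.

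The final step is to verify that the remaining two terms cannot cancel the divergence. On compact sets, $\hat{\bm{x}}_0(\bm{y},t)$ stays bounded, and $\partial_t\hat{\bm{x}}_0(\bm{y},t)$ is $O(1)$ because it is the time derivative of a ratio of Gaussian-weighted integrals against $p_0$ whose denominator is bounded below on the compact set; a dominated-convergence argument gives uniform control. Thus the last two terms are at most of order $1/\sigma_t$, strictly dominated by the first term of order $|\dot\sigma_t|/\sigma_t^{2}$. Taking the supremum over $\bm{x}_t$ then delivers $\sup_{\bm{x}_t}\|\partial_t\epsilon_\theta(\bm{x}_t,t)\|_2\geq \|\partial_t\epsilon_\theta^\star(\bm{y},t)\|_2\to\infty$ as $t\to 0^+$.

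The principal obstacle is the passage from the Bayes-optimal predictor to the trained network: since a finite-capacity neural net cannot match $\epsilon_\theta^\star$ pointwise, one must either argue at the level of the optimum (as in \cite{yang2023lipschitz}) and treat $\epsilon_\theta$ as its $L^2(q_t)$-approximant, or instead invert the direction: assume for contradiction that $\sup_{\bm{x}_t}\|\partial_t\epsilon_\theta\|_2$ remains bounded near $t=0$ and show that the denoising loss is then bounded below by a positive quantity, contradicting successful training. A secondary subtlety is the mild regularity needed on $p_0$ so that $\hat{\bm{x}}_0$ is $C^1$ in $t$; this is automatic for continuous bounded densities and for finite mixtures of Diracs, which cover the molecular data regime considered in the paper.
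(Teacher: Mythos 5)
Your derivation takes essentially the same route as the paper's: both differentiate the (approximately) Bayes-optimal $\epsilon$-predictor in $t$ and isolate the singular term driven by $\dot\sigma_t$. Via Tweedie's identity $\nabla_{\bm{x}_t}\log p_t(\bm{x}_t) = -(\bm{x}_t-\alpha_t\hat{\bm{x}}_0)/\sigma_t^2$, your leading term $-(\dot\sigma_t/\sigma_t^2)(\bm{y}-\alpha_t\hat{\bm{x}}_0)$ is exactly the paper's $\dot\sigma_t\,\nabla_{\bm{x}_t}\log p(\bm{x}_t)$ up to sign, and your bookkeeping is arguably cleaner because you keep the posterior mean explicit rather than asserting that $\nabla_{\bm{x}_t}\log p(\bm{x}_t)$ is smooth while simultaneously extracting a $1/\sigma_t$ scaling from it, as the paper does.

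One slip worth correcting: the claim that $|\dot\sigma_t|/\sigma_t^2$ "diverges for each canonical schedule" and that it "strictly dominates" the remaining $O(1/\sigma_t)$ terms holds for VP, where $\dot\sigma_t/\sigma_t^2\asymp t^{-3/2}$ against $1/\sigma_t\asymp t^{-1/2}$, but fails for VE. There $\dot\sigma_t=\sigma_t\ln(\sigma_{\max}/\sigma_{\min})$, so $\dot\sigma_t/\sigma_t^2=\ln(\sigma_{\max}/\sigma_{\min})/\sigma_t$ is of exactly the same $1/\sigma_t$ order as the other terms; and for the paper's VE parameterization with $\sigma_{\min}>0$ one has $\sigma_t\to\sigma_{\min}\neq 0$ as $t\to0$, so the lemma's hypothesis is not met and no divergence in $t$ occurs at all. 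The blow-up in the VE case is in the separate limit $\sigma_{\min}\to 0$ (which also breaks the $\sigma_t=\sigma_{\min}(\sigma_{\max}/\sigma_{\min})^t$ parameterization), a caveat the paper itself concedes but which your proof papers over by writing "taken so that $\sigma_0=0$." The conclusion is rigorous for VP; for VE both your proof and the paper's only establish the weaker statement that the Lipschitz constant scales with $\ln(\sigma_{\max}/\sigma_{\min})$.
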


\begin{proof}
By $\epsilon_\theta =\sigma_t \cdot s_\theta \approx \sigma_t \cdot \nabla_x \log p\left(\bm{x}_t\right)$ and the chain rule, $\frac{\partial \epsilon_\theta}{\partial t} =-\frac{d \sigma_t}{d t} \nabla_x \log p\left(\bm{x}_t\right)-\frac{\partial \nabla_x \log p\left(\bm{x}_t\right)}{\partial t} \sigma_t .$
We assume that the $\nabla_x \log p\left(\bm{x}_t\right)$ is smooth.
Since $\bm{x}_t=\alpha_t \bm{x}_0+\sigma_t \bm{\epsilon}$,

(i) VESDE: $\alpha_{t} = 1,\, \sigma_t=\sigma_0\left(\frac{\sigma_1}{\sigma_0}\right)^t $,  where $\sigma_0 = \sigma_{\min}$, $\sigma_1 = \sigma_{\max}$, then $\frac{d \sigma_t}{d t}=\sigma_0\left(\frac{\sigma_1}{\sigma_0}\right)^t \cdot \ln \frac{\sigma_1}{\sigma_0}=\sigma_t \cdot \ln \frac{\sigma_1}{\sigma_0}$. 
Therefore, $\frac{d \sigma_t}{d t} \nabla_x \log p\left(\bm{x}_t\right)$ has the order of $O(\ln \frac{\sigma_1}{\sigma_0} \cdot \bm{z})$, where $\bm{z}$ is the random noise. As \cite{song2020score} mentioned that, $\sigma_0$ approaches 0 theoretically, but in practice, $\sigma_0=0.01$ is taken to avoid singular values.

(ii) VPSDE: $\alpha_{t=} e^{-\frac{1}{2} \int_0^t \beta_s d_s}, \, \sigma_t=\sqrt{1-e^{-\int_0^t \beta_s d t}}$, then $\frac{d \sigma_t}{d t} =-\beta_t\left(-e^{-\int_0^t \beta_s d s}\right)\left(1-e^{-\int_0^t \beta_s d s}\right)^{-\frac{1}{2}} 
=\frac{\beta_t e^{-\int_0^t \beta_s d s}}{\sqrt{1-e^{-\int_0^t \beta_s d s}}} \rightarrow+\infty \quad \text { if } t \rightarrow 0^{+}$.

It shows the Lipschitz singularity property.
\end{proof}

In summary, our theoretical analysis (Lemma \ref{lemma:var_psm}, Lemma \ref{thm:lipschitz} and Theorem \ref{thm:psm_dsm_distance}) demonstrates that PSM reduces variance and corrects estimation bias in the small-t regime, providing strong motivation for its use when $t$ is close to zero.

\section{More Experimental Details}
\label{appen:exp_results}

\paragraph{Dataset and settings.}
\label{appen:exp_settings}
The MD17 dataset and the MD22 dataset can be downloaded from \url{http://quantum-machine.org/gdml/data/npz}, the LJ13 and LJ55 data are from \url{https://osf.io/srqg7/files/osfstorage}. Hyperparameters for experiments are recorded in Table \ref{table:Hyperparameters} and \ref{table:Hyperparameters_lj}.

\begin{table}[h!]
\small
\centering
\renewcommand{\arraystretch}{1.2}
\setlength{\tabcolsep}{8pt}
\caption{Hyperparameters for different datasets with biased trainset}
\label{table:Hyperparameters}
\begin{tabular}{lcccc}
\toprule
\textbf{Dataset} & \textbf{Data Split} & \textbf{Batch Size} & \textbf{Max} & \textbf{Radius of} \\ 
 & \textbf{(train, val, test)} & \textbf{(train, val)} & \textbf{Epochs} & \textbf{Graph}\\
\midrule
LJ13 & First 1k, 0.1, remaining & (64, 64) & 2000 & 4 \\ 
LJ55 & First 1k, 0.1, remaining & (64, 64) & 2000 & 5 \\ 
MD17   & First 5k, 0.1, remaining & (64, 64) & 1000 & 5 \\ 
MD22 (Dw nanotube)  & First 5k, 0.1, remaining & (8, 8) & 3000 & 4 \\ 
\multirow{5}{*}{MD22 (others)}   
       & \multirow{5}{*}{First 5k, 0.1, remaining} & \multirow{5}{*}{(16, 16)} & \multirow{5}{*}{500}
       & 7 (AT-AT) \\ 
       & & & & 7 (AT-AT-CG-CG) \\ 
       & & & & 6 (Ac-Ala3-NHMe) \\ 
       & & & & 6 (Docosahexaenoic Acid) \\ 
       & & & & 6 (Stachyose, Buckyballcatcher) \\ 
\midrule
\multicolumn{5}{c}{\textbf{Additional Settings}} \\
\midrule
\multicolumn{5}{l}{Learning rate: $0.0002$ \quad Lr scheduler: Cosine \quad Seed: $42$} \\ 
\multicolumn{5}{l}{Optimizer: Adam \quad Weight decay: $5 \times 10^{-7}$} \\ 
\multicolumn{5}{l}{All diffusion parameters: $\sigma_{\text{min}} = 0.1$, $\sigma_{\text{max}} = 5$, time weight function $\lambda(t)=1$} \\ 
\multicolumn{5}{l}{Sampling timesteps: $1,000$ \quad Sampling method: Predict-corrector sampler} \\ 
\bottomrule
\end{tabular}
\end{table}

\begin{table}[h!]
\centering
\renewcommand{\arraystretch}{1.2}
\setlength{\tabcolsep}{8pt}
\caption{Hyperparameters compared with previous work in Table \ref{table:tvd_wasserstein}.}
\label{table:Hyperparameters_lj}
\begin{tabular}{lccccc}
\toprule
\textbf{Dataset} & \multicolumn{1}{c}{\textbf{Data Split}} & \multicolumn{1}{c}{\textbf{Batch Size}} & \textbf{Max Epochs} & \textbf{Radius of Graph} & \textbf{Noise Schedule} \\ 
 & \textbf{(train, val, test)} & \textbf{(train, val)} & & & \\ 
\midrule
LJ13 & 0.1, 0.1, remaining & (64, 64) & 5000 & 4 &$\sigma_{\min} = 0.01$, $\sigma_{\max}=8$ \\ 
LJ55 & 0.1, 0.1, remaining & (64, 64) & 5000 & 5 &$\sigma_{\min} = 0.01$, $\sigma_{\max}=4$ \\ 
\midrule
\multicolumn{6}{c}{Additional Settings are the same as before in Table \ref{table:Hyperparameters}.} \\
\bottomrule
\end{tabular}
\end{table}

\paragraph{Toy models in 1D and 2D.}
We conduct experiments using PSM alone in 1D and 2D toy distributions, as shown in Figure \ref{fig:compare_1d}, where \( p(x) = e^{-5\|x\|^2 + \|x\|^4}\) and the according force label is $(-4 \|x\|_2^2+ 10)x$. It indicates that PSM and Piecewise perform comparably to DSM in 1D and 2D case, and it can be seen that Piecwise and PSM can better learn the sparse area in the middle.
\begin{figure}[htbp]
    \centering
    \makebox[\textwidth][c]{%
        \begin{minipage}[b]{0.28\textwidth}
            \includegraphics[width=\textwidth]{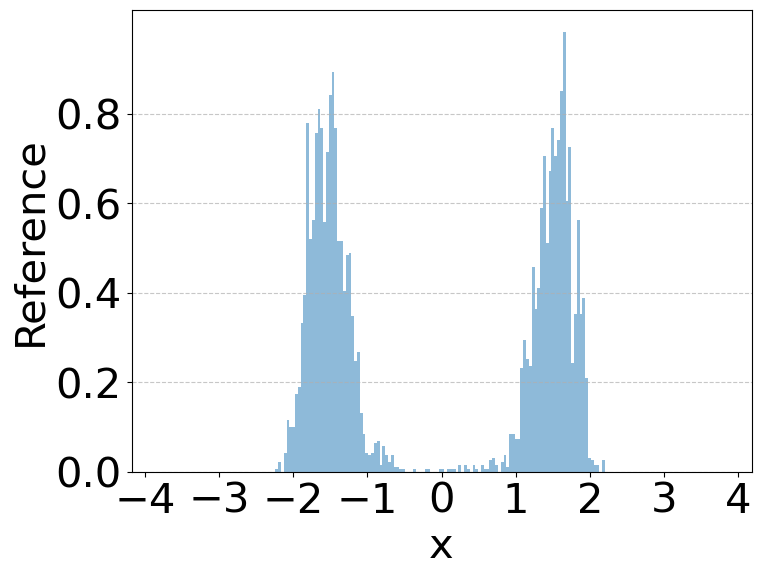}
        \end{minipage}
        \hspace{0.04\textwidth} 
        \begin{minipage}[b]{0.28\textwidth}
\includegraphics[width=\textwidth]{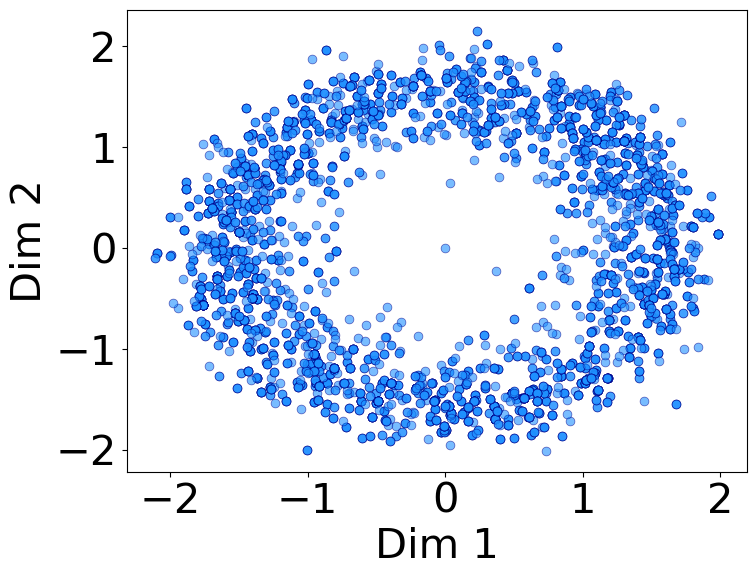}
        \end{minipage}
    }
    \vspace{0.5em} 
    \begin{minipage}[b]{0.28\textwidth}
        \centering
        \includegraphics[width=\textwidth]{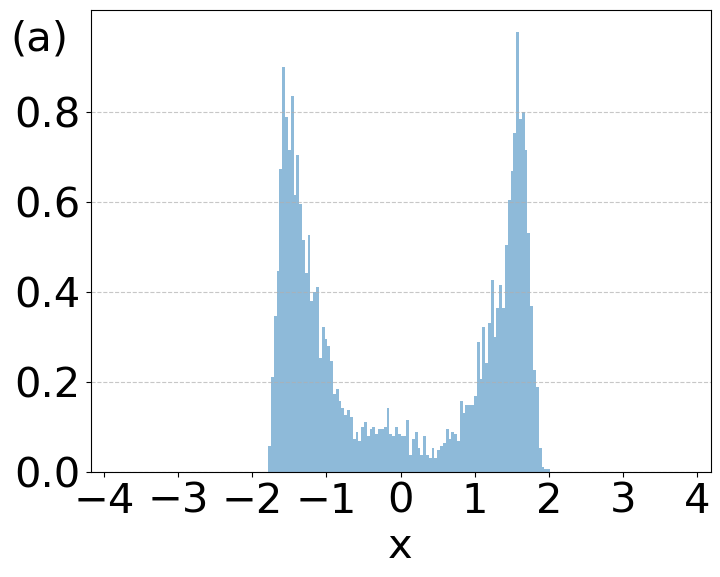}
    \end{minipage}
    \begin{minipage}[b]{0.28\textwidth}
        \centering
\includegraphics[width=\textwidth]{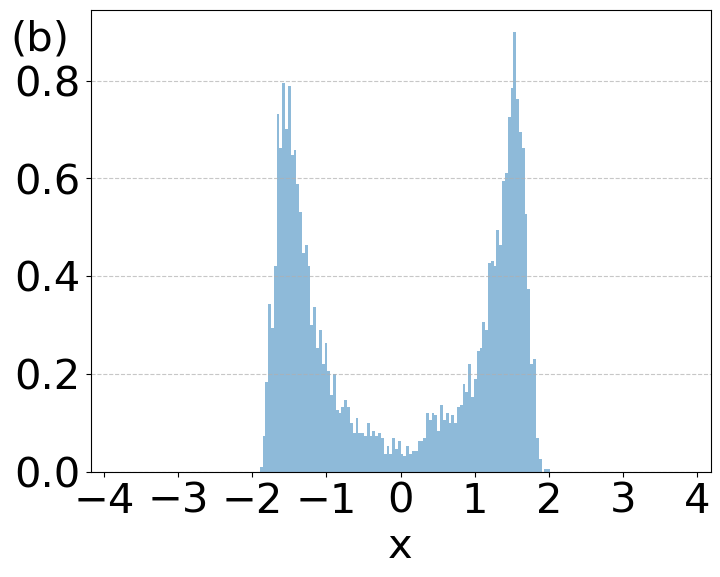}
    \end{minipage}
    \begin{minipage}[b]{0.28\textwidth}
        \centering       \includegraphics[width=\textwidth]{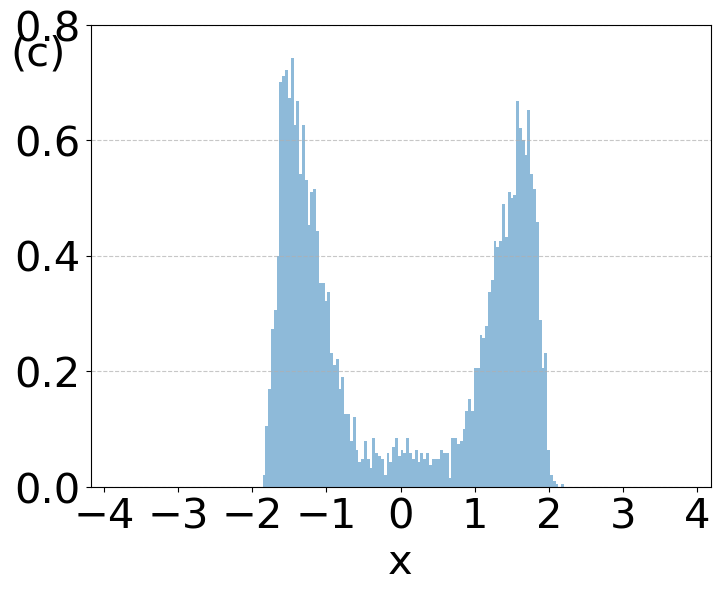}
    \end{minipage}
    \vspace{0.5em}
    \begin{minipage}[b]{0.283\textwidth}
        \centering
\includegraphics[width=\textwidth]{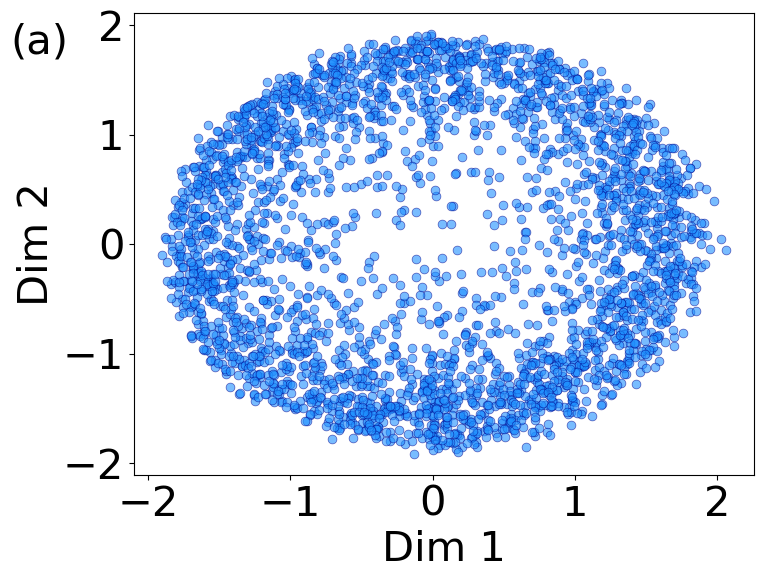}
    \end{minipage}
    \begin{minipage}[b]{0.283\textwidth}
        \centering
\includegraphics[width=\textwidth]{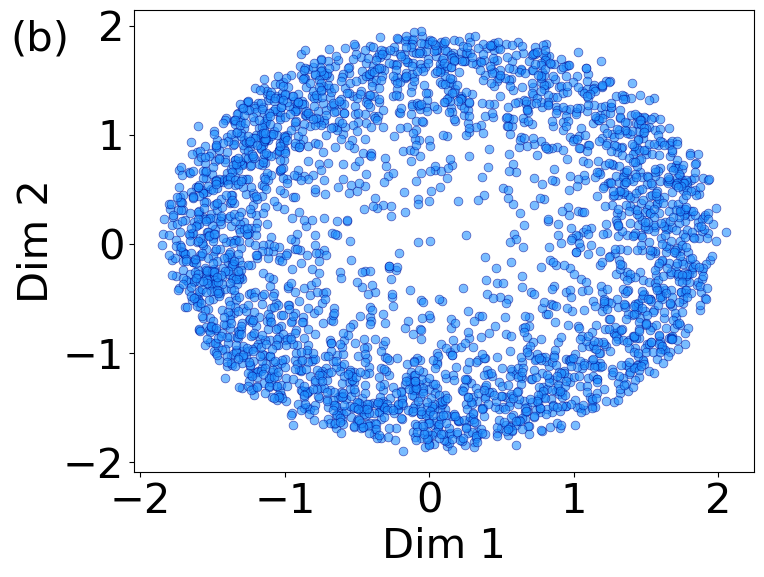}
    \end{minipage}
    \begin{minipage}[b]{0.283\textwidth}
        \centering
\includegraphics[width=\textwidth]{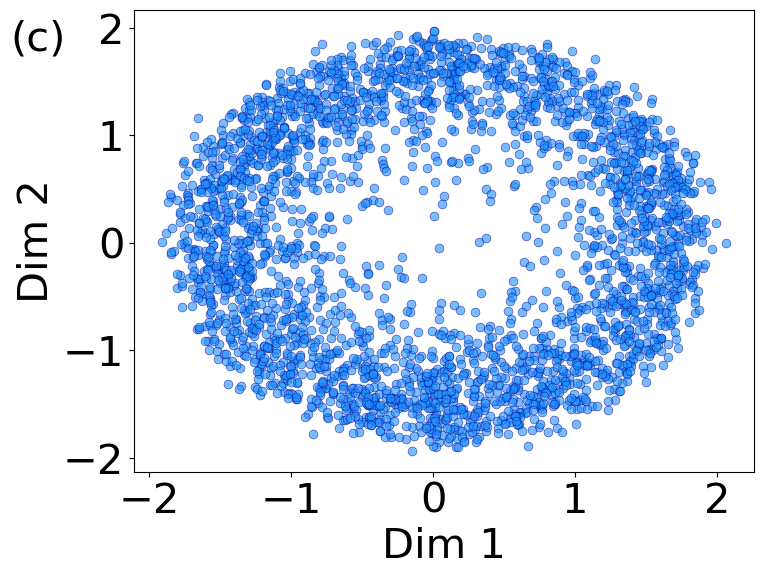}
    \end{minipage}
    \caption{Comparisons of (a) DSM, (b) Piecewise, (c) PSM in one-dimensional (the second row) and in two-dimensional (the third row) examples. The first row is the ground truth for 1d and 2d cases where the exact density function is $p(x) = e^{-5\|x\|^2 + \|x\|^4}$.}
    \label{fig:compare_1d}
\end{figure}
Previous work also addresses the challenges for learning high-dimensional molecular problems using only an energy label or log-density gradient \cite{woo2024iterated,debortoli2024targetscorematching}. The constraint of force labels $F$ become weak due to the projection of time-zero information onto highly noisy $x_t$ for large $t$ is smaller, which is unstable for dynamical learning. Refs \cite{yang2023lipschitz} and \cite{debortoli2024targetscorematching} have shown that DSM suffers from high variance and singularities near $t=0$, and encouragingly, our analysis further demonstrates that PSM achieves lower variance and corrects bias in the small-$t$ regime (Lemma \ref{lemma:var_psm} and Theorem \ref{thm:psm_dsm_distance}), motivating its use specifically for small $t$. Therefore, for high dimensional molecular system, we use PSM in small $t$ range, and use DSM for large $t$.

\paragraph{Lennard-Jones potential.} Here we additionally supplement the LJ potential with biased data training to obtain the energy comparisons in Figure \ref{fig:lj_energy_results}.

\begin{figure}[h]
    \centering
     \begin{subfigure}[b]{0.75\textwidth}
        \centering
\includegraphics[width=\textwidth]{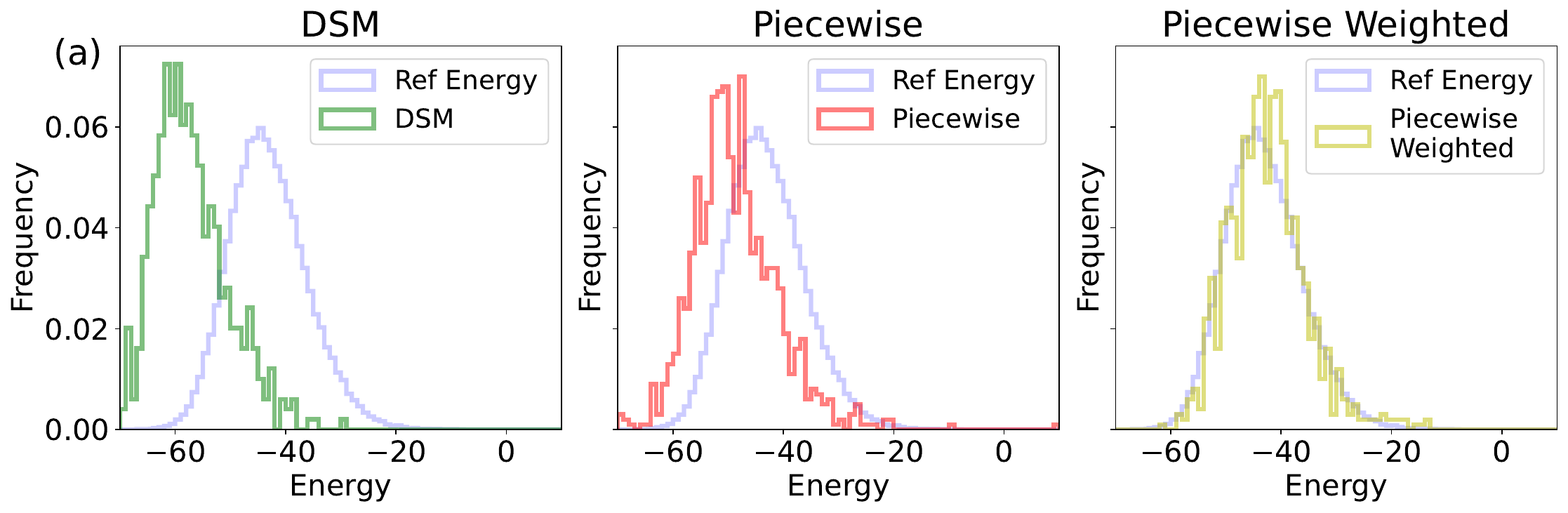}
    \end{subfigure}
    \begin{subfigure}[b]{0.75\textwidth}
        \centering
\includegraphics[width=\textwidth]{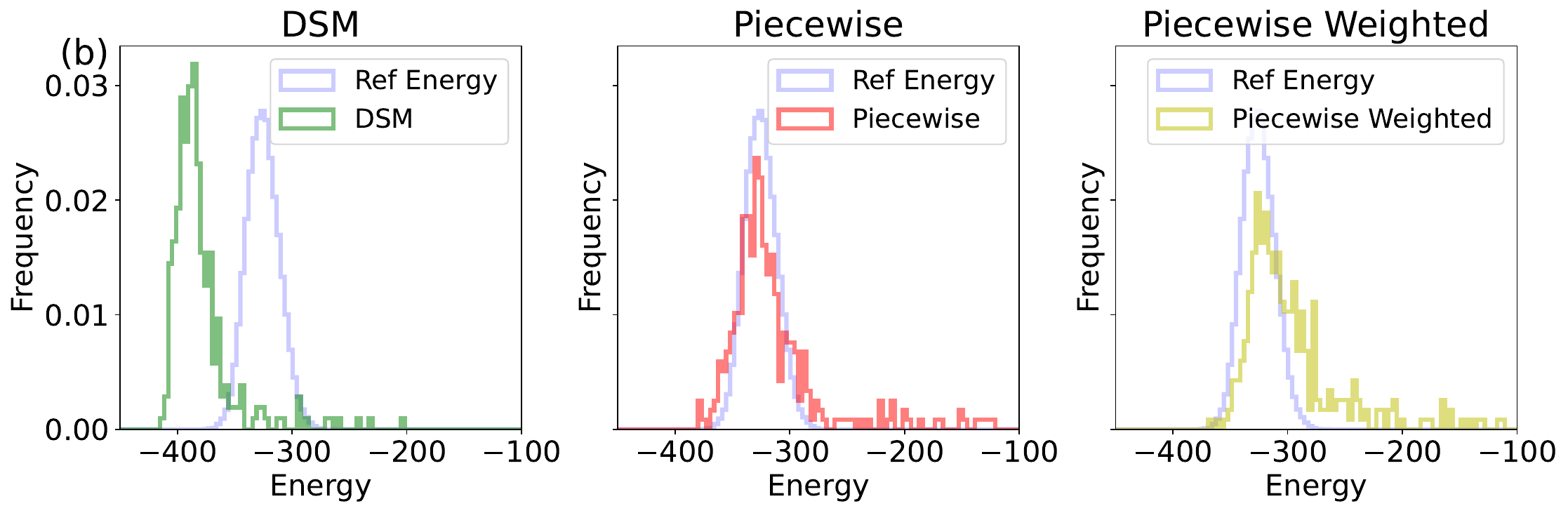}
    \end{subfigure}
    \caption{Comparisons of energy for (a) LJ-13 and (b) LJ-55 potential training with first $1,000$ data.}
    \label{fig:lj_energy_results}
\end{figure}

\section{Relationship with Flow Matching}
\label{appen:discuss}
Flow matching is a powerful tool for generating molecular conformations and predicting molecular properties \cite{lipman2022flow,chen2023riemannian,miller2024flowmm}. It enables faster training and sampling while achieving superior generalization performance. Motivated by recent advancements \cite{woo2024iterated, domingo2024adjoint}, unified frameworks have been developed to describe both diffusion models and flow matching methods comprehensively.

Let $\bm{y}_0$ and $\bm{y}_1$ be samples from the noise distribution and the data distribution, respectively, in the flow matching framework. As demonstrated in \cite{domingo2024adjoint}, the relationship between the velocity field and the score function is given by:
\[
v(\bm{x}, t) = \frac{\dot{\alpha}_t}{\alpha_t} \bm{x} + \gamma_t \left( \frac{\dot{\alpha}_t}{\alpha_t} \gamma_t - \dot{\gamma}_t \right) s(\bm{x}, t),
\]
where $\bm{x}_t = \alpha_t \bm{y}_1 + \gamma_t \bm{y}_0 = \alpha_t \bm{x}_0 + \gamma_t \varepsilon$. Based on this relationship, our method can be extended to flow matching. In our setting, the coefficients are defined as follows:
\begin{equation}
    \begin{aligned}
        \text{VE}:\,\, \alpha_t=1, \quad \gamma_t=\sigma_t=\sigma_0\left(\frac{\sigma_1}{\sigma_0}\right)^t\,, \quad
        \text{VP}: \,\, \alpha_t=e^{-\frac{1}{2} \int_0^t \beta_s d s}, \quad \gamma_t=\left(1-e^{-\int_0^t \beta_s d s}\right)^{\frac{1}{2}}\,.
    \end{aligned}
\end{equation}
Therefore, the vector field $v(\bm{x}, t) =\gamma_t \dot{\gamma}_t s(\bm{x}, t)=\sigma_0^2\left(\frac{\sigma_1}{\sigma_0}\right)^{2 t} \ln \left(\frac{\sigma_1}{\sigma_0}\right) s(\bm{x}, t)$ for VESDE, and $v(\bm{x}, t) =-\frac{1}{2} \beta_t \bm{x}+\gamma_t\left(-\frac{1}{2} \beta_t \gamma_t-\dot{\gamma}_t\right) s(\bm{x}, t)$ for VPSDE.
Building on the interplay between energy and the score function, we propose to investigate an energy-informed flow matching method. This energy-based extension will form the basis of our future research endeavors.

\end{document}